\documentclass{article}

\PassOptionsToPackage{numbers, compress}{natbib}

\usepackage[preprint]{neurips_2026}

\usepackage{microtype}
\usepackage{graphicx}
\usepackage{subcaption}
\usepackage{booktabs} 
\usepackage{multirow}

\usepackage{amsmath}
\usepackage{amssymb}
\usepackage{mathtools}
\usepackage{amsthm}
\usepackage{seqsplit}

\newcommand{\PAIRFormer}{\textsc{PAIR-Former}}
\newcommand{\BRMIL}{\textsc{BR-MIL}}

\newcommand{\CTS}{\textsc{CTS}}

\newcommand{\threeprimeUTR}{\ensuremath{3^{\prime}\mathrm{UTR}}}

\newcommand{\CTSs}{\CTS{}s}

\usepackage{algorithm}
\usepackage{algorithmic}  

\usepackage{xcolor}

\newcommand{\code}[1]{\texttt{#1}}          

\theoremstyle{plain}
\newtheorem{theorem}{Theorem}[section]
\newtheorem{proposition}[theorem]{Proposition}
\newtheorem{lemma}[theorem]{Lemma}
\newtheorem{corollary}[theorem]{Corollary}
\usepackage{makecell}
\theoremstyle{definition}
\newtheorem{definition}[theorem]{Definition}
\newtheorem{assumption}[theorem]{Assumption}
\theoremstyle{remark}
\newtheorem{remark}[theorem]{Remark}

\usepackage[disable,textsize=tiny]{todonotes}

\usepackage{hyperref}
\usepackage[capitalize,noabbrev]{cleveref}

\title{PAIR-Former: Budgeted Relational Multi-Instance Learning for Functional miRNA Target Prediction}

\author{%
  Jiaqi Yin \\
  School of Future Technology \\
  Harbin Institute of Technology \\
  Harbin, China \\
  \texttt{yjqhit@gmail.com} \\
  \And
  Baiming Chen \\
  School of Medicine \\
  Chinese University of Hong Kong, Shenzhen \\
  Shenzhen, China \\
  \AND
  Jia Fei \\
  Department of Biochemistry and Molecular Biology \\
  Medical College, Jinan University \\
  Guangzhou, China \\
  \And
  Mingjun Yang\thanks{Corresponding author.} \\
  Shenzhen Jingtai Technology Co., Ltd. (XtalPi) \\
  Shenzhen, China \\
  \texttt{mingjun.yang@xtalpi.com}
}

\begin{document}

\maketitle

\begin{abstract}
Functional miRNA--mRNA targeting is a large-bag prediction problem where each transcript yields a heavy-tailed pool of candidate target sites (CTSs), yet only a pair-level label is observed.
Prior methods use max-pooling over individual CTS scores, ignoring relational patterns among sites, but modeling these patterns is critical for accuracy.
The challenge is that naive relational aggregation incurs $\mathcal{O}(n^2)$ cost, prohibitive when $n$ reaches thousands, yet a cheap scan alone discards the very interactions that drive functional repression.
We formalize this tension as \emph{Budgeted Relational Multi-Instance Learning (BR-MIL)}, a new MIL problem where the compute budget $K$ is a first-class constraint such that at most $K$ instances per bag may receive expensive encoding and relational processing.
We establish theoretical foundations for BR-MIL, proving that both approximation quality and generalization are governed by $K$ rather than the raw bag size $n$.
Building on this theory, we propose \textbf{PAIR-Former}, which scans all candidates cheaply, selects $K$ diverse CTSs, and aggregates them via Set Transformer.
PAIR-Former achieves state-of-the-art performance, outperforming all reproduced baselines with F1$=0.840$ on miRAW (10-fold balanced CV) and $0.839$ on deepTargetPro in transfer evaluation, while achieving $0.793$ on the large-scale MTI benchmark (420K pairs, $38\times$ larger), demonstrating that budgeted relational MIL scales where naive approaches fail.
Additional results on CAMELYON16 and Musk2 further show that the proposed BR-MIL formulation extends beyond biological sequence modeling.
\end{abstract}

\paragraph{Code availability.} An anonymized implementation is included in the supplementary material; we will release the full codebase upon acceptance.

\begin{figure*}[!t]
  \centering
  \includegraphics[width=0.9\textwidth]{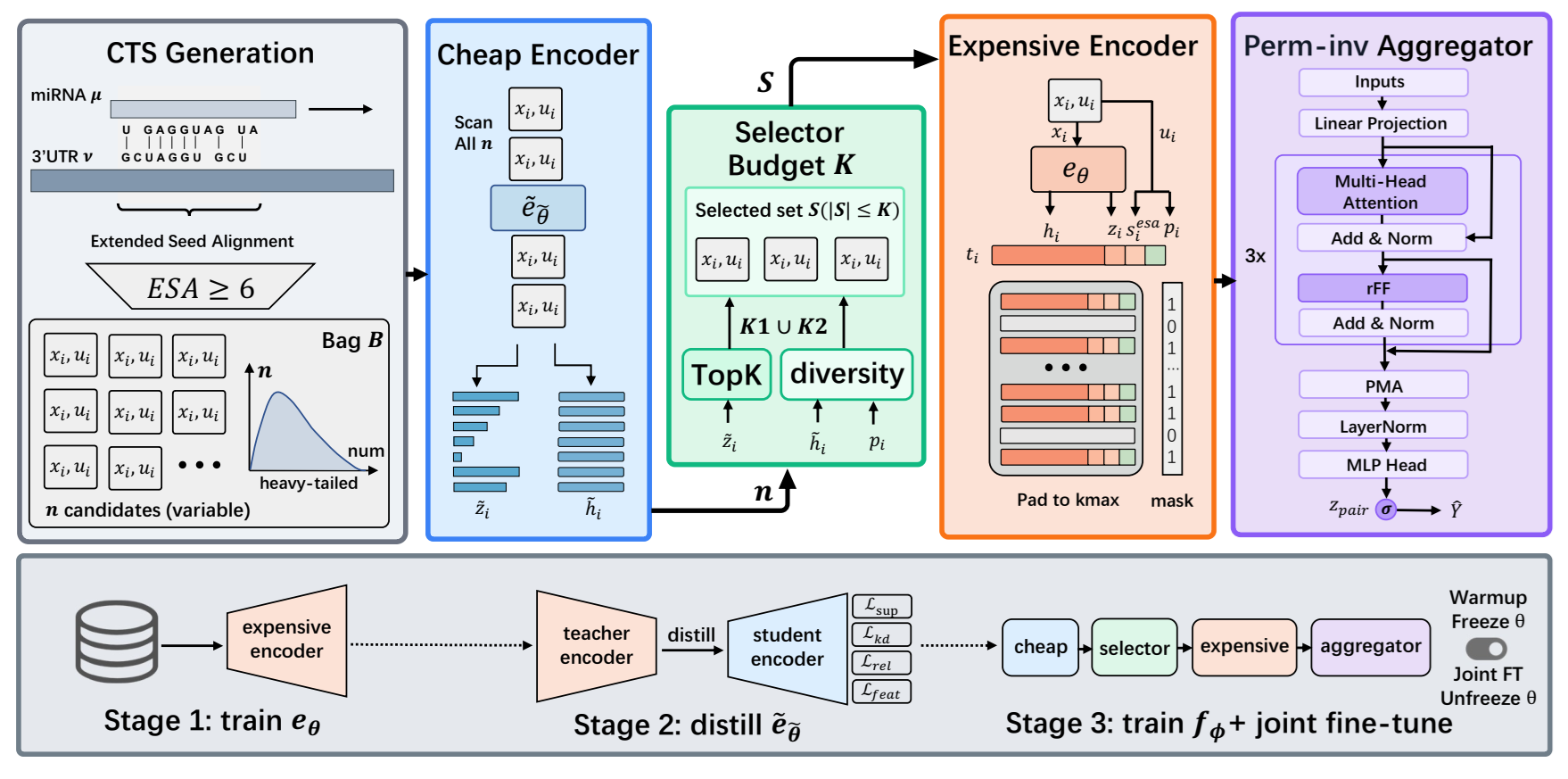}
    \caption{
    \textbf{Overview of \PAIRFormer{} under \BRMIL{}.}
    \textbf{Top:} Inference follows a scan--select--aggregate pipeline: all \CTSs{} are scanned cheaply, a budgeted subset $S$ with $|S|\le K$ is selected, and only selected candidates receive expensive encoding and Set Transformer aggregation for pair-level prediction.
    \textbf{Bottom:} Training consists of \CTS{} encoder pretraining, cheap encoder distillation, and pair-level relational aggregation under the same budgeted forward pass.
    }
  \label{fig:overview}
\end{figure*}


\section{Introduction}

MicroRNAs (miRNAs) are key post-transcriptional regulators that modulate gene expression by binding to messenger RNAs (mRNAs), with broad roles in development, immunity, and disease.
Predicting functional miRNA--mRNA interactions is important for understanding gene regulatory networks and identifying therapeutic targets, but supervision is typically available only at the miRNA--transcript pair level.
Meanwhile, each pair yields a heavy-tailed pool of candidate target sites (\CTSs) within the transcript's \threeprimeUTR{}, creating a large-bag prediction problem in which the observed label aggregates latent contributions from multiple sites without instance-level supervision.

Most modern miRNA target prediction pipelines follow a site-first decomposition: they generate candidate target sites (\CTSs), score each miRNA--\CTS{} window independently, and aggregate site scores into a transcript-level prediction, often by max pooling \cite{targetnet,miraw,mitds,tecmitarget}.
Although window-level encoders have improved substantially, max pooling imposes a \emph{strongest-site assumption}: the pair label is driven by the single most confident site, while other \CTSs{} are treated as conditionally independent given the miRNA.
This bias is restrictive because transcript-level repression may reflect cross-site evidence, including redundancy, cooperative effects, or competitive/decoy-like binding \cite{bartel2009,saetrom2007,ceRNAreview}.
Replacing max pooling with relational aggregation, such as self-attention over all candidate sites, is natural but computationally prohibitive for heavy-tailed bags, incurring $\mathcal{O}(n^2)$ time and memory when a transcript yields $n$ candidates.
As shown in Fig.~\ref{fig:n_distribution}, this regime is typical across benchmarks: median $n=640$--$993$, and more than 92\% of pairs exceed the default budget $K=64$.

We address this challenge by introducing \emph{Budgeted Relational Multi-Instance Learning (BR-MIL)}, a compute-aware MIL formulation where all candidates may be scanned cheaply, but at most $K$ instances per bag receive expensive encoding and relational aggregation.
This reduces expensive encoding from $\mathcal{O}(n)$ to $\mathcal{O}(K)$ and relational aggregation from $\mathcal{O}(n^2)$ to $\mathcal{O}(K^2)$ while preserving full-pool candidate discovery.
We instantiate \BRMIL{} with \textbf{PAIR-Former} (Pool-Aware Instance-Relational Transformer), which cheaply scans all \CTSs{}, selects a diverse budgeted subset, and applies an expensive encoder followed by a permutation-invariant Set Transformer.
We further provide theory showing how $K$ controls the approximation--generalization tradeoff.

We evaluate \PAIRFormer{} on three miRNA targeting benchmarks and two cross-domain MIL tasks.
Under 10-fold balanced cross-validation, it achieves F1$=0.840$ on miRAW and $0.839$ on deepTargetPro, and $0.793$ on the 420K-pair MTI benchmark, where it operates over nearly half a billion candidate \CTS{} windows.
It also improves deepTargetPro transfer performance from the prior best F1 $0.791$ to $0.839$.
Results on CAMELYON16 and Musk2, together with controlled budget and runtime analyses, further support the generality of \BRMIL{} and the predicted accuracy--compute tradeoff.

\begin{figure*}[t]
  \centering
  \includegraphics[width=0.9\textwidth]{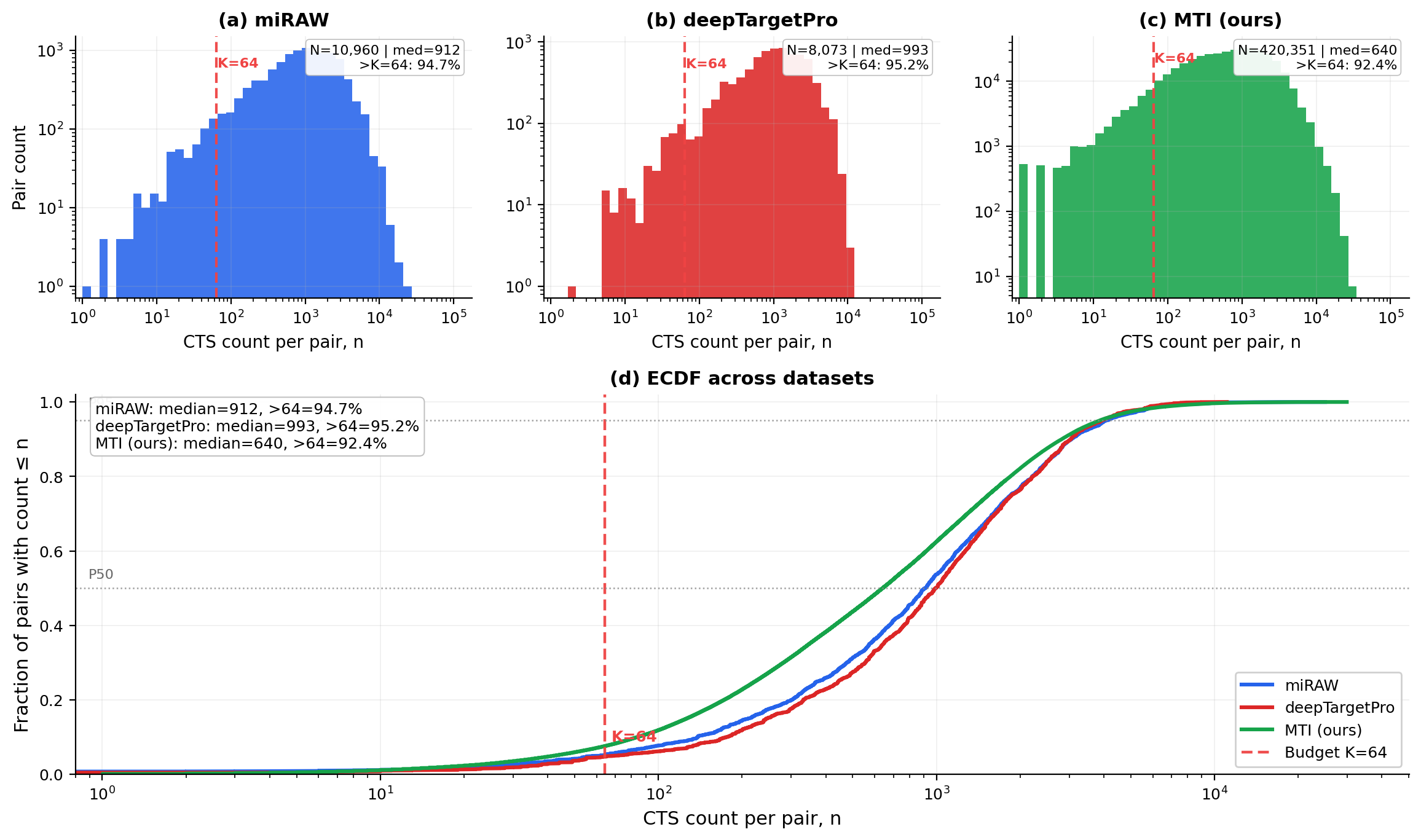}
    \caption{
    \textbf{Candidate pool size distributions.}
    Top: log-scale histograms of per-pair \CTS{} counts after ESA filtering.
    Bottom: empirical CDFs.
    Across miRAW, deepTargetPro, and MTI, median $n$ ranges from 640 to 993, far above $K=64$ (red dashed line), and 92--95\% of pairs exceed this budget.
    Full statistics are in Appendix~\ref{app:n_distribution_stats}.
    }
  \label{fig:n_distribution}
\end{figure*}

We summarize our contributions as follows:

\noindent -- We formulate \textbf{Budgeted Relational MIL}, where a hard budget $K$ limits expensive encoding and relational aggregation in large, heavy-tailed bags, and provide theory linking $K$ to approximation and generalization.

\noindent -- We develop \textbf{\PAIRFormer{}}, a scan--select--aggregate framework that cheaply scans all \CTSs{}, selects a diverse subset, and applies Set Transformer aggregation.

\noindent -- We validate \PAIRFormer{} on miRAW, deepTargetPro, and the 420K-pair MTI benchmark, with cross-domain results and controlled accuracy--compute analyses.


\section{Related Work}
\label{sec:related_work}

\paragraph{Functional miRNA target prediction.}
Classical miRNA target predictors use hand-crafted signals such as seed complementarity, conservation, accessibility, and thermodynamic stability, as in TargetScan, miRanda, PITA, mirSVR, and miRDB \cite{targetscan,miranda,pita,mirsvr,mirdb}.
Modern deep models reduce feature engineering by learning sequence- or alignment-driven representations \cite{deeptarget,miraw,mimosa,deepmirTar,mirformer}, graph-structured embeddings \cite{graphtar}, or image-based representations \cite{micgr}.
A common site-first pipeline enumerates candidate target sites (\CTSs), scores each miRNA--\CTS{} window independently, and aggregates site scores to the transcript level, often by max pooling \cite{targetnet,miraw,mitds,tecmitarget,miTAR}.
Despite stronger window-level encoders, transcript-level inference still largely follows a strongest-site assumption and treats sites as independent given the miRNA \cite{targetnet,saetrom2007}.

\paragraph{MIL, set aggregation, and scalability.}
Functional targeting is naturally related to multi-instance learning (MIL), where only bag-level labels are observed and instance labels are latent \cite{dietterich1997mil,carbonneau2018mil}.
Permutation-invariant set models provide a principled interface for MIL: DeepSets uses simple pooling \cite{deepsets}, attention-based MIL learns instance weights \cite{abmil}, and Set Transformer models higher-order interactions through self-attention \cite{settransformer}.
Large-bag MIL has also been studied in pathology, where whole-slide images contain thousands of patches and scalable methods use selection, clustering, pseudo-bags, zooming, or transformers \cite{campanella2019clinical,lu2021clam,thandiackal2022zoommil,zhang2022dtfd,transmil}.
Unlike pathology, miRNA targeting yields heavy-tailed, sequence-derived \CTS{} pools without a natural 2D spatial hierarchy; \BRMIL{} therefore makes the budget $K$ part of the problem formulation and instantiates it through cheap scanning, budgeted selection, and permutation-invariant relational aggregation.

\section{Problem Formulation}
\label{sec:problem-formulation}

\paragraph{Large-bag functional targeting.}
Each miRNA--mRNA pair defines a bag of candidate target sites (\CTSs)
$B=\{(x_i,u_i)\}_{i=1}^{n}$, where $x_i$ denotes the \CTS{} content
(e.g., an alignment-aware window tensor) and $u_i$ contains structural attributes
(e.g., normalized transcript position and ESA score).
The bag size $n$ is large and heavy-tailed across transcripts (Fig.~\ref{fig:n_distribution}), while supervision is available only at the pair level through a binary label $Y\in\{0,1\}$.
The goal is to learn a permutation-invariant predictor $\hat Y(B)$ for transcript-level functional repression.

\paragraph{Budgeted Relational MIL.}
We define \BRMIL{} as multi-instance learning with large, heavy-tailed bags under a hard per-bag compute budget:
expensive instance encoding and relational aggregation may be applied to at most $K$ instances from each bag.
This budgeted setting separates cheap full-pool scanning from expensive relational reasoning.

\paragraph{Budgeted predictor and cost.}
A cheap encoder $\tilde e_{\tilde\theta}$ scans all $n$ candidates to produce lightweight signals
(e.g., cheap embeddings $\tilde h_i$ and logits $\tilde z_i$).
A selector $\pi_\psi$ then returns a subset $S(B)\subseteq[n]$ with $|S(B)|\le K$.
Only selected instances are processed by the expensive encoder and aggregated by a permutation-invariant set function:
\begin{equation}
\begin{split}
\hat Y(B)
&= f_\phi\Big(\big\{\tau(e_\theta(x_i,u_i),u_i)\big\}_{i\in S(B)}\Big),
\\
S(B)
&=\pi_\psi\Big(\big\{\tilde e_{\tilde\theta}(x_i,u_i)\big\}_{i=1}^{n}\Big),
\end{split}
\label{eq:brmil_core}
\end{equation}
where $e_\theta$ is the expensive encoder, $\tau$ is a tokenization function, and
$f_\phi$ is a permutation-invariant aggregator such as a Set Transformer.
Permutation invariance requires $\hat Y(B)=\hat Y(\sigma(B))$ for any permutation $\sigma$ of the instances,
which holds when selection depends only on the multiset of cheap signals, up to deterministic tie-breaking, and $f_\phi$ is permutation invariant.

BR-MIL addresses two bottlenecks: expensive per-instance encoding, which would require $n$ expensive forward passes, and relational aggregation, which costs $\mathcal{O}(n^2)$ for full self-attention.
Restricting expensive processing to $K$ selected instances reduces these costs to $\mathcal{O}(K)$ and $\mathcal{O}(K^2)$, while retaining a cheap $\mathcal{O}(n)$ full-pool scan.

Full notation, objective details, and additional invariance conditions are deferred to Appendix~\ref{app:problem_setup_long}.

\section{Method}
\label{sec:method}

\subsection{Overview and candidate representation}
Given a miRNA--mRNA pair with a bag $B=\{(x_i,u_i)\}_{i=1}^{n}$ of candidate target sites (\CTSs), \PAIRFormer{} instantiates \BRMIL{} as a scan--select--aggregate pipeline.
A cheap encoder first scans all $n$ candidates, a budgeted selector chooses a subset $S$ with $|S|\le K$, and an expensive encoder followed by a Set Transformer performs relational prediction only on the selected candidates.
Figure~\ref{fig:overview} illustrates the complete pipeline.

We extract \CTSs{} from the \threeprimeUTR{} using a 40-nt sliding window with stride 1 and retain candidates passing ESA filtering ($s_i^{\mathrm{esa}}\ge 6$).
For each retained candidate, we construct an alignment-aware input tensor $X_i\in\mathbb{R}^{10\times 50}$ following the TargetNet protocol \cite{targetnet,mitds}.
Each candidate is augmented with structural attributes $u_i=(p_i,s_i^{\mathrm{esa}})$, where $p_i$ is the normalized transcript position and $s_i^{\mathrm{esa}}$ is the ESA score.

\subsection{Budgeted scan--select--encode}
\label{sec:stselector}
\PAIRFormer{} uses two \CTS{} encoders with different computational costs.
The expensive encoder $e_{\theta}$ is a TargetNet-style encoder with channel attention and outputs
\begin{equation}
(h_i,z_i)=e_{\theta}(x_i,u_i), \qquad h_i\in\mathbb{R}^{384},~z_i\in\mathbb{R}.
\end{equation}
The cheap encoder $\tilde e_{\tilde\theta}$ outputs
\begin{equation}
(\tilde h_i,\tilde z_i)=\tilde e_{\tilde\theta}(x_i,u_i), \qquad \tilde h_i\in\mathbb{R}^{64},~\tilde z_i\in\mathbb{R},
\end{equation}
and is trained by distillation from the expensive encoder.
At inference and pair-level training time, $\tilde e_{\tilde\theta}$ scans all $n$ candidates, while $e_\theta$ is evaluated only on the selected subset.

For each bag, we set $K=\min(\texttt{kmax},n)$ and split the budget into $K_1=\lfloor \rho K\rfloor$ and $K_2=K-K_1$.
STSelector first selects the top-$K_1$ candidates by cheap logit $\tilde z_i$, then allocates the remaining $K_2$ slots across transcript position bins while removing near-duplicates via SimHash on cheap embeddings $\tilde h_i$.
This encourages coverage across the transcript while prioritizing high-confidence candidates.
The selector runs in near-linear time on CPU; full pseudocode and hyperparameters are provided in Appendix~\ref{app:selector}.

\subsection{Relational aggregation}
For each selected \CTS{} $i\in S$, we construct a token by concatenating the expensive embedding, expensive logit, and structural attributes:
\begin{equation}
t_i = [h_i\,\Vert\, z_i\,\Vert\, s_i^{\mathrm{esa}}\,\Vert\, p_i]\in\mathbb{R}^{387}.
\label{eq:token}
\end{equation}
The selected token set is padded to length \texttt{kmax} with a binary mask.
A Set Transformer with self-attention blocks aggregates the masked token set into a pair-level logit $z_{\mathrm{pair}}$, yielding the prediction $\hat Y=\sigma(z_{\mathrm{pair}})$.
Because aggregation is performed over a set and padding is masked, the prediction is invariant to the ordering of selected instances.

\subsection{Training and inference}
\label{sec:training}
We train \PAIRFormer{} in three stages.
First, the expensive encoder $e_\theta$ is trained on \CTS{}-level data with binary supervision.
Second, the cheap encoder $\tilde e_{\tilde\theta}$ is distilled from $e_\theta$ using a combination of supervised loss, logit distillation, and feature matching:
\begin{equation}
L_{\mathrm{distill}}
= (1-\alpha)\,L_{\mathrm{sup}} + \alpha\,L_{\mathrm{KD}} + \beta_{\mathrm{feat}}\,L_{\mathrm{feat}}.
\label{eq:distill}
\end{equation}
Third, the Set Transformer aggregator is trained on pair-level data using the full budgeted forward pass; we first freeze the instance encoders and train the aggregator, then jointly fine-tune the expensive encoder and aggregator.
Binary cross-entropy with logits is used for both \CTS{}-level and pair-level classification.
Full loss definitions and training hyperparameters are provided in Appendix~\ref{app:loss}.

At test time, \PAIRFormer{} follows the same budgeted forward pass: generate \CTSs{}, scan all candidates with the cheap encoder, select $S$ with $|S|=K$, encode only selected candidates with $e_\theta$, tokenize them via Eq.~\eqref{eq:token}, and aggregate the masked token set with the Set Transformer.
Pseudocode for training and inference is provided in Appendix~\ref{app:algorithms}.

\section{Theoretical Analysis}
\label{sec:theory}

We provide two results that justify the \BRMIL{} design.
First, budgeted relational prediction approximates full-pool relational prediction through the coverage of influential instances.
Second, for a fixed selector, the dominant generalization complexity of the expensive relational component depends on the budget $K$ rather than the raw bag size $n$.

\begin{theorem}[Approximation under budgeted selection]
\label{thm:approx}
Let $\hat Y_{\mathrm{full}}(B)$ denote the prediction obtained by applying the relational aggregator to all expensive tokens in a bag, and let $\hat Y_K(B)$ denote the budgeted prediction obtained by masking all tokens outside a selected subset $S(B)$ with $|S(B)|\le K$.
Under Assumptions~\ref{assm:A0}--\ref{assm:A2} in Appendix~\ref{app:theory}, for any fixed bag $B$,
\begin{equation}
\mathbb{E}\!\left[
\left|\hat Y_{\mathrm{full}}(B)-\hat Y_K(B)\right|
\,\middle|\,B
\right]
\le
2RL_\star\left(\psi_K(B)+\Delta_K^{\mathrm{w}}(B)\right),
\label{eq:main_approx_bound}
\end{equation}
where $L_\star$ is the masking-sensitivity envelope, $\psi_K(B)$ is the oracle top-$K$ influence tail, and $\Delta_K^{\mathrm{w}}(B)$ is the selector's influence regret.
\end{theorem}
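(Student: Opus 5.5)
Since Assumptions~\ref{assm:A0}--\ref{assm:A2} are stated only in Appendix~\ref{app:theory}, I will fix their roles as follows and organize the argument around them. (A0) gives boundedness: every token and masked aggregator output has norm at most $R$, and the output map is $1$-Lipschitz, e.g.\ the sigmoid or a clipped probability. (A1) gives masking sensitivity: for any mask $m\in[0,1]^n$, changing the mask from $m$ to $m'$ changes the aggregator output by at most $R L_\star\sum_i w_i(B)\,|m_i-m'_i|$, where $w_i(B)\ge 0$ are instance influence weights that sum to at most one. (A2) concerns the selector: $S(B)$ may be random (through tie-breaking or hashing), and its expected captured influence is compared with the oracle top-$K$ set $S^\star_K(B)$, the $K$ indices of largest $w_i(B)$. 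Under these conventions I define $\psi_K(B)=\sum_{i\notin S^\star_K}w_i(B)$ and $\Delta_K^{\mathrm w}(B)=\sum_{i\in S^\star_K}w_i(B)-\mathbb E\bigl[\sum_{i\in S(B)}w_i(B)\mid B\bigr]$.

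\textbf{Step 1 (reduce to a mask difference).} The full prediction uses the all-ones mask $\mathbf 1$, and $\hat Y_K$ uses $m^S=\mathbf 1_{S(B)}$. Applying (A1) pointwise for each realization of $S$ gives
\[
|\hat Y_{\mathrm{full}}-\hat Y_K|\le RL_\star\sum_{i\notin S}w_i(B).
\]
If (A1) is only a local Lipschitz bound along segments of masks, I would instead interpolate $m_t=(1-t)m^S+t\mathbf 1$ and integrate the gradient bound. This matters because softmax renormalization makes the dependence on the mask nonlinear. I expect it to be the source of the factor $2$: when a key is removed, the attention weights on the remaining keys are perturbed by up to twice the removed mass, since a change in the numerator and the matching change in the normalizer each contribute $\le R\cdot$mass. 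So if (A1) is stated without the factor, the bound becomes $2RL_\star\sum_{i\notin S}w_i$.

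\textbf{Step 2 (split the uncovered influence).} Write
\[
\sum_{i\notin S}w_i=\sum_{i\notin S^\star_K}w_i+\Bigl(\sum_{i\in S^\star_K}w_i-\sum_{i\in S}w_i\Bigr).
\]
This identity holds because $\sum_i w_i$ is the same on both sides. The first term is $\psi_K(B)$, and the second term is the realized regret. Taking conditional expectation given $B$, which fixes $w$, the expected regret is exactly $\Delta^{\mathrm w}_K(B)$. Combining with Step 1 yields \eqref{eq:main_approx_bound}. If the regret is instead defined with $|S|\le K$ and could be negative, the bound still holds because the identity is exact. I would also check that $|S|<K$ causes no problem, since padding is masked and the argument never used $|S|=K$.

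\textbf{Main obstacle.} The real work is Step 1: deriving an additive, influence-weighted masking bound for a Set Transformer with masked softmax attention across several blocks, where masking one token changes every other token's representation. I would first attempt a single-block computation, bounding the change of $\mathrm{softmax}$-weighted averages of bounded values by $2R\times$(removed normalized attention mass). I would then absorb the depth dependence into $L_\star$, defining $L_\star$ as the product of per-block Lipschitz constants, possibly capped via (A0) boundedness. With that envelope defined this way, the remaining steps are bookkeeping.
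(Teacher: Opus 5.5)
Your argument is the paper's: apply the masking-sensitivity bound to the realized $S(B)$, then add and subtract the oracle top-$k_B$ influence mass and take the conditional expectation, exactly as in your Step~2. The one correction is to Step~1, where you have misread the hypotheses. Assumption~\ref{assm:A2} directly posits $|\hat Y_{\mathrm{full}}(B)-f_\phi(Z^{\mathrm{mask}(S)}(B))|\le\sum_{i\notin S}a_i(B)\|z_i-z_\emptyset\|_2$ with $\sum_i a_i(B)=A(B)\le L_\star$, so no multi-block softmax or Set Transformer analysis is needed. The factor $2$ is simply $\|z_i-z_\emptyset\|_2\le 2R$ from the radius bound in Assumption~\ref{assm:A1}, and normalizing $w_i=a_i/A(B)$ then bounds the gap by $2RL_\star\bigl(1-\sum_{i\in S(B)}w_i\bigr)$.
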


This bound shows that approximation improves as $K$ reduces the oracle tail or as the selector better tracks influential instances.

\begin{theorem}[Generalization controlled by budget $K$]
\label{thm:gen}
Condition on a selector and selected-token representation fixed independently of the training labels, and let $\mathcal{F}_K$ be a capacity-controlled class of permutation-invariant relational aggregators that receive at most $K$ selected tokens per bag.
For $M$ training bags, with probability at least $1-\delta$, every $f\in\mathcal{F}_K$ satisfies
\begin{equation}
\mathcal{R}(f)
\le
\widehat{\mathcal{R}}_M(f)
+
\tilde{\mathcal{O}}\!\left(\sqrt{\frac{K}{M}}\right)
+
\mathcal{O}\!\left(\sqrt{\frac{\log(1/\delta)}{M}}\right).
\label{eq:main_gen_bound}
\end{equation}
If the selector is chosen from a finite class $\mathcal{S}_K$, an additional selector-complexity term of order $\sqrt{\log |\mathcal{S}_K|/M}$ appears.
\end{theorem}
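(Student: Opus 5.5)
The plan is the standard uniform-convergence argument via Rademacher complexity, applied conditionally on the fixed selector and token map. Since the selector $S(\cdot)$ and the token representation $\tau(e_\theta(\cdot),\cdot)$ are fixed independently of the training labels, each training bag $B_m$ is deterministically mapped to a padded token set $T_m\in(\mathbb{R}^{d}\cup\{\mathrm{pad}\})^{K}$ with a mask. The pairs $(T_m,Y_m)$ are then i.i.d., and the problem reduces to learning over inputs of bounded size $K$. I will assume a loss $\ell$ bounded in $[0,1]$ and $L_\ell$-Lipschitz in the logit; for BCE this needs clipped logits. Under these assumptions the usual symmetrization and McDiarmid argument gives, with probability at least $1-\delta$,
\[
\mathcal{R}(f)\le \widehat{\mathcal{R}}_M(f)+2L_\ell\,\mathfrak{R}_M(\mathcal{F}_K)+\mathcal{O}\big(\sqrt{\log(1/\delta)/M}\big)
\]
uniformly over $f\in\mathcal{F}_K$, where the contraction lemma has removed the loss. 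This produces the third term of \eqref{eq:main_gen_bound}. The remaining work is to show $\mathfrak{R}_M(\mathcal{F}_K)=\tilde{\mathcal{O}}(\sqrt{K/M})$.

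For the complexity term I would make "capacity-controlled" concrete in Appendix~\ref{app:theory}. The class consists of masked Set Transformers with a fixed number of layers, spectral and Frobenius-type norm bounds on weights, bounded token norms $\|t_i\|\le R$, and softmax attention over at most $K$ tokens. I then cover $\mathcal{F}_K$ layer by layer. An attention block acting on $K$ tokens is Lipschitz with respect to the parameters, measured in the $\ell_{2,\infty}$ or Frobenius norm over the token matrix. The Frobenius norm of a $K\times d$ token matrix scales as $\sqrt{K}R$, so perturbation bounds that propagate through the layers incur a factor growing as $\sqrt{K}$, or as $\log K$ if handled through the $\ell_\infty$ softmax analysis. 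Composing these perturbation bounds, as in Bartlett--Foster--Telgarsky-style or Edelman et al.-style covering bounds for transformers, gives $\log\mathcal{N}(\mathcal{F}_K,\epsilon)\lesssim C\,K\log(\cdot)/\epsilon^2$ or a parametric $\mathcal{O}(p\log(K/\epsilon))$. Dudley's entropy integral then yields $\mathfrak{R}_M\le\tilde{\mathcal{O}}(\sqrt{K/M})$, with the logarithmic factors absorbed into $\tilde{\mathcal{O}}$. The key point is that $n$ never enters: the selected-token input has at most $K$ active tokens, and the masked padding tokens contribute nothing to the attention sums.

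This covering step is the main obstacle. Existing transformer bounds are usually stated with $\log(\text{sequence length})$ dependence, so a $\sqrt{K}$ rate is in one sense loose. The cleanest route is to state the assumption directly so that $\sqrt{K}$ appears. Concretely, I would bound the pooled output of the final pooling-by-multihead-attention by a sum over $K$ token contributions, with Lipschitz parameter dependence, and use vector contraction (Maurer) over the $K$ tokens, which gives a factor $\sqrt{K}$. Since an upper bound of order $\sqrt{K/M}$ is all that is claimed, I would take this simpler route and treat any sharper $\log K$ dependence as a remark.

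The selector extension follows from a union bound. For each fixed $s\in\mathcal{S}_K$ the argument above holds with confidence $\delta/|\mathcal{S}_K|$. Replacing $\log(1/\delta)$ by $\log(|\mathcal{S}_K|/\delta)$ adds the term $\mathcal{O}(\sqrt{\log|\mathcal{S}_K|/M})$, and the bound then holds uniformly over selector–aggregator pairs, including any selector chosen from the data.
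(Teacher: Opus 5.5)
Your proposal follows the paper's proof essentially step for step: a bounded $L_\ell$-Lipschitz surrogate loss, the standard Rademacher uniform-convergence bound for the class induced by the fixed selector and token map, contraction to remove the loss, a $C_{\mathrm{rel}}R\sqrt{K/M}$ capacity term reflecting at most $K$ selected tokens of radius $R$, and a union bound at level $\delta/|\mathcal{S}_K|$ for the finite selector family. The paper does not derive the $\sqrt{K}$ term through covering numbers or Maurer's vector contraction (the latter would in any case need the outer Lipschitz map to be parameter-free, which learned attention pooling is not); it simply posits the capacity envelope as Assumption~\ref{assm:Gcap}, backed by the Jensen-based feature-map sufficient condition of Proposition~\ref{prop:gcap_feature}, which is where your plan ends up once you choose to state the assumption directly.
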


This component-level result separates candidate discovery from expensive relational aggregation: after selection, the statistical complexity of the relational component scales with $K$, not directly with $n$; any $n$ dependence enters through the selector class.

Together, Theorems~\ref{thm:approx} and~\ref{thm:gen} characterize the budget tradeoff in \BRMIL{}: increasing $K$ can reduce oracle tail mass and selector regret, while also increasing the complexity of the expensive relational component.
Full assumptions, proofs, and practical guidance for choosing $K$ are provided in Appendix~\ref{app:theory}.

\section{Experiments}
\label{sec:experiments}

\subsection{Experimental setup}
\label{subsec:exp_setup}

\paragraph{Datasets.}
We evaluate functional miRNA--mRNA prediction on three benchmarks: (i) miRAWtest (10 predefined subsets), (ii) deepTargetPro (8{,}073 pairs, 10-fold CV), and (iii) MTI (420{,}351 pairs, large-scale CLASH/chiRA/HYBRID data).

For miRAWtest, we use 10-fold balanced cross-validation: each predefined subset serves once as the held-out fold, with 109 positive and 109 negative test pairs per fold.
The same protocol is applied to deepTargetPro, and all reproduced methods and our models use identical test folds.

Stage 1--2 training uses TargetNet's \CTS{}-level data; Stage 3 uses pair-level labels.
We report PR-AUC as the primary metric and F1@0.5 as a complementary thresholded metric.

\paragraph{Baselines.}
We reproduce TargetNet \cite{targetnet} using its official pre-trained checkpoint and codebase, and Mimosa \cite{mimosa} using its official pre-trained model, both evaluated under our 10-fold protocol.
We also include a max-pooling baseline built from our optimized CTS encoder.
Our main method is \textbf{\PAIRFormer{}} with \BRMIL{} and default budget $K{=}64$.
Quoted baselines (miTDS, PITA, etc.) are shown for context but are not directly comparable due to different evaluation protocols.

\paragraph{Implementation.}
We use $K{=}64$ as the default budget, evaluate miRAW and deepTargetPro under 10-fold balanced cross-validation, and select checkpoints by validation PR-AUC.
Small-scale experiments (miRAW, deepTargetPro) run on a single RTX 4090 GPU; large-scale MTI training uses 8$\times$A100 GPUs.

\subsection{Main miRNA targeting results}
\label{subsec:main_results}

\paragraph{Main result.}

Under 10-fold balanced cross-validation with budget $K^\star{=}64$, \PAIRFormer{} achieves PR-AUC $0.869{\pm}0.031$ and F1@0.5 $0.840{\pm}0.022$, outperforming reproduced TargetNet, Mimosa, and max-pooling baselines (F1 $0.779$--$0.798$).
The improvement is consistent across all 10 folds.

\begin{table*}[!t]
  \centering
    \caption{
    \textbf{miRAW 10-fold balanced cross-validation results.}
    Mean$\pm$std over 10 test folds (218 pairs each: 109 positive + 109 negative).
    All reproduced methods use the official pre-trained checkpoints evaluated under our identical 10-fold protocol.
    Quoted prior results use their original full-miRAW protocols and are included only as context, not for direct comparison.
    $^\dagger$Results quoted from \cite{micgr}, which re-evaluates both methods under its own pairing-pattern CTS protocol (${\sim}8$ avg.\ CTSs/gene) rather than ESA filtering.
    F1@0.5 uses a fixed threshold of 0.5.
    }

  \label{tab:overall}

  \small
  \renewcommand{\arraystretch}{1.08}

  \resizebox{\linewidth}{!}{\begin{tabular}{lccccccc}
    \toprule
    \textbf{Method} & \textbf{PR-AUC} & \textbf{F1@0.5} & \textbf{Acc} & \textbf{Prec} & \textbf{Rec} & \textbf{Spec} & \textbf{NPV} \\
    \midrule
    \multicolumn{8}{l}{\textit{Reproduced baselines (10-fold balanced CV)}} \\
    TargetNet (official) \cite{targetnet} & 0.773$\pm$0.026 & 0.779$\pm$0.018 & 0.730$\pm$0.021 & 0.660$\pm$0.017 & 0.951$\pm$0.027 & 0.510$\pm$0.031 & 0.913$\pm$0.044 \\
    Mimosa (official) \cite{mimosa} & 0.740$\pm$0.028 & 0.788$\pm$0.017 & 0.742$\pm$0.024 & 0.675$\pm$0.024 & 0.954$\pm$0.027 & 0.527$\pm$0.047 & 0.925$\pm$0.034 \\
    \midrule
    \multicolumn{8}{l}{\textit{Ours (10-fold balanced CV)}} \\
    Max pooling (TN-Opt encoder + max) & 0.808$\pm$0.023 & 0.798$\pm$0.017 & 0.752$\pm$0.024 & 0.673$\pm$0.021 & 0.982$\pm$0.009 & 0.522$\pm$0.041 & 0.965$\pm$0.017 \\
    \textbf{PAIR-Former (ours)} & \textbf{0.869}$\pm$\textbf{0.031} & \textbf{0.840}$\pm$\textbf{0.022} & \textbf{0.831}$\pm$\textbf{0.026} & \textbf{0.790}$\pm$\textbf{0.040} & \textbf{0.896}$\pm$\textbf{0.037} & \textbf{0.768}$\pm$\textbf{0.058} & \textbf{0.892}$\pm$\textbf{0.032} \\
    \midrule
    \multicolumn{8}{l}{\textit{Quoted baselines (full miRAW protocol; not strictly comparable)}} \\
    miTDS (quoted) \cite{mitds}        & \textemdash & 0.8063 & 0.7700 & 0.6962 & 0.9578 & 0.5821 & 0.9326 \\
    PITA (quoted) \cite{pita}          & \textemdash & 0.2162 & 0.5053 & 0.5196 & 0.1365 & 0.8741 & 0.5030 \\
    miRDB (quoted) \cite{mirdb}        & \textemdash & 0.2110 & 0.5373 & 0.7135 & 0.1239 & 0.9507 & 0.5205 \\
    miRanda (quoted) \cite{miranda}    & \textemdash & 0.3568 & 0.5001 & 0.4997 & 0.2775 & 0.7226 & 0.5001 \\
    TargetScan (quoted) \cite{targetscan} & \textemdash & 0.4712 & 0.5577 & 0.5852 & 0.3945 & 0.7208 & 0.5436 \\
    deepTarget (quoted) \cite{deeptarget} & \textemdash & 0.4904 & 0.6521 & 0.8332 & 0.3477 & 0.9354 & 0.6064 \\
    miRAW (quoted) \cite{miraw}        & \textemdash & 0.7289 & 0.7055 & 0.6749 & 0.7923 & 0.6186 & 0.7493 \\
    miCGR$^\dagger$ (quoted) \cite{micgr} & \textemdash & 0.8009 & 0.7902 & 0.7590 & 0.8479 & 0.7331 & 0.8296 \\
    TEC-miTarget$^\dagger$ (quoted) \cite{tecmitarget} & \textemdash & 0.7817 & 0.7796 & 0.7744 & 0.7892 & 0.7701 & 0.7852 \\
    \bottomrule
  \end{tabular}}
\end{table*}

\paragraph{External validation on deepTargetPro.}

To validate generalization beyond miRAWtest, we evaluate on deepTargetPro \cite{lee2020deeptargetpro} under the same 10-fold balanced protocol (Table~\ref{tab:deeptargetpro}).
With miRAW-pretrained CTS encoders and only the Stage~3 aggregator trained on deepTargetPro, \PAIRFormer{} achieves F1$=83.9{\pm}3.9\%$, outperforming TEC-miTarget ($79.11\%$) by $+4.8$ points.
Training all three stages from scratch yields F1$=83.2{\pm}3.2\%$.
Because the CTS encoders in the transfer setting were never trained on deepTargetPro, the gain isolates the contribution of Set Transformer aggregation.

\begin{table*}[t]
  \centering
    \caption{
    \textbf{deepTargetPro external validation.}
    \PAIRFormer{} is evaluated under 10-fold balanced cross-validation.
    The transfer variant reuses miRAW-pretrained \CTS{} encoders and trains only the pair-level aggregator; the full variant trains all three stages on deepTargetPro.
    Baselines are quoted from TEC-miTarget~\cite{tecmitarget} under its published protocol and are included for context.
    All metrics are percentages.
    }
  \label{tab:deeptargetpro}

  \scriptsize
  \setlength{\tabcolsep}{3.2pt}
  \renewcommand{\arraystretch}{1.05}

  \resizebox{0.88\linewidth}{!}{
  \begin{tabular}{lcccccc}
    \toprule
    \textbf{Method} & \textbf{Acc (\%)} & \textbf{Sens (\%)} & \textbf{Spec (\%)} & \textbf{PPV (\%)} & \textbf{NPV (\%)} & \textbf{F1 (\%)} \\
    \midrule
    \multicolumn{7}{l}{\textit{Seed-match-based methods}} \\
    PITA \cite{pita}          & 50.53 & 13.65 & 87.41 & 51.96 & 50.31 & 21.62 \\
    mirSVR \cite{mirsvr}      & 50.01 & 27.76 & 72.26 & 49.97 & 50.01 & 35.68 \\
    miRDB \cite{mirdb}        & 53.73 & 12.39 & 95.07 & 71.35 & 52.05 & 21.10 \\
    microT \cite{microt}      & 61.13 & 58.94 & 63.32 & 61.62 & 60.70 & 60.24 \\
    TargetScan \cite{targetscan} & 55.77 & 39.45 & 72.08 & 58.52 & 54.36 & 47.12 \\
    \midrule
    \multicolumn{7}{l}{\textit{Deep learning methods}} \\
    deepTarget \cite{deeptarget}  & 65.21 & 34.77 & 93.54 & 83.32 & 60.64 & 49.04 \\
    deepTargetPro \cite{lee2020deeptargetpro} & 78.04 & 75.51 & 80.38 & 78.17 & 77.92 & 76.81 \\
    TargetNet \cite{targetnet}    & 72.61 & 95.08 & 51.67 & 64.69 & 91.90 & 76.99 \\
    TEC-miTarget \cite{tecmitarget} & 79.97 & 78.56 & 81.29 & 79.67 & 80.25 & 79.11 \\
    \midrule
    \multicolumn{7}{l}{\textit{Ours: 10-fold balanced CV}} \\
    \textbf{\PAIRFormer{} (transfer)} & \textbf{82.9}$\pm$\textbf{4.9} & \textbf{91.3}$\pm$\textbf{4.9} & \textbf{75.0}$\pm$\textbf{10.5} & \textbf{78.0}$\pm$\textbf{6.4} & \textbf{90.1}$\pm$\textbf{4.7} & \textbf{83.9}$\pm$\textbf{3.9} \\
    \textbf{\PAIRFormer{} (full)} & \textbf{82.7}$\pm$\textbf{3.3} & \textbf{88.2}$\pm$\textbf{5.0} & \textbf{77.6}$\pm$\textbf{6.0} & \textbf{78.9}$\pm$\textbf{4.1} & \textbf{87.1}$\pm$\textbf{4.5} & \textbf{83.2}$\pm$\textbf{3.2} \\
    \bottomrule
  \end{tabular}
  }
\end{table*}

\paragraph{Large-scale validation on MTI.}
To evaluate scalability beyond small class-balanced benchmarks, we further test \PAIRFormer{} on MTI (miRNA Target Interaction), a large-scale benchmark constructed from CLASH \cite{clash}, chiRA \cite{chira}, and HYBRID \cite{hybrid} data.
MTI contains 420,351 miRNA--mRNA pairs, making it $38\times$ larger than miRAWtest, and covers a substantially broader transcript set (44,172 unique mRNAs vs.\ 2,534 in miRAW).
Its \CTS{} distribution is also heavy-tailed, with median $n{=}640$ and 92.4\% of pairs exceeding the default budget $K{=}64$ (Fig.~\ref{fig:n_distribution}; Appendix~\ref{app:n_distribution_stats}).
We train all three stages with the same architecture on 8$\times$A100 GPUs and evaluate budgets $K\in\{64,128,256,512\}$.
As shown in Fig.~\ref{fig:budget_robustness}(a), \PAIRFormer{} achieves its best MTI performance at $K{=}512$, with F1$=0.7925$ and PR-AUC$=0.8729$.
The lower absolute F1 compared with miRAWtest (0.79 vs.\ 0.84) suggests that MTI provides a more challenging large-scale setting.
Performance improves from F1$=0.7708$ at $K{=}64$ to F1$=0.7925$ at $K{=}512$ ($+2.2$pp), indicating that budgeted relational aggregation becomes more valuable as dataset scale and candidate-pool complexity increase.
Training at $K{=}512$ requires approximately 65 minutes per epoch on 8$\times$A100 GPUs.
Relative to the default budget $K{=}64$, processing all candidates would require roughly $10\times$ more expensive encoding at the median and $60\times$ at P95, before considering the quadratic cost of full relational aggregation.

\subsection{Budget, robustness, and runtime analysis}
\label{subsec:budget_analysis}

\paragraph{Budget and pool-size robustness.}
We further analyze the budget behavior and candidate-pool robustness on MTI.
In Fig.~\ref{fig:budget_robustness}(a), we vary $K\in\{1,8,16,32,64,128,256,512\}$.
\textsc{truncate@Kmax} trains once at $K_{\max}=512$ and evaluates smaller budgets by masking, while \textsc{retrain@K} trains a separate budget-matched model for each $K$.
The retrained curve generally improves with larger budgets, consistent with Theorem~\ref{thm:approx}: larger $K$ can reduce uncovered influence mass.
In Fig.~\ref{fig:budget_robustness}(b), we fix the expensive-token budget at $K^\star=64$ and vary the selector-visible pool size $n$.
Performance drops when $n\approx K^\star$ but saturates once $n$ is a few multiples of $K^\star$, suggesting that larger visible pools mainly improve selection quality rather than increasing the capacity of the expensive relational component.

\begin{figure*}[t]
  \centering
  \begin{subfigure}[t]{0.49\textwidth}
    \centering
    \includegraphics[width=\linewidth]{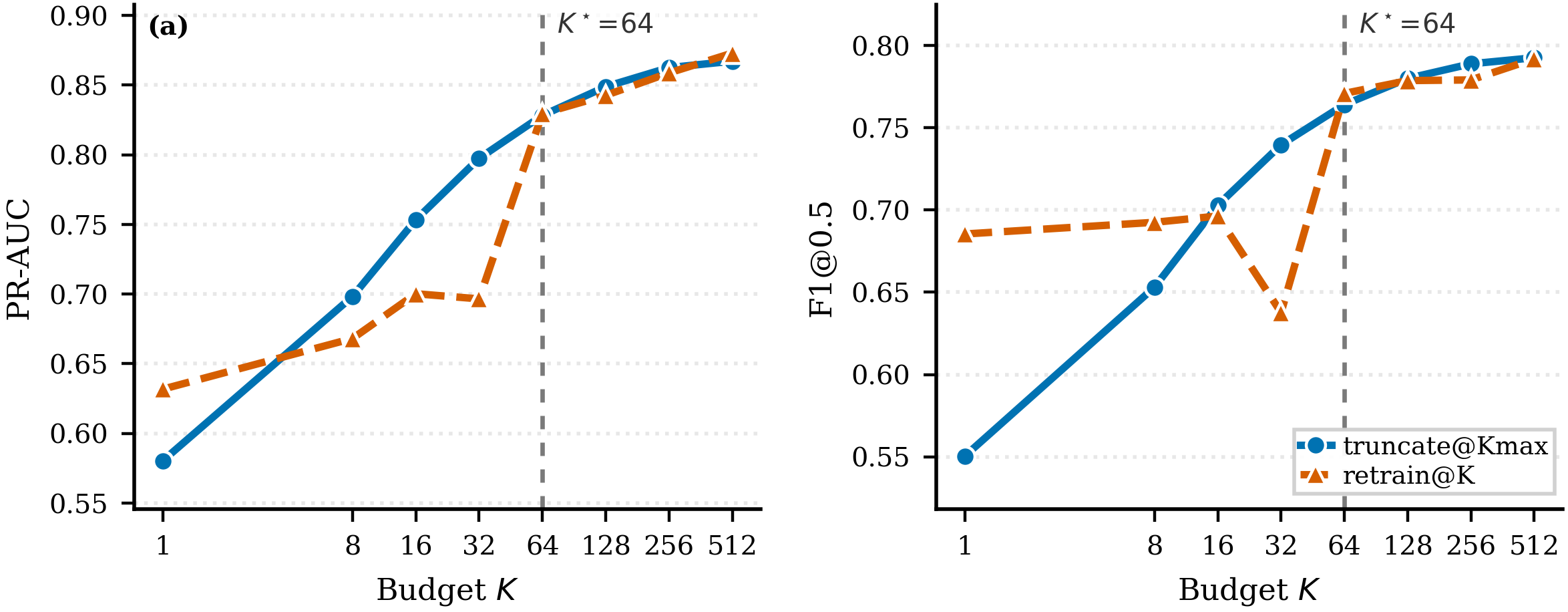}
    \caption{Performance vs.\ expensive-token budget $K$.}
    \label{fig:perf_vs_k}
  \end{subfigure}
  \hfill
  \begin{subfigure}[t]{0.49\textwidth}
    \centering
    \includegraphics[width=\linewidth]{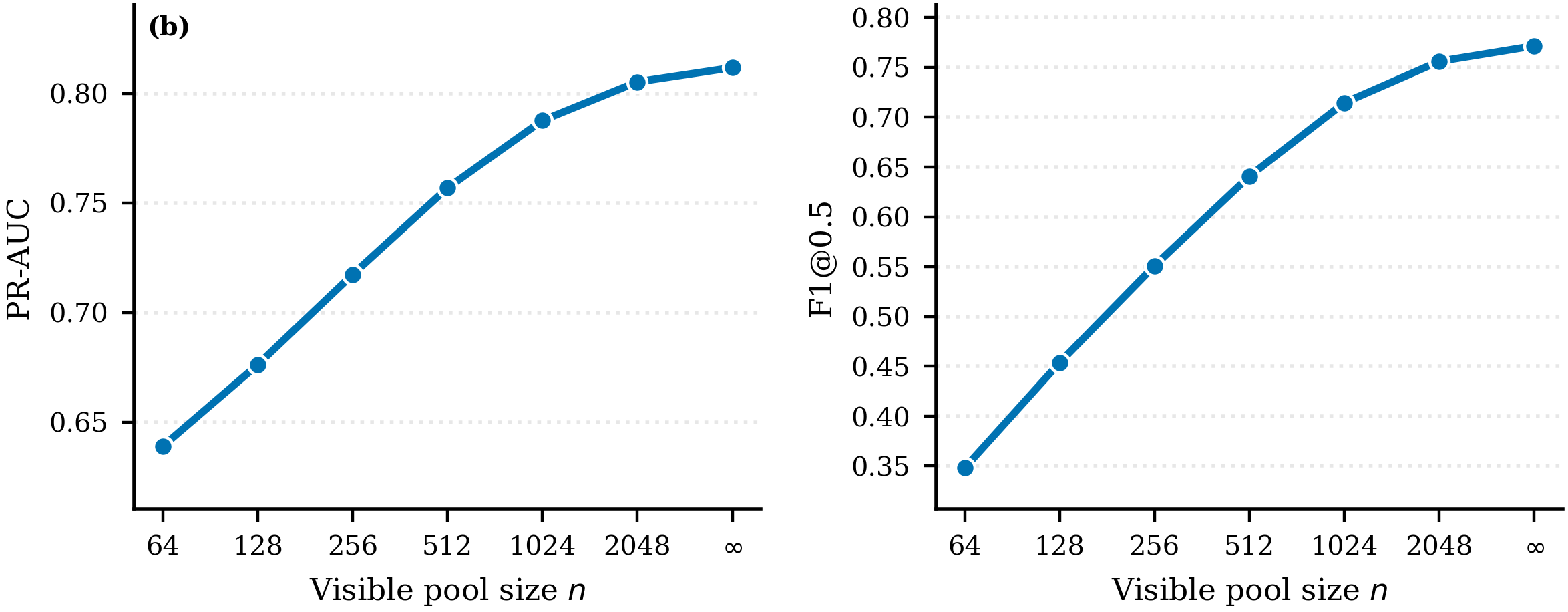}
    \caption{Robustness to visible candidate pool size $n$ at fixed $K^\star=64$.}
    \label{fig:robustness_vs_n}
  \end{subfigure}
\caption{
\textbf{Budget and pool-size analysis on MTI.}
\textbf{(a)} Increasing the expensive-token budget $K$ improves PR-AUC and F1; the retrained model reaches F1$=0.7925$ and PR-AUC$=0.8729$ at $K=512$.
\textbf{(b)} At fixed $K^\star=64$, performance improves as the selector sees more candidates and saturates once the visible pool is a few multiples of $K^\star$.
These results support the predicted budget tradeoff: $K$ controls relational capacity, while larger visible pools improve candidate selection.
}
  \label{fig:budget_robustness}
\end{figure*}

\paragraph{Runtime and memory.}
We profile online inference across budgets, comparing \textsc{BR-MIL\_online} with \textsc{TargetNet\_like\_online} and \textsc{Naive\_online}.
At $K=64$, BR-MIL achieves TargetNet-like latency and throughput while enabling relational aggregation, and peak VRAM remains modest for $K\le64$.
Fig.~\ref{fig:runtime_analysis}b shows that wall-time is dominated by a shared CPU gather step, with only minor BR-MIL-specific overhead; full profiling details are in Appendix~\ref{app:runtime_extra}.

\begin{figure*}[t]
  \centering
  \begin{subfigure}[t]{0.49\textwidth}
    \centering
    \includegraphics[width=\linewidth]{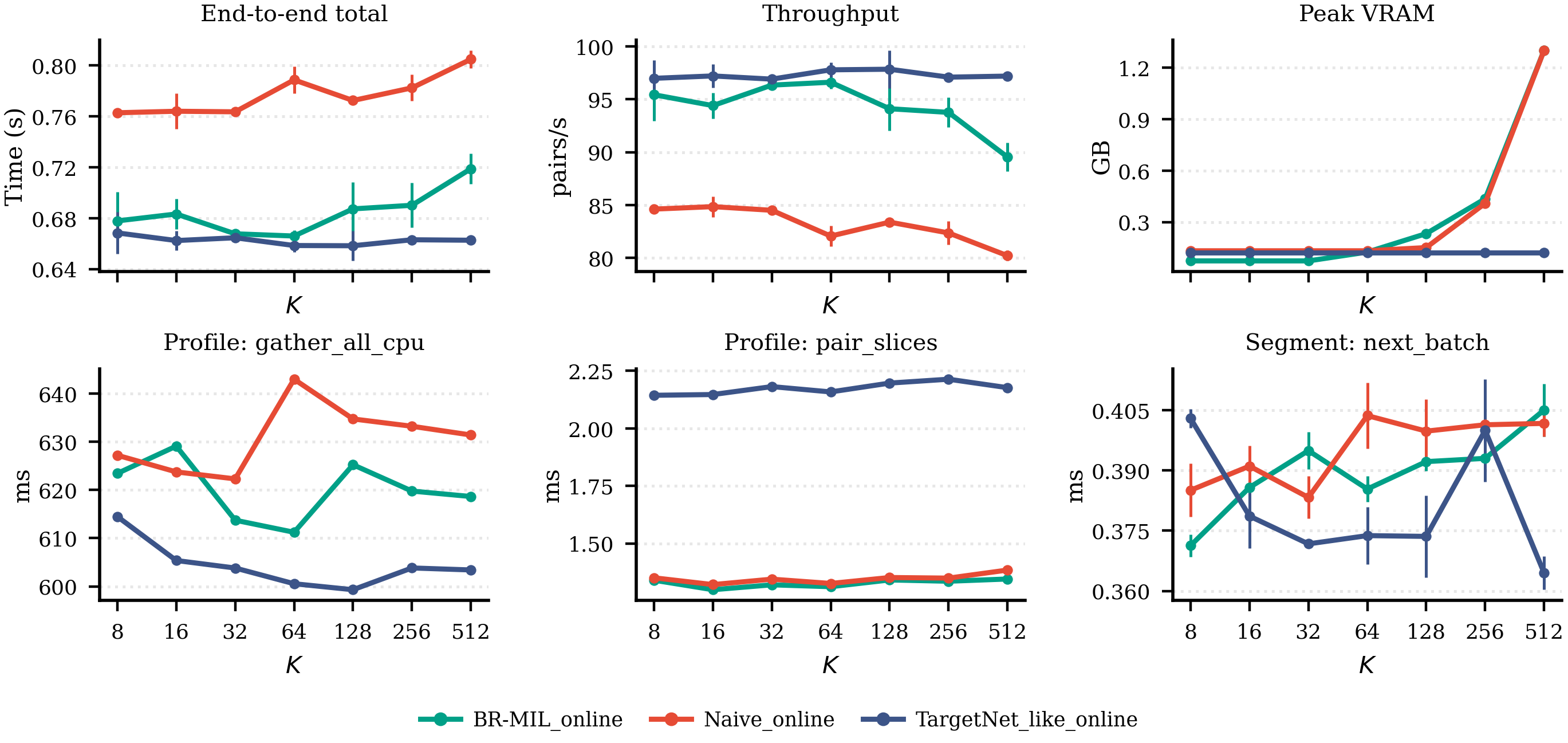}
    \caption{Online inference cost vs.\ budget $K$.}
    \label{fig:compute_vs_k}
  \end{subfigure}
  \hfill
  \begin{subfigure}[t]{0.49\textwidth}
    \centering
    \includegraphics[width=\linewidth]{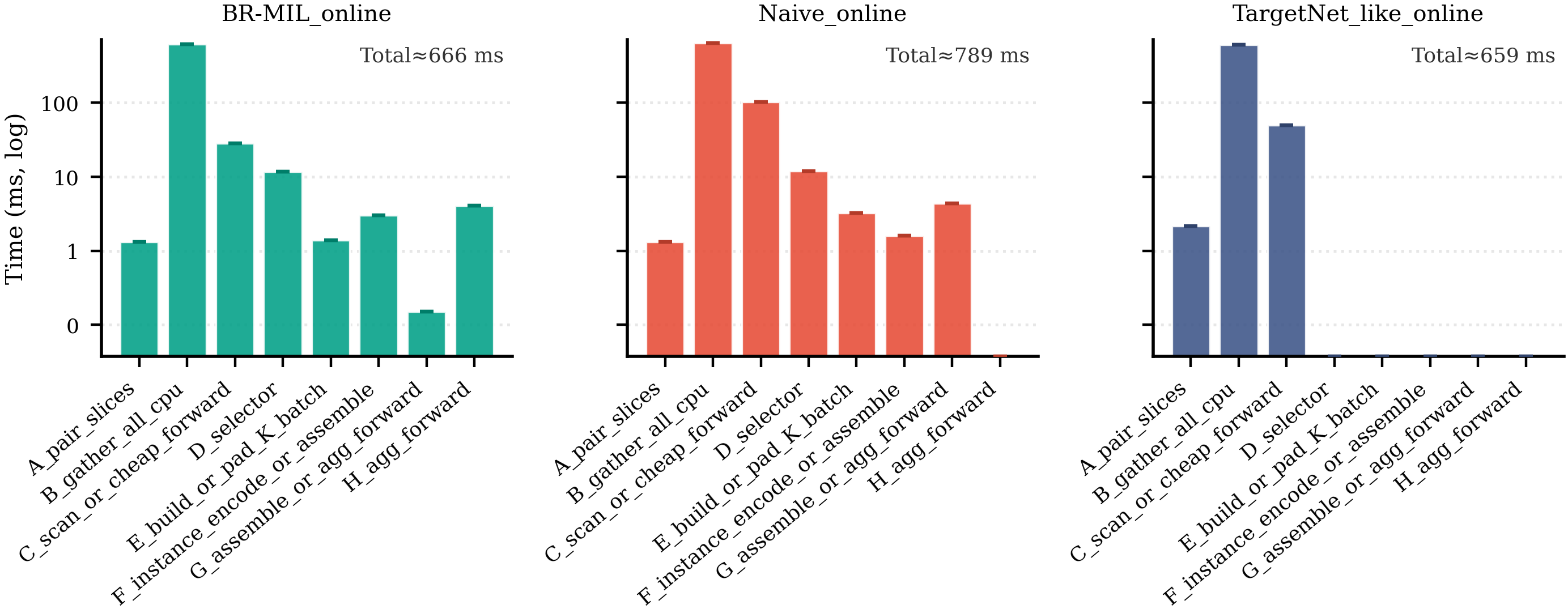}
    \caption{Stage-wise latency breakdown at $K=64$.}
    \label{fig:stage_breakdown}
  \end{subfigure}
  \caption{
  \textbf{Runtime and memory analysis.}
  \textbf{(a)} End-to-end latency, throughput, and peak VRAM across budgets for \textsc{BR-MIL\_online}, \textsc{Naive\_online}, and the budget-independent \textsc{TargetNet\_like\_online} reference.
  \textbf{(b)} Stage-wise latency breakdown at $K=64$ shows that the shared CPU gather stage dominates wall-time, while BR-MIL-specific selection and aggregation introduce only small overhead.
  }
  \label{fig:runtime_analysis}
\end{figure*}

\subsection{Architecture ablations}
\label{subsec:arch_ablation}

Scaling the \CTS{} encoder yields only marginal gains on MTI validation: F1 increases from $0.6775$ with the Standard encoder (${\sim}14$K parameters) to $0.6849$ with X-Large (${\sim}909$K), while XX-Large (${\sim}3.6$M) gives no further improvement.
This motivates using X-Large as the local encoder and relying on pair-level aggregation for cross-site evidence.

\Cref{tab:arch_ablation} summarizes the key aggregator ablations.
SAB-based Set Transformer outperforms a sorted 1D-CNN and a heavier GNN, while capacity scaling shows a sweet spot around $d_{\mathrm{model}}{=}1024$, $L{=}4$; ISAB underperforms, suggesting that inducing-point compression is too aggressive at the tested budget.

\begin{table}[!htbp]
\centering
\caption{
\textbf{Key architecture ablations on MTI validation.}
Representative configurations used to choose the final architecture; full sweeps are in Appendix~\ref{app:arch_ablation}.
}
\label{tab:arch_ablation}
\small
\setlength{\tabcolsep}{4pt}
\renewcommand{\arraystretch}{1.05}
\begin{tabular}{llcc}
\toprule
\textbf{Ablation} & \textbf{Variant} & \textbf{Setting} & \textbf{Val F1} \\
\midrule
\multicolumn{4}{l}{\textit{Aggregator family} ($K{=}64$, X-Large encoder)} \\
Type & CNN & sorted 1D-CNN & 0.6850 \\
Type & GNN & $k$-NN + GAT & 0.7602 \\
Type & \textbf{SAB} & \textbf{Set Transformer} & \textbf{0.7715} \\
\midrule
\multicolumn{4}{l}{\textit{Set Transformer capacity}} \\
Scale & Small & $d{=}256,L{=}2,H{=}8$ & 0.7273 \\
Scale & Sweet spot & $d{=}1024,L{=}4,H{=}16$ & \textbf{0.7353} \\
Scale & Wider & $d{=}1280,L{=}4,H{=}16$ & 0.7333 \\
Scale & Deeper & $d{=}1024,L{=}5,H{=}16$ & 0.7335 \\
Scale & ISAB & $d{=}512,L{=}3,H{=}8$ & 0.6417 \\
\bottomrule
\end{tabular}
\end{table}

\subsection{Cross-domain generalization}
\label{subsec:exp_cross_domain}

To assess whether BR-MIL generalizes beyond miRNA targeting, we evaluate on two standard MIL benchmarks from different domains (Table~\ref{tab:cross_domain}); implementation details are provided in Appendix~\ref{app:cross_domain}.
\begin{table}[!htbp]
  \centering
  \caption{
  \textbf{Cross-domain generalization.}
  \BRMIL{} is evaluated on CAMELYON16 (pathology) and Musk2 (molecular activity), both outperforming matched ABMIL baselines.
  $\dagger$ denotes published results under the same protocol from torchmil~\cite{torchmil}.
  SmTransformerABMIL~\cite{smmil} exploits spatial patch coordinates, a pathology-specific prior unavailable in general MIL settings.
  }
  \label{tab:cross_domain}
  \small
  \setlength{\tabcolsep}{4pt}
  \renewcommand{\arraystretch}{1.05}
  \begin{tabular}{lcrr}
    \toprule
    \textbf{Method} & \textbf{K} & \textbf{AUC} & \textbf{F1} \\
    \midrule
    \multicolumn{4}{l}{\textit{CAMELYON16 (pathology MIL; mean 8{,}764 patches/bag)}} \\
    ABMIL (ours) & --- & 0.968 {\scriptsize$\pm$0.035} & 0.952 {\scriptsize$\pm$0.023} \\
    \textbf{\BRMIL{} (ours)} & \textbf{1024} & \textbf{0.980} {\scriptsize$\pm$0.025} & \textbf{0.966} {\scriptsize$\pm$0.033} \\
    SmTransABMIL$^\dagger$~\cite{smmil} & --- & 0.982 {\scriptsize$\pm$0.006} & --- \\
    TransMIL$^\dagger$ & --- & 0.977 {\scriptsize$\pm$0.007} & --- \\
    DTFD-MIL$^\dagger$ & --- & 0.976 {\scriptsize$\pm$0.014} & --- \\
    \midrule
    \multicolumn{4}{l}{\textit{Musk2 (molecular activity MIL; mean 12 instances/bag; 10-fold CV $\times$ 3 seeds)}} \\
    ABMIL (matched capacity) & --- & 0.979 {\scriptsize$\pm$0.017} & 0.756 {\scriptsize$\pm$0.032} \\
    \textbf{\BRMIL{} (ours)} & \textbf{4} & \textbf{0.987} {\scriptsize$\pm$0.010} & \textbf{0.811} {\scriptsize$\pm$0.046} \\
    \bottomrule
  \end{tabular}
\end{table}

\paragraph{Results.}
On CAMELYON16, \BRMIL{} selects $K{=}1{,}024$ patches, about 11.7\% of the mean bag size, and improves over our matched ABMIL baseline in both AUC ($0.980$ vs.\ $0.968$) and F1 ($0.966$ vs.\ $0.952$).
Under the same torchmil protocol, it nearly matches SmTransformerABMIL ($0.980$ vs.\ $0.982$ AUC) despite not using spatial patch coordinates, and outperforms quoted permutation-invariant baselines without spatial priors.
On Musk2, \BRMIL{} with $K{=}4$ improves F1 over matched-capacity ABMIL by $+5.5$pp, suggesting that budgeted selection can help beyond compute reduction.

\paragraph{Limitations.}
\PAIRFormer{} relies on candidate generation, so functional sites missed by ESA filtering are unrecoverable.
Current evaluations are constrained by available pair-level datasets and model one miRNA--mRNA pair at a time, leaving noisy clinical-scale data and multi-miRNA co-regulation for future work.

\section{Conclusion}
\label{sec:conclusion}

We introduced \BRMIL{}, a budgeted relational MIL formulation for large, heavy-tailed bags, and instantiated it with \PAIRFormer{} for functional miRNA--mRNA target prediction.
By combining cheap full-pool scanning with expensive Set Transformer aggregation over only $K$ selected candidate sites, \PAIRFormer{} makes transcript-scale relational prediction practical.
Experiments on miRAW, deepTargetPro, and the large-scale MTI benchmark demonstrate strong performance and a controlled accuracy--compute tradeoff, while results on CAMELYON16 and Musk2 suggest broader applicability beyond miRNA sequence modeling.
Future work includes improving non-canonical \CTS{} discovery, extending to multi-miRNA regulation, and scaling budgeted aggregation with differentiable selection or sparse attention.

{\small
\bibliographystyle{unsrtnat}
\bibliography{reference}
}

\clearpage
\appendix

\section{Extended Problem Setup and BR-MIL Formalization}
\label{app:problem_setup_long}

\subsection{Biological instantiation: miRNA--mRNA functional targeting}
\label{sec:bio-instantiation}

\paragraph{Task.}
We study \emph{functional} miRNA--mRNA targeting.
Each example corresponds to a miRNA sequence $\mu^{(m)}$ and an mRNA \threeprimeUTR $\nu^{(m)}$,
with a transcript-level label $Y^{(m)}\in\{0,1\}$ indicating whether $\mu^{(m)}$ functionally represses $\nu^{(m)}$.
(We use $\mu,\nu$ to avoid overloading $u_i$ in the abstract formulation.)

\paragraph{Candidate target sites (CTSs).}
Given $(\mu^{(m)},\nu^{(m)})$, we extract a variable-size set of candidate target sites (CTSs)
from the 3'UTR, e.g., by sliding windows plus seed/alignment-based filtering:
\[
\mathcal{C}^{(m)}=\{c^{(m)}_1,\dots,c^{(m)}_{n^{(m)}}\}, \qquad n^{(m)}:=|\mathcal{C}^{(m)}|.
\]
Each CTS corresponds to a local window on $\nu^{(m)}$ that is potentially bindable by $\mu^{(m)}$.
The candidate pool size $n^{(m)}$ varies substantially across pairs.

\paragraph{MIL view and mapping to our abstract bags.}
We cast transcript-level prediction as multi-instance learning (MIL):
the $m$-th miRNA--mRNA pair forms a bag of CTS instances.
Concretely, each CTS $c^{(m)}_i$ induces
(i) an \emph{instance content} $x_i\in\mathcal{X}$ (e.g., the CTS window sequence/context features defined by $\mu^{(m)}$ and $\nu^{(m)}$),
and (ii) \emph{structural attributes} $u_i\in\mathcal{U}$ (e.g., normalized transcript position/region, seed type, alignment quality).
This yields the abstract bag
\[
B^{(m)}=\{(x_i,u_i)\}_{i=1}^{n^{(m)}},
\]
with bag label $Y^{(m)}$ and latent instance labels (whether a CTS is truly functional).
We next formalize this setting as \emph{Budgeted Relational MIL (BR-MIL)} under a strict per-bag budget $K$.

\subsection{Notation}
\label{sec:notation}
We observe a dataset of bags
\[
\mathcal{D}=\{(B^{(m)},Y^{(m)},\mathcal{L}^{(m)})\}_{m=1}^M .
\]

Each bag $B^{(m)}$ corresponds to one miRNA--mRNA pair $(\mu^{(m)},\nu^{(m)})$ introduced in Sec.~\ref{sec:bio-instantiation},
and its instances are the candidate target sites (CTSs) extracted from $\nu^{(m)}$ for $\mu^{(m)}$.

Each bag is a (multi)set of instances
\[
B=\{(x_i,u_i)\}_{i=1}^{n},
\]
where $x_i\in\mathcal{X}$ denotes instance content (e.g., CTS windows) and
$u_i\in\mathcal{U}$ denotes structural attributes (e.g., transcript position/region).
The bag label is $Y\in\{0,1\}$ (or $Y\in\mathbb{R}$ for regression).
Optionally, a subset of instances is labeled: $\mathcal{L}\subseteq[n]$, with instance labels
$\{\tilde y_i\}_{i\in\mathcal{L}}$.

A \emph{cheap encoder} $\tilde e_{\tilde\theta}:\mathcal{X}\times\mathcal{U}\to\mathbb{R}^{d_0}$
produces cheap representations $\tilde h_i=\tilde e_{\tilde\theta}(x_i,u_i)$ and the multiset
$\tilde H=\{\tilde h_i\}_{i=1}^{n}$.
An \emph{expensive encoder} $e_{\theta}:\mathcal{X}\times\mathcal{U}\to\mathbb{R}^{d}$
produces $h_i=e_{\theta}(x_i,u_i)$, but can be evaluated for at most $K$ instances per bag.
A \emph{selector} $\pi_\psi(\cdot\mid \tilde H)$ outputs a subset $S\subseteq[n]$ with $|S|\le K$.

Pairwise relations among selected instances may be provided explicitly as
$R_S=\{r_{ij}\}_{i,j\in S}$ with
\[
r_{ij}=\rho(h_i,h_j,u_i,u_j),
\]
or modeled implicitly by the aggregator.
A permutation-invariant \emph{relational aggregator} $f_\phi$ outputs the bag prediction
\begin{equation}
\hat Y
= f_\phi\!\big(\{(h_i,u_i)\}_{i\in S},\, R_S\big).
\label{eq:predictor}
\end{equation}
When instance labels are available, an instance head $c_\omega$ yields
$\hat y_i=c_\omega(h_i)$ for $i\in S\cap\mathcal{L}$.

\begin{definition}[Budgeted Relational MIL (BR-MIL)]
\label{def:brmil}
\textbf{Budgeted Relational MIL (BR-MIL)} is a supervised learning problem characterized by:
(i) large candidate pools ($n$ per bag), (ii) a strict per-bag budget $K$ on expensive encoding and
relational aggregation, and (iii) modeling interaction effects among selected instances.

Formally, the hypothesis class consists of triplets $(\tilde e_{\tilde\theta},\pi_\psi,f_\phi)$
with prediction
\begin{equation}
\begin{aligned}
\hat Y(B)
&=
f_\phi\!\Big(\{(e_\theta(x_i,u_i),u_i)\}_{i\in S},\, R_S\Big),\\
S
&\sim \pi_\psi\!\Big(\cdot\ \Big|\ \{\tilde e_{\tilde\theta}(x_i,u_i)\}_{i=1}^{n}\Big),
\qquad |S|\le K .
\end{aligned}
\label{eq:brmil-hypothesis}
\end{equation}

Training minimizes the expected risk with mixed supervision:
\begin{equation}
\begin{aligned}
&\min_{\tilde\theta,\psi,\phi,\omega}\\
&\mathbb{E}_{(B,Y,\mathcal{L})\sim \mathcal{D}}
\ \mathbb{E}_{S\sim \pi_\psi(\cdot\mid \tilde H)}
\Big[
\ell_{\text{bag}}(Y,\hat Y)
+\lambda \!\!\sum_{i\in S\cap\mathcal{L}}\!\! \ell_{\text{inst}}(\tilde y_i,\hat y_i)
\Big]\\
&\quad +\Omega(\pi_\psi),
\end{aligned}
\label{eq:objective}
\end{equation}

where $\Omega$ regularizes selection (e.g., diversity or exploration).
The budget constraint is hard: expensive encoding and relational aggregation are executed only on $S$.

\end{definition}

\subsection{Permutation Invariance}
\label{sec:invariance}
A BR-MIL predictor is \emph{bag-permutation invariant} if for any permutation $\pi$ of indices,
\begin{equation}
\hat Y\big(\{(x_{\pi(i)},u_{\pi(i)})\}_{i=1}^{n}\big)=\hat Y\big(\{(x_i,u_i)\}_{i=1}^{n}\big).
\label{eq:perm-inv}
\end{equation}
Sufficient conditions are:
(i) the selector depends only on the multiset $\tilde H$ (not on order) and outputs $S$
via scores with either no ties or a fixed, data-independent tie-breaking rule;
(ii) the aggregator $f_\phi$ is permutation invariant w.r.t.\ its selected inputs (e.g., DeepSets or Set Transformer);
(iii) if explicit relations $R_S$ are constructed, the relation function $\rho$ is permutation-equivariant
(i.e., permuting selected indices permutes $R_S$ consistently).
In particular, Top-$K$ selection preserves invariance under (i)--(iii) because the prediction depends only on the set $S$.

\subsection{Computational Complexity}
\label{sec:complexity}
For each bag, the computational cost decomposes into two independent bottlenecks:

\textbf{(a) Per-instance encoding cost.}
Running the expensive encoder $e_\theta$ on all $n$ candidates would require $\mathcal{O}(n)$ expensive forward passes.
For heavy-tailed $n$ (e.g., $n>500$; see Figure~\ref{fig:n_distribution} for empirical distribution), this becomes prohibitive even if aggregation were free.
BR-MIL avoids this by using a cheap encoder $\tilde{e}_{\tilde{\theta}}$ to scan all $n$ candidates in $\mathcal{O}(n)$ cheap operations, then applying $e_\theta$ only to the selected $K$ instances, reducing expensive encoding to $\mathcal{O}(K)$.

\textbf{(b) Relational aggregation cost.}
Attention-based aggregators (e.g., Set Transformer) scale as $\mathcal{O}(n^2)$ in the number of encoded instances due to pairwise interactions.
Even if encoding were free, $\mathcal{O}(n^2)$ aggregation is intractable for large $n$.
BR-MIL restricts aggregation to the $K$-element selected subset, reducing this to $\mathcal{O}(K^2)$.

Thus the deployed cost per bag is
\[
\mathcal{O}(n)_{\text{cheap scan}} + \mathcal{O}(K)_{\text{expensive encoding}} + \mathcal{O}(K^2)_{\text{aggregation}},
\]
under a strict access budget $K$, versus $\mathcal{O}(n)_{\text{expensive encoding}} + \mathcal{O}(n^2)_{\text{aggregation}}$ if all instances were processed relationally.

Our proposed \PAIRFormer{} instantiates this BR-MIL hypothesis class with
(i) a TargetNet-compatible expensive encoder,
(ii) a distilled cheap encoder for full-pool scanning,
(iii) a deterministic budgeted selector (STSelector), and
(iv) a Set Transformer aggregator.

\section{Dataset and Candidate Pool Statistics}
\label{app:dataset_stats}

\subsection{Candidate Pool Size Statistics}
\label{app:n_distribution_stats}

Table~\ref{tab:n_distribution_stats_app} reports detailed statistics of the number of valid \CTSs{} per miRNA--mRNA pair after ESA filtering.
These statistics complement Fig.~\ref{fig:n_distribution} in the main text and quantify the heavy-tailed candidate-pool regime motivating budgeted selection.

\begin{table}[!htbp]
\centering
\caption{
\textbf{Candidate pool size statistics across benchmarks.}
Statistics are computed after ESA filtering ($s_i^{\mathrm{esa}}\ge 6$).
The operating budget in the main experiments is $K{=}64$.
}
\label{tab:n_distribution_stats_app}
\small
\setlength{\tabcolsep}{5pt}
\begin{tabular}{lrrrrrrr}
\toprule
Dataset & Pairs & Mean & Median & P95 & P99 & Max & Pairs $>K{=}64$ \\
\midrule
miRAW & 10,960 & 1,365 & 912 & 4,067 & 6,897 & 24,983 & 94.7\% \\
deepTargetPro & 8,073 & 1,369 & 993 & 3,927 & 6,128 & 11,071 & 95.2\% \\
MTI & 420,351 & 1,157 & 640 & 3,853 & 7,214 & 29,894 & 92.4\% \\
\bottomrule
\end{tabular}
\end{table}

\section{Full Theoretical Assumptions and Proofs}
\label{app:theory}

\subsection{Preliminaries: full-pool and masked predictors}
\label{app:theory:prelim}

We formalize the comparison between a conceptual full-pool relational predictor and the deployed
budgeted predictor. Fix a bag
\[
B=\{(x_i,u_i)\}_{i=1}^{n},
\]
and define the expensive token of instance $i$ as
\[
z_i := z(x_i,u_i)\in\mathbb{R}^{d_z},
\]
where $z(\cdot)$ denotes the composition of the expensive encoder and tokenization
(e.g., Eq.~\eqref{eq:token}). Let $Z(B):=(z_1,\dots,z_n)$.

\paragraph{Reference full-information predictor.}
Let $f_\phi$ be a permutation-invariant relational aggregator operating on a multiset of tokens.
The reference predictor (conceptually) accesses all $n$ expensive tokens:
\begin{equation}
\hat Y_{\mathrm{full}}(B) := f_\phi\big(\{z_i\}_{i=1}^{n}\big).
\label{eq:app_y_full}
\end{equation}

\paragraph{Budgeted predictor via masking.}
A selector outputs a subset $S(B)\subseteq[n]$ with $|S(B)|\le K$.
Fix a padding token $z_\emptyset\in\mathbb{R}^{d_z}$ and define the masked token list
\[
Z^{\mathrm{mask}(S)}(B):=(z_1',\dots,z_n'),\qquad
z_i' =
\begin{cases}
z_i,& i\in S,\\
z_\emptyset,& i\notin S.
\end{cases}
\]
The budgeted prediction is
\begin{equation}
\hat Y_{K}(B) := f_\phi\big(\{z_i'\}_{i=1}^{n}\big)
= f_\phi\big(\{z_i\}_{i\in S(B)};\; \text{mask}\big).
\label{eq:app_y_k}
\end{equation}
In practice (Set Transformer / SAB), the ``mask'' is implemented by attention masks so that padded tokens
do not contribute to attention weights or pooling, making \eqref{eq:app_y_k} a faithful abstraction.

\subsection{Approximation: influence tail and selector regret}
\label{app:theory:approx}

The approximation result separates two effects: the unavoidable tail mass left by any size-$K$
subset, and the regret of the actual selector relative to that oracle subset. This avoids placing
the desired coverage property directly into an assumption.

\begin{assumption}[Mask-consistent execution]\label{assm:A0}
There exists a fixed padding token $z_\emptyset$ and a masking convention such that
the implemented budgeted forward pass (padding to \code{kmax} and masking in attention/pooling)
is equivalent to evaluating $f_\phi$ on the masked token multiset
$\{z_i'\}_{i=1}^{n}$ defined above.
\end{assumption}

\begin{assumption}[Bounded token radius]\label{assm:A1}
There exists $R>0$ such that for all bags and all instances,
$\|z_i\|_2\le R$ and $\|z_\emptyset\|_2\le R$.
\end{assumption}

\begin{assumption}[Uniform masking sensitivity]\label{assm:A2}
There exists $L_\star<\infty$ such that for every bag $B$ there are sensitivity scores
$a_i(B)\ge 0$ with $A(B):=\sum_{i=1}^{n}a_i(B)\le L_\star$ and, for every subset
$S\subseteq[n]$,
\begin{equation}
|\hat Y_{\mathrm{full}}(B)-f_\phi(Z^{\mathrm{mask}(S)}(B))|
\;\le\;
\sum_{i\notin S} a_i(B)\,\|z_i-z_\emptyset\|_2 .
\label{eq:A2_sensitivity}
\end{equation}
\end{assumption}

The scores $a_i(B)$ should be read as masking influences: they upper bound how much the
prediction can change when token $i$ is replaced by the padding token while the remaining
masked context varies. The next lemma gives a checkable sufficient condition rather than
assuming the desired approximation bound directly.

\begin{lemma}[Gradient-based sensitivities imply Assumption~\ref{assm:A2}]
\label{lem:grad_weights}
Assume $f_\phi$ is differentiable in each token on every line segment between a masked state and a
state obtained by unmasking one token. Let $\mathcal{Z}(B)$ denote the set of all masked states
obtained from $Z(B)$, and define
\[
g_i(B):=\sup_{Z\in\mathcal{Z}(B)}\big\|\nabla_{z_i} f_\phi(Z)\big\|_2 .
\]
If $\sum_i g_i(B)\le L_\star$ for all bags, then Assumption~\ref{assm:A2} holds with
$a_i(B)=g_i(B)$.
\end{lemma}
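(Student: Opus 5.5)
The proof unmasks the tokens outside $S$ one at a time and bounds each step by the mean value theorem along a line segment. Fix a bag $B$ and a subset $S\subseteq[n]$, and enumerate the complement as $[n]\setminus S=\{j_1,\dots,j_m\}$ in any fixed order. Define masked states $Z^{(0)},\dots,Z^{(m)}$ as follows: $Z^{(0)}:=Z^{\mathrm{mask}(S)}(B)$, and $Z^{(k)}$ is obtained from $Z^{(k-1)}$ by replacing the padding token $z_\emptyset$ in slot $j_k$ with the true token $z_{j_k}$. Then $Z^{(m)}=Z(B)$, so $f_\phi(Z^{(m)})=\hat Y_{\mathrm{full}}(B)$. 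Each $Z^{(k)}$ equals $Z^{\mathrm{mask}(S\cup\{j_1,\dots,j_k\})}(B)$, so it lies in $\mathcal{Z}(B)$. The triangle inequality then gives
\[
\big|\hat Y_{\mathrm{full}}(B)-f_\phi(Z^{\mathrm{mask}(S)}(B))\big|\le\sum_{k=1}^{m}\big|f_\phi(Z^{(k)})-f_\phi(Z^{(k-1)})\big|.
\]

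Each summand is controlled by differentiating along the segment between consecutive states. Set $\gamma_k(t):=Z^{(k-1)}+t\,(Z^{(k)}-Z^{(k-1)})$ for $t\in[0,1]$. Only coordinate $j_k$ moves, from $z_\emptyset$ to $z_{j_k}$. By the differentiability hypothesis, $t\mapsto f_\phi(\gamma_k(t))$ is differentiable with derivative $\langle\nabla_{z_{j_k}}f_\phi(\gamma_k(t)),\,z_{j_k}-z_\emptyset\rangle$. The mean value theorem and Cauchy--Schwarz then give
\[
|f_\phi(Z^{(k)})-f_\phi(Z^{(k-1)})|\le\sup_{t}\|\nabla_{z_{j_k}}f_\phi(\gamma_k(t))\|_2\,\|z_{j_k}-z_\emptyset\|_2.
\]

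The main obstacle is a mismatch in where the gradient is evaluated. $g_i(B)$ is a supremum over the discrete set $\mathcal{Z}(B)$ of masked states, but the intermediate points $\gamma_k(t)$ with $0<t<1$ are not themselves masked states. I would resolve this by reading $\mathcal{Z}(B)$ in the lemma as also containing these segment points, which is exactly the family of line segments named in the differentiability hypothesis. With that reading, the supremum above is at most $g_{j_k}(B)$.

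If one insists on the literal discrete set, I would fall back on continuity of $\nabla f_\phi$. Each step then uses the gradient at a point arbitrarily close to the segment endpoints, and the bound still comes out as $g_{j_k}(B)$, possibly after an $\varepsilon$-enlargement that is sent to zero. I expect to state the segment-closure interpretation explicitly, since a bound at masked endpoints alone does not control interior points without further regularity.

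Summing over $k$ gives $\sum_{i\notin S}g_i(B)\|z_i-z_\emptyset\|_2$, which is inequality~\eqref{eq:A2_sensitivity} with $a_i(B)=g_i(B)$. These scores are nonnegative because they are suprema of norms. Their total satisfies $A(B)=\sum_i g_i(B)\le L_\star$ by hypothesis. Since $S$ and $B$ were arbitrary, Assumption~\ref{assm:A2} holds.
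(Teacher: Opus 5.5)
Your argument is the paper's proof, up to the direction of traversal. The paper masks one token at a time starting from $Z(B)$, while you unmask starting from $Z^{\mathrm{mask}(S)}(B)$; otherwise you both apply the mean value theorem to each single-token step and sum. Your remark that the supremum defining $g_i(B)$ must range over the points of these one-token segments, not just the discrete masked states, is correct, and the paper's proof relies on it without saying so.

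Drop the continuity fallback, though. Under the literal discrete reading the lemma is false, so the segment-closure reading is necessary rather than a convenience. Take $n=1$ and let $f_\phi$ be a smoothstep along the segment, so its gradient vanishes at both $z_\emptyset$ and $z_1$ but $f_\phi(z_1)\neq f_\phi(z_\emptyset)$. Then $g_1(B)=0$ while the masking gap is nonzero.
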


\begin{proof}
Let $Z^{(0)}:=Z(B)$ and obtain $Z^{(t)}$ by masking one additional token at each step until reaching
$Z^{\mathrm{mask}(S)}(B)$ (mask exactly those $i\notin S$), so that only one token changes per step.
By the mean value theorem applied to the $i$-th token at step $t$,
\[
\begin{aligned}
\big|f_\phi(Z^{(t-1)})-f_\phi(Z^{(t)})\big|
&\le
\sup_{Z\in\mathcal{Z}(B)}\|\nabla_{z_i} f_\phi(Z)\|_2\,\|z_i-z_\emptyset\|_2
\\&=
g_i(B)\,\|z_i-z_\emptyset\|_2.
\end{aligned}
\]
Summing over all masked tokens $i\notin S$ yields the bound.
\end{proof}

\begin{definition}[Influence tail and selector regret]\label{def:influence_tail}
Assume $A(B)>0$ and define normalized influence weights
\begin{equation}
w_i(B):=\frac{a_i(B)}{A(B)},\qquad \sum_{i=1}^{n}w_i(B)=1 .
\end{equation}
If $A(B)=0$, the full and masked predictors are identical under Assumption~\ref{assm:A2}, and the
choice of $w$ is immaterial. Let $k_B:=\min(K,n)$ and let
$w_{(1)}(B)\ge\cdots\ge w_{(n)}(B)$ denote the sorted weights. The oracle top-$K$ tail mass is
\begin{equation}
\psi_K(B):=1-\sum_{j=1}^{k_B}w_{(j)}(B).
\label{eq:oracle_tail}
\end{equation}
For a possibly randomized selector $S(B)$, its influence regret is
\begin{equation}
\Delta^{\mathrm{w}}_K(B):=
\sum_{j=1}^{k_B}w_{(j)}(B)
-
\mathbb{E}\!\left[\sum_{i\in S(B)}w_i(B)\,\middle|\,B\right]\ge 0 .
\label{eq:selector_influence_regret}
\end{equation}
We write
\begin{equation}
\varepsilon_K(B):=\psi_K(B)+\Delta^{\mathrm{w}}_K(B).
\label{eq:epsilon_decomp}
\end{equation}
\end{definition}

\begin{lemma}[Best-$K$ mass equals the sorted top-$K$ sum]\label{lem:bestK_mass}
For the weights of a fixed bag $B$, with $k_B=\min(K,n)$ and $\sum_i w_i=1$,
\[
\max_{|S|\le k_B}\ \sum_{i\in S} w_i \;=\; \sum_{j=1}^{k_B} w_{(j)}.
\]
Consequently, the minimal uncovered mass achievable by any budget-$K$ subset equals $\psi_K(B)$.
\end{lemma}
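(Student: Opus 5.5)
We prove the two inequalities $\ge$ and $\le$ separately; this is the standard exchange argument for choosing a maximum-weight subset of fixed cardinality. The weights are nonnegative because $a_i(B)\ge 0$ and $A(B)>0$.

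For the lower bound, let $S^\star$ be the index set of the $k_B$ largest weights, breaking ties by any fixed rule. Then $|S^\star|=k_B\le k_B$, so $S^\star$ is feasible. Its mass is exactly $\sum_{j=1}^{k_B}w_{(j)}$, which shows that the maximum is at least this value.

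For the upper bound, take any feasible $S$ with $|S|=m\le k_B$ and list its weights in decreasing order as $w_{s_1}\ge\cdots\ge w_{s_m}$. We claim that $w_{s_j}\le w_{(j)}$ for every $j\le m$. The indices $s_1,\dots,s_j$ are $j$ distinct elements of $[n]$, each with weight at least $w_{s_j}$. If we had $w_{s_j}>w_{(j)}$, then at least $j$ weights would strictly exceed $w_{(j)}$. That contradicts the definition of $w_{(j)}$ as the $j$-th largest weight, since at most $j-1$ weights can strictly exceed it.

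Summing the claim over $j$ gives
\[
\sum_{i\in S}w_i\le\sum_{j=1}^{m}w_{(j)}\le\sum_{j=1}^{k_B}w_{(j)},
\]
where the second step uses $w_{(j)}\ge 0$ to add the missing terms $j=m+1,\dots,k_B$. This establishes equality of the maximum with the sorted top-$K$ sum.

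For the consequence, note that $\sum_i w_i=1$, so the uncovered mass of any $S$ is $\sum_{i\notin S}w_i=1-\sum_{i\in S}w_i$. Minimizing the uncovered mass over $|S|\le k_B$ is therefore the same as maximizing the covered mass. By the equality just proved, the minimum equals $1-\sum_{j=1}^{k_B}w_{(j)}=\psi_K(B)$, which is definition~\eqref{eq:oracle_tail}.

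We also check the restriction $|S|\le k_B$ against the budget $|S|\le K$. Every subset of $[n]$ has size at most $n$, so $|S|\le K$ is equivalent to $|S|\le\min(K,n)=k_B$. Hence the two constraints define the same feasible family.

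The only step that needs care is the componentwise comparison $w_{s_j}\le w_{(j)}$ when weights tie. The counting argument above handles ties without extra work, because it only uses strict inequalities. The rest of the proof is bookkeeping.
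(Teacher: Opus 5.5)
Your proof is correct, but it is not the exchange argument you announce, and that is exactly where it departs from the paper. The paper argues that the top-$k_B$ set is optimal because any other subset can be improved by swapping a smaller selected weight for a larger unselected one. You instead prove the rank-wise domination $w_{s_j}\le w_{(j)}$ for every feasible $S$ by a counting argument, and then sum it.

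Your route has three advantages. It bounds every feasible subset directly, with no iterative modification. It handles ties cleanly; the paper's ``can be improved'' should really read ``is not made worse'' when weights tie. And it makes explicit that $w_i\ge 0$ is needed to pad subsets with $|S|<k_B$. The paper's exchange sketch preserves cardinality and silently skips that case, which genuinely requires nonnegativity.

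The paper's version is shorter and more intuitive. It becomes rigorous once one adds that, by nonnegativity, a maximizer may be taken to have exactly $k_B$ elements. From there, finitely many non-decreasing swaps reach the top-$k_B$ set. Your final check that $|S|\le K$ and $|S|\le k_B$ define the same family for $S\subseteq[n]$ is a small point the paper leaves implicit.
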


\begin{proof}
The maximum is achieved by selecting the $k_B$ largest weights; any other subset can be improved
by exchanging a smaller selected weight with a larger unselected weight. The uncovered mass is
$1-\sum_{i\in S}w_i$, so its minimum is $1-\sum_{j=1}^{k_B} w_{(j)}=\psi_K(B)$.
\end{proof}

\begin{theorem}[Formal approximation bound under budgeted masking]
\label{thm:approx_formal}
Assume Assumptions~\ref{assm:A0}--\ref{assm:A2}.
Then for any fixed bag $B$,
\begin{equation}
\mathbb{E}\Big[\,\big|\hat Y_{\mathrm{full}}(B)-\hat Y_{K}(B)\big|\,\Big|\ B\Big]
\;\le\;
2R\,A(B)\,\varepsilon_K(B)
\;\le\;
2R\,L_\star\,\varepsilon_K(B),
\label{eq:approx_bound}
\end{equation}
where the expectation is over selector randomness and
$\varepsilon_K(B)=\psi_K(B)+\Delta_K^{\mathrm{w}}(B)$.
\end{theorem}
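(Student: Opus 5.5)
The argument runs pointwise in the selector's randomness and averages only at the end. Fix the bag $B$ and condition on a realization of $S=S(B)$ with $|S|\le K$. By Assumption~\ref{assm:A0}, the implemented budgeted output equals $f_\phi(Z^{\mathrm{mask}(S)}(B))$, so $\hat Y_K(B)=f_\phi(Z^{\mathrm{mask}(S)}(B))$. Assumption~\ref{assm:A2} then gives
\[
|\hat Y_{\mathrm{full}}(B)-\hat Y_K(B)|\le \sum_{i\notin S} a_i(B)\,\|z_i-z_\emptyset\|_2 .
\]
By Assumption~\ref{assm:A1} and the triangle inequality, $\|z_i-z_\emptyset\|_2\le \|z_i\|_2+\|z_\emptyset\|_2\le 2R$. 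This yields the deterministic bound $2R\sum_{i\notin S}a_i(B)=2R\,A(B)\bigl(1-\sum_{i\in S}w_i(B)\bigr)$, where $w_i=a_i/A$ are the normalized weights of Definition~\ref{def:influence_tail}. The degenerate case $A(B)=0$ is handled separately: the right side of Assumption~\ref{assm:A2} vanishes for every $S$, so both sides of \eqref{eq:approx_bound} are zero.

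Next I take the conditional expectation over the selector's randomness given $B$. The quantities $A(B)$ and $w_i(B)$ are deterministic given $B$, so linearity gives
\[
\mathbb{E}\bigl[|\hat Y_{\mathrm{full}}-\hat Y_K|\,\big|\,B\bigr]\le 2R\,A(B)\Bigl(1-\mathbb{E}\Bigl[\textstyle\sum_{i\in S}w_i\,\Big|\,B\Bigr]\Bigr).
\]
I then add and subtract the oracle mass $\sum_{j=1}^{k_B}w_{(j)}$:
\[
1-\mathbb{E}\Bigl[\textstyle\sum_{i\in S}w_i\Bigr]=\Bigl(1-\textstyle\sum_{j\le k_B}w_{(j)}\Bigr)+\Bigl(\textstyle\sum_{j\le k_B}w_{(j)}-\mathbb{E}\Bigl[\textstyle\sum_{i\in S}w_i\Bigr]\Bigr)=\psi_K(B)+\Delta^{\mathrm{w}}_K(B).
\]
This is exactly $\varepsilon_K(B)$ by \eqref{eq:epsilon_decomp}, which gives the first inequality. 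The second inequality follows from $A(B)\le L_\star$ in Assumption~\ref{assm:A2}, together with $\varepsilon_K(B)\ge 0$.

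The main point needing care is nonnegativity of $\Delta^{\mathrm{w}}_K(B)$, which the definition asserts but which must hold for the decomposition to be meaningful. I would invoke Lemma~\ref{lem:bestK_mass}. Every realization satisfies $|S|\le K$ and $S\subseteq[n]$, so $|S|\le k_B$, and hence $\sum_{i\in S}w_i\le\sum_{j\le k_B}w_{(j)}$ pointwise; taking expectations gives $\Delta^{\mathrm{w}}_K\ge 0$. Similarly, $\psi_K(B)\ge 0$ because the weights are nonnegative and sum to one. The other subtle point is that Assumption~\ref{assm:A2} must hold for every subset $S$, so that it applies to whichever random realization occurs. Its uniformity over $S$ is exactly what allows applying it before taking expectations, with no independence requirements on the selector.
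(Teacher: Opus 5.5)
Your proof is correct and follows the paper's argument essentially step for step: identify $\hat Y_K(B)$ with the masked evaluation via Assumption~\ref{assm:A0}, apply Assumption~\ref{assm:A2} to the realized subset, bound $\|z_i-z_\emptyset\|_2\le 2R$, rewrite $\sum_{i\notin S}a_i(B)$ as $A(B)\bigl(1-\sum_{i\in S}w_i(B)\bigr)$, take the conditional expectation to get $\psi_K(B)+\Delta_K^{\mathrm{w}}(B)$, and use $A(B)\le L_\star$ for the second inequality. Your extra checks are small points the paper leaves implicit: the case $A(B)=0$ (where only the first two terms vanish, while the last is merely nonnegative) and $\varepsilon_K(B)\ge 0$, which the second inequality needs.
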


\begin{proof}
By Assumption~\ref{assm:A0} and the definition of $\hat Y_K$ in \eqref{eq:app_y_k},
\[
\hat Y_K(B)= f_\phi(\{z_i'\}_{i=1}^{n})
\quad\text{with}\quad
z_i'=
\begin{cases}
z_i,& i\in S(B),\\
z_\emptyset,& i\notin S(B).
\end{cases}
\]
Applying Assumption~\ref{assm:A2} to the subset $S(B)$ and using Assumption~\ref{assm:A1} gives
\[
\begin{aligned}
\big|\hat Y_{\mathrm{full}}(B)-\hat Y_{K}(B)\big|
\le
\sum_{i\notin S(B)}a_i(B)\|z_i-z_\emptyset\|_2
\\
\le
2R\,A(B)\sum_{i\notin S(B)}w_i(B)
\\=
2R\,A(B)\Big(1-\sum_{i\in S(B)} w_i(B)\Big).
\end{aligned}
\]
Taking conditional expectation over selector randomness,
\[
\mathbb{E}\!\left[1-\sum_{i\in S(B)}w_i(B)\,\middle|\,B\right]
=
1-\sum_{j=1}^{k_B}w_{(j)}(B)+\Delta_K^{\mathrm{w}}(B)
=\psi_K(B)+\Delta_K^{\mathrm{w}}(B).
\]
The first inequality in \eqref{eq:approx_bound} follows, and the second uses $A(B)\le L_\star$.
\end{proof}

\begin{proposition}[Cheap proxy quality controls selector regret]
\label{prop:proxy_regret}
Let $q_i(B)$ be any cheap proxy score used by a selector, and let
\[
T_q^\star(B)\in\arg\max_{|T|\le k_B}\sum_{i\in T}q_i(B).
\]
Define the proxy regret
\[
\rho_K^q(B):=
\sum_{i\in T_q^\star(B)}q_i(B)
-
\mathbb{E}\!\left[\sum_{i\in S(B)}q_i(B)\,\middle|\,B\right]\ge 0 .
\]
If $\max_i |q_i(B)-w_i(B)|\le \eta(B)$, then
\begin{equation}
\Delta^{\mathrm{w}}_K(B)\le \rho_K^q(B)+2k_B\eta(B)
\le \rho_K^q(B)+2K\eta(B).
\label{eq:proxy_regret_bound}
\end{equation}
In particular, exact top-$K$ selection under a uniformly accurate proxy has
$\Delta^{\mathrm{w}}_K(B)\le 2K\eta(B)$.
\end{proposition}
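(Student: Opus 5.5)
We compare the $w$-mass captured by the actual selector $S(B)$ with the $w$-mass of the oracle top-$k_B$ set, passing through the proxy $q$. Write $\eta=\eta(B)$ and let $T_w^\star$ denote a set of $k_B$ largest weights, so that by Lemma~\ref{lem:bestK_mass}
\[
\sum_{j=1}^{k_B}w_{(j)}(B)=\sum_{i\in T_w^\star}w_i(B).
\]
We then telescope the regret as
\[
\Delta_K^{\mathrm w}(B)=\Big(\textstyle\sum_{T_w^\star}w_i-\sum_{T_w^\star}q_i\Big)+\Big(\sum_{T_w^\star}q_i-\sum_{T_q^\star}q_i\Big)+\Big(\sum_{T_q^\star}q_i-\mathbb{E}\sum_{S}q_i\Big)+\Big(\mathbb{E}\sum_{S}q_i-\mathbb{E}\sum_{S}w_i\Big),
\]
where every sum is over $i$ in the indicated set and the expectations are conditional on $B$.

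Each of the four brackets is then bounded separately.

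The first bracket is at most $|T_w^\star|\,\eta\le k_B\eta$, since the pointwise deviation between $q_i$ and $w_i$ is at most $\eta$.

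The second bracket is $\le 0$. This holds because $T_q^\star$ maximizes $\sum_{i\in T}q_i$ over all $|T|\le k_B$, and $T_w^\star$ is a feasible set.

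The third bracket is exactly $\rho_K^q(B)$ by definition.

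For the fourth bracket, the bound holds pointwise for every realization of $S(B)$: since $|S(B)|\le K$ and $S(B)\subseteq[n]$, we have $|S(B)|\le k_B$, so $\sum_{i\in S}(q_i-w_i)\le k_B\eta$. Taking the conditional expectation preserves this bound.

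Adding the four bounds gives $\Delta_K^{\mathrm w}(B)\le\rho_K^q(B)+2k_B\eta(B)$. The second inequality in the statement follows from $k_B\le K$. Nonnegativity of $\rho_K^q$ is immediate because $T_q^\star$ is optimal among all feasible sets and every realization of $S(B)$ is feasible.

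For the ``in particular'' clause, we check that exact top-$K$ selection on $q$ means $S(B)$ deterministically equals some $q$-maximizing set of size at most $k_B$. Then $\sum_{i\in S}q_i=\sum_{i\in T_q^\star}q_i$, so $\rho_K^q(B)=0$. A subtle point is that if some $q_i$ are negative, the maximizer may have fewer than $k_B$ elements. The argument above does not depend on cardinalities being exactly $k_B$, only on the bounds $|T|\le k_B$.

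The main obstacle is therefore just bookkeeping: ensuring that $|S(B)|\le k_B$ rather than merely $\le K$ (true since $S\subseteq[n]$), and handling the expectation over selector randomness pointwise. No step requires more than these linear comparisons.
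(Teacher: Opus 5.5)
Your proof of the main inequality uses the same four-term decomposition as the paper: proxy error on $T_w^\star$, optimality of $T_q^\star$, the definition of $\rho_K^q$, and proxy error on $S(B)$. It is correct. Your explicit check that $|S(B)|\le k_B$ and your nonnegativity argument for $\rho_K^q$ fill in bookkeeping the paper leaves implicit.

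The step that fails is in the ``in particular'' clause. Exact top-$K$ selection, as in Corollary~\ref{cor:topk_proxy}, returns the $k_B$ largest proxy scores, a set of size exactly $k_B$. Such a set belongs to $\arg\max_{|T|\le k_B}\sum_{i\in T}q_i$ only if none of its scores is negative. You only know $q_i\ge w_i-\eta(B)\ge -\eta(B)$, so negative scores can be selected, and then $\rho_K^q(B)>0$. For example, take $n=3$, $K=2$, $w=(0.98,0.01,0.01)$, $\eta=0.02$ and $q=(0.98,-0.005,-0.01)$. Top-$2$ selection gives $S=\{1,2\}$ while $T_q^\star=\{1\}$, so $\rho_K^q(B)=0.005$.

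You flagged negative scores, but your remark that the argument only uses $|T|\le k_B$ does not resolve this. The problem is not cardinality bookkeeping; it is that $S$ is no longer a $q$-maximizer, so $\rho_K^q(B)=0$ fails.

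The conclusion $\Delta_K^{\mathrm{w}}(B)\le 2k_B\eta(B)$ still holds. $S$ and $T_w^\star$ both have exactly $k_B$ elements, and $S$ maximizes $q$-mass among $k_B$-element sets, so $\sum_{i\in T_w^\star}q_i\le\sum_{i\in S}q_i$. Thus your second and third brackets together are $\le 0$, and the first and fourth give $2k_B\eta(B)$. Equivalently, apply the proposition to $q_i^+=\max(q_i,0)$: it obeys the same $\eta$ bound because $w_i\ge 0$, and for it the top-$k_B$ set is a genuine maximizer, so its proxy regret is zero. The paper's Corollary~\ref{cor:topk_proxy} asserts $\rho_K^q(B)=0$ for top-$K$ selection with the same unstated caveat.
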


\begin{proof}
Let $T_w^\star$ be a top-$k_B$ set under $w$. Since $T_q^\star$ maximizes proxy mass,
$\sum_{i\in T_w^\star}q_i\le \sum_{i\in T_q^\star}q_i$. Therefore
\[
\begin{aligned}
\Delta^{\mathrm{w}}_K(B)
&=
\sum_{i\in T_w^\star}w_i
-
\mathbb{E}\!\left[\sum_{i\in S(B)}w_i\,\middle|\,B\right]
\\
&\le
\sum_{i\in T_w^\star}(w_i-q_i)
+
\sum_{i\in T_q^\star}q_i
-
\mathbb{E}\!\left[\sum_{i\in S(B)}q_i\,\middle|\,B\right]
\\&\qquad
+
\mathbb{E}\!\left[\sum_{i\in S(B)}(q_i-w_i)\,\middle|\,B\right]
\\
&\le k_B\eta(B)+\rho_K^q(B)+k_B\eta(B).
\end{aligned}
\]
\end{proof}

\begin{corollary}[Cheap-score top-$K$ selector]\label{cor:topk_proxy}
Let $S_q(B)$ select the $k_B$ largest proxy scores $q_i(B)$, with deterministic tie-breaking.
If $\max_i |q_i(B)-w_i(B)|\le \eta(B)$, then
\[
\Delta_K^{\mathrm{w}}(B)\le 2k_B\eta(B)\le 2K\eta(B).
\]
Moreover, if $k_B<n$ and the influence margin
\[
\gamma_K(B):=w_{(k_B)}(B)-w_{(k_B+1)}(B)
\]
satisfies $\eta(B)<\gamma_K(B)/2$, then $S_q(B)$ is an oracle top-$K$ influence set and
$\Delta_K^{\mathrm{w}}(B)=0$. If $k_B=n$, the tail and regret are both zero.
\end{corollary}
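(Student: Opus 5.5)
The first claim is a direct specialization of Proposition~\ref{prop:proxy_regret}. The selector $S_q(B)$ is deterministic and returns $k_B$ indices maximizing $\sum_{i\in T} q_i(B)$ over $|T|\le k_B$, assuming we are in the nonnegative-proxy regime where filling all $k_B$ slots is optimal. It is therefore a valid choice of $T_q^\star(B)$, so the expectation in the proxy regret collapses and $\rho_K^q(B)=0$. Substituting this into \eqref{eq:proxy_regret_bound} gives $\Delta_K^{\mathrm{w}}(B)\le 2k_B\eta(B)\le 2K\eta(B)$. If the proxies can be negative, I would restrict Proposition~\ref{prop:proxy_regret} to the class of exactly-$k_B$ subsets. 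Its proof only compares against $T_w^\star$, which can be taken of size $k_B$ because the weights are nonnegative, so this restriction is harmless.

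For the margin statement, assume $k_B<n$ and $\eta(B)<\gamma_K(B)/2$. Let $T_w^\star$ be the index set attaining $w_{(1)},\dots,w_{(k_B)}$. The margin is strict, since $\gamma_K>2\eta\ge 0$, so $T_w^\star$ is uniquely determined. Take any $i\in T_w^\star$ and $j\notin T_w^\star$. Then $w_i\ge w_{(k_B)}$ and $w_j\le w_{(k_B+1)}$, which gives
\[
q_i-q_j\;\ge\;(w_i-\eta)-(w_j+\eta)\;\ge\;\gamma_K(B)-2\eta(B)\;>\;0 .
\]
Every proxy score inside $T_w^\star$ is therefore strictly larger than every proxy score outside it. Hence the $k_B$ largest proxy scores are exactly the indices of $T_w^\star$. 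No ties arise across the boundary, so tie-breaking cannot affect the outcome. Thus $S_q(B)=T_w^\star$, and Lemma~\ref{lem:bestK_mass} shows that $\sum_{i\in S_q}w_i=\sum_{j\le k_B}w_{(j)}$, i.e.\ $\Delta_K^{\mathrm{w}}(B)=0$.

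Finally, if $k_B=n$, the selector takes all indices. Then $\sum_{j\le n}w_{(j)}=1$, so $\psi_K(B)=0$, and the selected mass is also $1$, so $\Delta_K^{\mathrm{w}}(B)=0$.

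The only delicate point is the ranking argument: the strict inequality $\eta<\gamma_K/2$ must yield a strict separation of the proxy scores, so that deterministic tie-breaking plays no role. The rest follows directly from earlier results.
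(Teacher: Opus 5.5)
Your proof is correct and follows the paper's own argument: $\rho_K^q(B)=0$ for $S_q$, substituted into Proposition~\ref{prop:proxy_regret}, and then the strict separation $q_i\ge w_{(k_B)}-\eta>w_{(k_B+1)}+\eta\ge q_j$ for the margin claim. Two of your additions go beyond the paper's one-line proof and are worth keeping: the negative-proxy caveat, where restricting to exactly-$k_B$ subsets makes $\rho_K^q(B)=0$ genuinely true, and the explicit treatment of the case $k_B=n$, which the paper states but does not prove.
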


\begin{proof}
For $S=S_q$, the proxy regret $\rho_K^q(B)$ in Proposition~\ref{prop:proxy_regret} is zero,
which gives the first claim. For the margin claim, every true top-$k_B$ element $i$ and every
non-top-$k_B$ element $j$ satisfy
\[
q_i \ge w_{(k_B)}-\eta > w_{(k_B+1)}+\eta \ge q_j.
\]
Thus cheap-score top-$K$ recovers an oracle top-$K$ influence set.
\end{proof}

\begin{corollary}[Influence compressibility gives a tail rate]\label{cor:tail_rate}
If the sorted influence weights obey $w_{(j)}(B)\le C_{\mathrm{tail}}(B)j^{-\alpha}$ for some
$\alpha>1$, then
\[
\psi_K(B)
\le
\frac{C_{\mathrm{tail}}(B)}{\alpha-1}\,k_B^{1-\alpha}
\quad\text{for } k_B<n,
\]
and $\psi_K(B)=0$ for $k_B=n$. Consequently,
\[
\mathbb{E}\Big[|\hat Y_{\mathrm{full}}(B)-\hat Y_K(B)|\,\Big|\,B\Big]
\le
2R L_\star
\left(
\frac{C_{\mathrm{tail}}(B)}{\alpha-1}\,k_B^{1-\alpha}
\;+\;
\Delta_K^{\mathrm{w}}(B)
\right).
\]
If the selector is cheap-score top-$K$, Corollary~\ref{cor:topk_proxy} further replaces
$\Delta_K^{\mathrm{w}}(B)$ by $2K\eta(B)$.
\end{corollary}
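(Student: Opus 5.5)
The plan has two parts: a tail-sum estimate for $\psi_K(B)$, followed by substitution into Theorem~\ref{thm:approx_formal}.

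\emph{Case split.} If $k_B=n$, then $\psi_K(B)=1-\sum_{j=1}^{n}w_{(j)}(B)=0$, because the normalized weights sum to one. So assume $k_B=K<n$.

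\emph{Tail estimate.} By definition \eqref{eq:oracle_tail} and $\sum_i w_i=1$, the tail is
\[
\psi_K(B)=\sum_{j=k_B+1}^{n}w_{(j)}(B)\le C_{\mathrm{tail}}(B)\sum_{j=k_B+1}^{\infty}j^{-\alpha}.
\]
The series is compared with an integral. Since $t\mapsto t^{-\alpha}$ is decreasing, $j^{-\alpha}\le\int_{j-1}^{j}t^{-\alpha}\,dt$ for each $j\ge 2$. Summing over $j\ge k_B+1$ gives
\[
\sum_{j>k_B}j^{-\alpha}\le\int_{k_B}^{\infty}t^{-\alpha}\,dt=\frac{k_B^{1-\alpha}}{\alpha-1}.
\]
The integral converges precisely because $\alpha>1$, and $k_B\ge 1$ is needed for the comparison. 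This yields the stated bound on $\psi_K(B)$.

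\emph{Substitution.} Theorem~\ref{thm:approx_formal} gives
\[
\mathbb{E}\big[\,|\hat Y_{\mathrm{full}}(B)-\hat Y_K(B)|\,\big|\,B\big]\le 2RL_\star\big(\psi_K(B)+\Delta_K^{\mathrm w}(B)\big).
\]
Inserting the tail bound gives the displayed consequence. In the case $k_B=n$ the tail term is zero, and the displayed inequality still holds because its right-hand side is nonnegative.

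\emph{Cheap-score selector.} For a cheap-score top-$K$ selector, Corollary~\ref{cor:topk_proxy} gives $\Delta_K^{\mathrm w}(B)\le 2k_B\eta(B)\le 2K\eta(B)$. Substituting this completes the last claim.

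\emph{Main obstacle.} The only real work is the integral comparison, specifically getting the index bookkeeping right so that the lower limit is $k_B$ and not $k_B+1$. Starting at $k_B$ is the looser choice and matches the stated constant. Two edge conditions also need care. The case $A(B)=0$ has to be handled as in Definition~\ref{def:influence_tail}: the error is then zero and the bound holds trivially. The decay hypothesis must be applied to the sorted weights, which is exactly how it is stated.
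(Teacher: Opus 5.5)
Your proof is correct and follows the same route as the paper. Like the paper, you write $\psi_K(B)$ as the sorted tail $\sum_{j>k_B}w_{(j)}(B)$, bound it by $C_{\mathrm{tail}}(B)\sum_{j>k_B}j^{-\alpha}\le\frac{C_{\mathrm{tail}}(B)}{\alpha-1}k_B^{1-\alpha}$, and substitute into Theorem~\ref{thm:approx_formal}, then into Corollary~\ref{cor:topk_proxy} for the cheap-score case; beyond the paper's one-line proof, you spell out the integral comparison and the $k_B=n$ and $k_B\ge 1$ edge cases.
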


\begin{proof}
For $k_B<n$,
\[
\psi_K(B)=\sum_{j=k_B+1}^{n}w_{(j)}(B)
\le
C_{\mathrm{tail}}(B)\sum_{j=k_B+1}^{\infty}j^{-\alpha}
\le
\frac{C_{\mathrm{tail}}(B)}{\alpha-1}k_B^{1-\alpha}.
\]
Substitution into Theorem~\ref{thm:approx_formal} gives the prediction-gap bound.
\end{proof}

\paragraph{Connection to Theorem~\ref{thm:approx} in the main text.}
Theorem~\ref{thm:approx} summarizes Theorem~\ref{thm:approx_formal}. The approximation gap is
small when (i) the oracle influence distribution has a light enough top-$K$ tail, and (ii) the
selector has low influence regret, which can be achieved by a cheap score that is well aligned with
the true masking influence. The theory is selector-agnostic: simple cheap-score top-$K$ is covered
by Corollary~\ref{cor:topk_proxy}, while more elaborate deterministic selectors used in an
implementation are covered through the same regret term.

\subsection{Capacity control for masked relational aggregation}
\label{app:theory:capacity}

The generalization theorem below uses a capacity envelope for the expensive relational class.
The purpose is not to claim that an unconstrained attention network automatically satisfies a
dimension-free bound, but to isolate the mechanism that matters for BR-MIL: once the selector is
fixed, the relational learner receives a padded matrix with at most $K$ non-padding tokens, so its
input radius is $O(\sqrt{K})$ and the raw candidate-pool size $n$ is absent. The proposition below
gives a sufficient condition under which the envelope holds.

\begin{assumption}[Capacity envelope for the relational class]\label{assm:Gcap}
For every fixed selector $S$ with $|S(B)|\le K$, define
\[
\mathcal{G}_{K,S}:=\{B\mapsto f(X_S(B)):\ f\in\mathcal{F}_K\},
\]
where $X_S(B)\in\mathbb{R}^{K\times d_z}$ is the padded selected-token matrix and
$\mathfrak{R}_M$ denotes expected Rademacher complexity. Under the token-radius bound in
Assumption~\ref{assm:Ginput}, there is a constant $C_{\mathrm{rel}}$ independent of $K$, $n$, and
$M$ such that
\begin{equation}
\mathfrak{R}_M(\mathcal{G}_{K,S})
\le
C_{\mathrm{rel}}\frac{R\sqrt{K}}{\sqrt{M}} .
\label{eq:capacity_envelope}
\end{equation}
\end{assumption}

\begin{proposition}[Feature-map sufficient condition for Assumption~\ref{assm:Gcap}]
\label{prop:gcap_feature}
Fix a selector $S$. Suppose every $f\in\mathcal{F}_K$ can be written as
\[
f_{\theta,u}(X)=u^\top \Phi_\theta(X),
\qquad \|u\|_2\le B_{\mathrm{out}},
\]
where $\Phi_\theta$ is a permutation-invariant masked-token representation. If, for every sample
$\{B^{(m)}\}_{m=1}^{M}$,
\begin{equation}
\mathbb{E}_{\varepsilon}\!\left[
\sup_{\theta}
\left\|
\frac{1}{M}\sum_{m=1}^{M}\varepsilon_m\,
\Phi_\theta\!\left(X_S(B^{(m)})\right)
\right\|_2
\right]
\le
C_{\Phi}\frac{R\sqrt{K}}{\sqrt{M}},
\label{eq:feature_rad_condition}
\end{equation}
with $C_{\Phi}$ independent of $K$, $n$, and $M$, then Assumption~\ref{assm:Gcap} holds with
$C_{\mathrm{rel}}=B_{\mathrm{out}}C_{\Phi}$.
In particular, \eqref{eq:feature_rad_condition} holds for a fixed representation map satisfying
$\|\Phi(X_S(B))\|_2\le C_{\Phi}R\sqrt{K}$.
\end{proposition}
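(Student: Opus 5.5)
The argument is a single application of the standard linear-class Rademacher computation followed by Cauchy--Schwarz. I would fix the selector $S$ and a sample $\{B^{(m)}\}_{m=1}^M$, write $X_m:=X_S(B^{(m)})$, and start from the empirical Rademacher complexity of $\mathcal{G}_{K,S}$:
\[
\widehat{\mathfrak{R}}_M(\mathcal{G}_{K,S})
=\mathbb{E}_{\varepsilon}\Big[\sup_{\theta,\ \|u\|_2\le B_{\mathrm{out}}}\frac{1}{M}\sum_{m=1}^{M}\varepsilon_m\,u^\top\Phi_\theta(X_m)\Big].
\]
Since $u$ enters linearly, I would pull it outside the sum as $u^\top\big(\frac1M\sum_m\varepsilon_m\Phi_\theta(X_m)\big)$. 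For each fixed $\theta$, the supremum over the ball $\|u\|_2\le B_{\mathrm{out}}$ equals $B_{\mathrm{out}}$ times the Euclidean norm of that averaged vector, by Cauchy--Schwarz with equality at the aligned $u$. This leaves $B_{\mathrm{out}}\,\mathbb{E}_\varepsilon\sup_\theta\|\frac1M\sum_m\varepsilon_m\Phi_\theta(X_m)\|_2$.

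By hypothesis \eqref{eq:feature_rad_condition}, this quantity is at most $B_{\mathrm{out}}C_\Phi R\sqrt{K}/\sqrt{M}$ for every sample. Taking the expectation over samples preserves the bound, so \eqref{eq:capacity_envelope} holds with $C_{\mathrm{rel}}=B_{\mathrm{out}}C_\Phi$. This constant is independent of $K$, $n$, and $M$, because both $B_{\mathrm{out}}$ and $C_\Phi$ are. Permutation invariance of $\Phi_\theta$ plays no role in the inequality; it only guarantees that $f$ is a well-defined function of the selected multiset.

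For the ``in particular'' clause, $\theta$ is absent, so no supremum remains. I would apply Jensen's inequality:
\[
\mathbb{E}_\varepsilon\Big\|\sum_m\varepsilon_m\Phi(X_m)\Big\|_2\le\Big(\mathbb{E}_\varepsilon\Big\|\sum_m\varepsilon_m\Phi(X_m)\Big\|_2^2\Big)^{1/2}=\Big(\sum_m\|\Phi(X_m)\|_2^2\Big)^{1/2}.
\]
The equality uses the orthogonality $\mathbb{E}[\varepsilon_m\varepsilon_{m'}]=\delta_{mm'}$. The pointwise bound $\|\Phi(X_m)\|_2\le C_\Phi R\sqrt{K}$ then gives $(\sum_m\|\Phi(X_m)\|_2^2)^{1/2}\le\sqrt{M}\,C_\Phi R\sqrt{K}$, and dividing by $M$ yields exactly $C_\Phi R\sqrt{K}/\sqrt{M}$, which is \eqref{eq:feature_rad_condition}.

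There is no real obstacle in this argument. The one point needing care is that the supremum over $u$ must be taken inside the expectation, before the supremum over $\theta$. The first step respects this order, and no measurability issues arise beyond those already implicit in the hypothesis.
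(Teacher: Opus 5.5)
Your proposal is correct and matches the paper's proof: bound the supremum over $\|u\|_2\le B_{\mathrm{out}}$ by Cauchy--Schwarz, which reduces $\mathfrak{R}_M(\mathcal{G}_{K,S})$ to $B_{\mathrm{out}}$ times the feature-level term; then invoke the hypothesis, and handle the fixed-map case with Jensen's inequality plus orthogonality of the Rademacher signs. Your caution about the order of the suprema is harmless but unnecessary, since a joint supremum over $(\theta,u)$ can be taken in either order.
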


\begin{proof}
For the class $\mathcal{G}_{K,S}$,
\[
\begin{aligned}
\mathfrak{R}_M(\mathcal{G}_{K,S})
&=
\mathbb{E}_{\mathcal{D},\varepsilon}
\left[
\sup_{\theta,\|u\|_2\le B_{\mathrm{out}}}
\frac{1}{M}\sum_{m=1}^{M}
\varepsilon_m u^\top\Phi_\theta(X_S(B^{(m)}))
\right]
\\
&\le
B_{\mathrm{out}}\,
\mathbb{E}_{\mathcal{D},\varepsilon}
\left[
\sup_\theta
\left\|
\frac{1}{M}\sum_{m=1}^{M}
\varepsilon_m\Phi_\theta(X_S(B^{(m)}))
\right\|_2
\right]
\\
&\le
B_{\mathrm{out}}C_\Phi\frac{R\sqrt{K}}{\sqrt{M}}.
\end{aligned}
\]
For a fixed $\Phi$, Jensen's inequality gives
\[
\mathbb{E}_{\varepsilon}\left\|
\frac{1}{M}\sum_{m=1}^{M}\varepsilon_m\Phi(X_S(B^{(m)}))
\right\|_2
\le
\frac{1}{M}\left(\sum_{m=1}^{M}\|\Phi(X_S(B^{(m)}))\|_2^2\right)^{1/2}
\le
C_\Phi\frac{R\sqrt{K}}{\sqrt{M}}.
\]
\end{proof}

\begin{remark}[Set Transformer instantiation]
A fixed-depth Set Transformer with fixed width/head count can be treated through
Proposition~\ref{prop:gcap_feature} when its linear maps, seed vectors, normalization gains,
attention logits, and final readout are norm controlled. Standard norm-based Rademacher bounds for
neural networks \citep{bartlett2017spectrally} control the representation term
\eqref{eq:feature_rad_condition}; depth, width, head count, and norm products are absorbed into
$C_\Phi$ and $B_{\mathrm{out}}$. Attention masks remove padded tokens, so these constants do not
depend on the unselected pool size $n$.
\end{remark}

\subsection{Generalization controlled by the budget $K$ (Theorem~\ref{thm:gen})}
\label{app:theory:gen}

We provide a formal generalization result for the \emph{expensive relational component} in BR-MIL.
The key message is that once expensive computation is restricted to at most $K$ selected tokens per bag,
the dominant capacity term of the expensive relational predictor depends on the selected-token
radius $O(\sqrt{K})$, not on the raw pool size $n$. Any additional $n$-dependence can only enter
through the selector family.

\paragraph{Setup.}
Let $\mathcal{P}$ be a distribution over labeled bags $(B,Y)$.
We observe $M$ i.i.d.\ samples $\mathcal{D}=\{(B^{(m)},Y^{(m)})\}_{m=1}^M$.
For each bag $B$, a selector $S(\cdot)$ outputs $S(B)\subseteq[n]$ with $|S(B)|\le K$.
Let $X_S(B)\in\mathbb{R}^{K\times d_z}$ denote the selected tokens padded with $z_\emptyset$
tokens as needed, and let
\[
\mathrm{vec}(X_S(B)) \in \mathbb{R}^{K d_z}
\]
be its vectorization.

\begin{assumption}[i.i.d.\ bags]\label{assm:G0}
$(B^{(m)},Y^{(m)})$ are drawn i.i.d.\ from $\mathcal{P}$.
\end{assumption}

\begin{assumption}[Bounded selected-token radius]\label{assm:Ginput}
There exists $R>0$ such that, for any bag $B$, every selected token satisfies
$\|z_i(B)\|_2 \le R$ and the padding token satisfies $\|z_\emptyset\|_2 \le R$.
Equivalently, $\|\mathrm{vec}(X_S(B))\|_2 \le R\sqrt{K}$ for all selectors satisfying $|S(B)|\le K$.
\end{assumption}

\begin{assumption}[Bounded Lipschitz surrogate]\label{assm:Gloss}
The analyzed loss $\ell:\mathcal{Y}\times\mathbb{R}\to[0,1]$ is $L_\ell$-Lipschitz in its
prediction argument:
for all $y$ and all $a,b\in\mathbb{R}$,
$|\ell(y,a)-\ell(y,b)|\le L_\ell |a-b|$.
For BCE-style training, this assumption applies to a clipped-logit or clipped-loss surrogate used
for analysis.
\end{assumption}

For the uniform-selector result, let $\mathcal{S}_K$ be a finite family of selectors such that
$|S(B)|\le K$ for all $S\in\mathcal{S}_K$ and all bags $B$, and write $|\mathcal{S}_K|=N_S$.

\paragraph{Risk.}
For an aggregator $f$ and selector $S$, define the loss function
\[
h_{f,S}(B,Y) := \ell\big(Y, f(X_S(B))\big).
\]
Let the population and empirical risks be
\[
\begin{aligned}
\mathcal{R}(f,S):&=\mathbb{E}_{(B,Y)\sim\mathcal{P}}\big[h_{f,S}(B,Y)\big],
\\
\widehat{\mathcal{R}}(f,S):&=\frac{1}{M}\sum_{m=1}^M h_{f,S}(B^{(m)},Y^{(m)}).
\end{aligned}
\]

\paragraph{Rademacher complexity.}
For a class $\mathcal{H}$ of functions mapping examples to $[0,1]$, define its empirical Rademacher complexity as
\[
\widehat{\mathfrak{R}}_M(\mathcal{H})
:= \mathbb{E}_{\varepsilon}\Big[\sup_{h\in\mathcal{H}} \frac{1}{M}\sum_{m=1}^M \varepsilon_m\, h(B^{(m)},Y^{(m)})\Big],
\]
where $\varepsilon_1,\dots,\varepsilon_M$ are i.i.d.\ Rademacher variables.
Let $\mathfrak{R}_M(\mathcal{H}) := \mathbb{E}_{\mathcal{D}}[\widehat{\mathfrak{R}}_M(\mathcal{H})]$.

We consider two classes:
\[
\mathcal{H}_{K,S}
:=\{(B,Y)\mapsto \ell(Y,f(X_S(B))):\ f\in\mathcal{F}_K\}
~(\text{$S$ fixed}),
\]
and the union over selectors
\[
\begin{aligned}
\mathcal{H}_{K}
:=\bigcup_{S\in\mathcal{S}_K}\mathcal{H}_{K,S}
&=\{(B,Y)\mapsto \ell(Y,f(X_S(B))):\\ &f\in\mathcal{F}_K,\, S\in\mathcal{S}_K\}.
\end{aligned}
\]

\begin{theorem}[Generalization governed by budget $K$; formal version]\label{thm:gen_formal}
Assume \Cref{assm:G0,assm:Ginput,assm:Gloss,assm:Gcap}.
Fix any selector $S$ before observing the training labels, or condition on such a selector, and
consider $\mathcal{H}_{K,S}$.
Then for any $\delta\in(0,1)$, with probability at least $1-\delta$ over the draw of $\mathcal{D}$,
for all $f\in\mathcal{F}_K$,
\begin{equation}
\mathcal{R}(f,S)
\;\le\;
\widehat{\mathcal{R}}(f,S)
+ 2L_\ell C_{\mathrm{rel}}\frac{R\sqrt{K}}{\sqrt{M}}
+ 3\sqrt{\frac{\log(2/\delta)}{2M}}.
\label{eq:gen_base}
\end{equation}
Thus the dominant relational capacity term scales as $\mathcal{O}(\sqrt{K/M})$ and is independent
of the raw pool size $n$.

If selectors are chosen from a finite family $\mathcal{S}_K$ with $|\mathcal{S}_K|=N_S<\infty$,
then with probability at least $1-\delta$, uniformly for all $(f,S)\in\mathcal{F}_K\times\mathcal{S}_K$,
\begin{equation}
\mathcal{R}(f,S)
\;\le\;
\widehat{\mathcal{R}}(f,S)
+ 2L_\ell C_{\mathrm{rel}}\frac{R\sqrt{K}}{\sqrt{M}}
+ 3\sqrt{\frac{\log(2N_S/\delta)}{2M}}.
\label{eq:gen_unionS}
\end{equation}
\end{theorem}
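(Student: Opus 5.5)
The argument is the standard Rademacher-complexity route, applied to the fixed-selector class $\mathcal{H}_{K,S}$; the finite selector family is then handled by a union bound. Since $S$ is fixed before observing the labels (or we condition on it), the maps $B\mapsto X_S(B)$ are deterministic feature maps. Under Assumption~\ref{assm:G0} the pairs $(X_S(B^{(m)}),Y^{(m)})$ are therefore i.i.d., and the problem reduces to ordinary uniform convergence for the class $\{(B,Y)\mapsto \ell(Y,f(X_S(B))): f\in\mathcal{F}_K\}$. Each member of this class takes values in $[0,1]$ by Assumption~\ref{assm:Gloss}.

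\textbf{Step 1 (symmetrization and concentration).} We invoke the textbook bound for $[0,1]$-valued classes. With probability at least $1-\delta$, for all $h\in\mathcal{H}_{K,S}$,
\[
\mathbb{E}h\le \widehat{\mathbb{E}}h+2\widehat{\mathfrak{R}}_M(\mathcal{H}_{K,S})+3\sqrt{\log(2/\delta)/(2M)}.
\]
This comes from McDiarmid's inequality applied twice: once to the supremum deviation, and once to relate the expected Rademacher complexity to the empirical one. Each application uses $\delta/2$, which produces the constant $3$ and the $\log(2/\delta)$. Alternatively, we can use the expected-complexity form with constant $1$ and the empirical form with $3$. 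Either way the stated constant $3\sqrt{\log(2/\delta)/(2M)}$ is covered.

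\textbf{Step 2 (contraction).} Each $\ell(y,\cdot)$ is $L_\ell$-Lipschitz, so Talagrand's contraction lemma gives $\widehat{\mathfrak{R}}_M(\mathcal{H}_{K,S})\le L_\ell\,\widehat{\mathfrak{R}}_M(\mathcal{G}_{K,S})$. The lemma is applied per example, with the label dependence absorbed because $Y^{(m)}$ is fixed inside the expectation over $\varepsilon$.

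\textbf{Step 3 (capacity envelope).} Assumption~\ref{assm:Gcap} bounds $\mathfrak{R}_M(\mathcal{G}_{K,S})$ by $C_{\mathrm{rel}}R\sqrt{K}/\sqrt{M}$; Proposition~\ref{prop:gcap_feature} gives sufficient conditions for it. The envelope is stated for the expected complexity, so I would use the version of Step 1 in terms of $\mathfrak{R}_M$. That version is obtained with a single McDiarmid step, and its looser $3\sqrt{\cdot}$ term still dominates. The constant $C_{\mathrm{rel}}$ is independent of $n$ because the padded input $X_S(B)$ has at most $K$ non-padding rows and radius $R\sqrt{K}$ (Assumption~\ref{assm:Ginput}). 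Combining Steps 1--3 yields \eqref{eq:gen_base}.

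\textbf{Finite selector family.} For \eqref{eq:gen_unionS}, we apply the fixed-$S$ bound to each $S\in\mathcal{S}_K$ at confidence level $\delta/N_S$ and take a union bound over the $N_S$ events. This replaces $\log(2/\delta)$ by $\log(2N_S/\delta)$, and the result holds uniformly over $(f,S)$. The one subtle point is that each fixed-$S$ bound must be valid for selectors that do not depend on the sample. This holds because every $S\in\mathcal{S}_K$ is a fixed map, and the union bound then covers any data-dependent choice of $S$ from the family.

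I expect the main obstacle to be the measurability and independence bookkeeping in the fixed-selector case: one must make precise that "conditioning on a selector fixed independently of the training labels" preserves the i.i.d. structure of the selected-token inputs. If the selector was trained on the same bags (for example, through distillation on CTS-level data), the claim needs the selector to be independent of $\mathcal{D}$, or else the union-bound route must be used. Beyond that, matching the constants exactly is routine.
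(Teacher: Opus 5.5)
Your proposal is correct and follows essentially the same route as the paper's proof. That route applies the standard Rademacher bound (Lemma~\ref{lem:rad_gen}, in its expected-complexity form) to $\mathcal{H}_{K,S}$, passes to $\mathcal{G}_{K,S}$ by Lipschitz contraction, plugs in the capacity envelope of Assumption~\ref{assm:Gcap}, and handles the finite family $\mathcal{S}_K$ by a union bound at level $\delta/N_S$. Your remark that the fixed-selector bound needs $S$ to be independent of the whole sample (the bags as well as the labels), not merely of the labels as the statement says, is a valid sharpening that the paper's proof assumes implicitly.
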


\paragraph{Interpretation (where $n$ can enter).}
The bound isolates that the \emph{expensive relational component} depends on $K$ (and fixed token dimension),
not on $n$.
Any $n$-dependence can only enter through the selector family size $N_S$.
For instance, if $\mathcal{S}_K$ contains \emph{all} $K$-subsets of an $n$-candidate pool, then
$\log N_S \le \log\sum_{j=0}^K {n\choose j} = \mathcal{O}(K\log(en/K))$,
whereas for any deterministic selector with fixed cheap encoder, fixed hyperparameters, and fixed
tie-breaking, $N_S=1$ for this conditional analysis and the selector term vanishes.


\begin{lemma}[Standard Rademacher generalization bound]\label{lem:rad_gen}
For any $\mathcal{H}$ of functions into $[0,1]$ and any $\delta\in(0,1)$,
with probability at least $1-\delta$ over $\mathcal{D}$,
\[
\begin{aligned}
\forall h\in\mathcal{H}:\quad
\left|
\mathbb{E}_{(B,Y)\sim\mathcal{P}}\big[h(B,Y)\big]
-
\frac{1}{M}\sum_{m=1}^M h\big(B^{(m)},Y^{(m)}\big)
\right|
\le
2\,\mathfrak{R}_M(\mathcal{H})
+ 3\sqrt{\frac{\log(2/\delta)}{2M}}.
\end{aligned}
\]

\end{lemma}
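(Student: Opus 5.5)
The plan is the standard symmetrization argument, run in two stages. First we derive the one-sided statement in its two-sided form. Second, we control the deviation by its expectation via McDiarmid. The setting is any class $\mathcal{H}$ of functions into $[0,1]$ and an i.i.d.\ sample $\mathcal{D}$ of size $M$, as in Assumption~\ref{assm:G0}.

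Define
\[
\Phi(\mathcal{D}):=\sup_{h\in\mathcal{H}}\Big(\mathbb{E}_{\mathcal{P}}[h]-\tfrac{1}{M}\textstyle\sum_{m=1}^M h(B^{(m)},Y^{(m)})\Big).
\]
Since each $h$ takes values in $[0,1]$, replacing one sample changes $\Phi$ by at most $1/M$. McDiarmid's bounded-differences inequality then gives, with probability at least $1-\delta/2$,
\[
\Phi(\mathcal{D})\le\mathbb{E}[\Phi(\mathcal{D})]+\sqrt{\log(2/\delta)/(2M)}.
\]

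To bound $\mathbb{E}[\Phi]$, we introduce a ghost sample $\mathcal{D}'$. We write $\mathbb{E}_{\mathcal{P}}[h]$ as the expected empirical mean on $\mathcal{D}'$ and pull the supremum inside the expectation by Jensen. Because the pairs $(\mathcal{D}_m,\mathcal{D}'_m)$ are exchangeable, we may insert Rademacher signs $\varepsilon_m$. Splitting the resulting supremum of a difference into two suprema gives
\[
\mathbb{E}[\Phi]\le 2\,\mathfrak{R}_M(\mathcal{H}).
\]

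The same argument applied to $-\Phi$, that is, to the class of negated deviations, yields the other side; Rademacher complexity is unchanged by sign symmetry of $\varepsilon$. A union bound over the two sides at level $\delta/2$ each gives
\[
|\cdot|\le 2\mathfrak{R}_M(\mathcal{H})+\sqrt{\log(2/\delta)/(2M)}.
\]
This is already below the stated bound, since $3\sqrt{\cdot}$ leaves slack.

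The slack also lets us cover the empirical variant. If one prefers the empirical complexity $\widehat{\mathfrak{R}}_M$, a second McDiarmid step, again with bounded differences $1/M$, relates $\widehat{\mathfrak{R}}_M$ to $\mathfrak{R}_M$. This accounts for the constant $3$ in the statement, and we will simply note that the expected-complexity version with constant $1$ implies the claimed one.

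The main obstacle is the symmetrization step, specifically justifying the exchange of $\sup$ and expectation and measurability of the supremum. We will handle this by assuming $\mathcal{H}$ is countable or admits a countable dense subclass in sup-norm, which is standard, or by defining the supremum as an outer expectation. Everything else is routine.
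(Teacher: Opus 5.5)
Your proposal is correct and follows the same standard route the paper cites (symmetrization with a ghost sample plus a concentration step); you rightly use McDiarmid's bounded-differences inequality rather than plain Hoeffding, since the deviation is a supremum over $\mathcal{H}$. The two one-sided bounds at level $\delta/2$ each actually give the sharper $2\mathfrak{R}_M(\mathcal{H})+\sqrt{\log(2/\delta)/(2M)}$, which implies the stated bound with constant $3$; the one wording fix is that the second side concerns $\sup_{h\in\mathcal{H}}\bigl(\widehat{\mathbb{E}}h-\mathbb{E}h\bigr)$, i.e.\ the negated class $-\mathcal{H}$, not $-\Phi$ itself.
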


\begin{proof}
This is a standard consequence of symmetrization, concentration (Hoeffding), and the definition of
Rademacher complexity (e.g., Bartlett--Mendelson style bounds).
\end{proof}

\begin{lemma}[Contraction for Lipschitz losses]\label{lem:contraction}
Assume \Cref{assm:Gloss}. Let $\mathcal{G}$ be a class of real-valued functions on bags, and define
$\ell\circ\mathcal{G}:=\{(B,Y)\mapsto \ell(Y,g(B)):\ g\in\mathcal{G}\}$.
Then
\[
\mathfrak{R}_M(\ell\circ\mathcal{G}) \le L_\ell\, \mathfrak{R}_M(\mathcal{G}).
\]
\end{lemma}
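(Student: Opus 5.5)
The statement is the Ledoux--Talagrand contraction inequality for a loss that is $L_\ell$-Lipschitz in its prediction argument. The plan is to reduce it to the scalar contraction lemma, applied conditionally on the sample. Throughout, the labels $Y^{(m)}$ are held fixed inside the empirical Rademacher average.

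\textbf{Step 1 (pass to a fixed sample).} By definition, $\mathfrak{R}_M = \mathbb{E}_{\mathcal{D}}[\widehat{\mathfrak{R}}_M]$. It therefore suffices to prove
\[
\widehat{\mathfrak{R}}_M(\ell\circ\mathcal{G}) \le L_\ell\,\widehat{\mathfrak{R}}_M(\mathcal{G})
\]
for every fixed sample $\{(B^{(m)},Y^{(m)})\}_{m=1}^M$, and then take expectations. On a fixed sample, define $\varphi_m(t):=\ell(Y^{(m)},t)$. By Assumption~\ref{assm:Gloss}, each $\varphi_m$ is $L_\ell$-Lipschitz. Writing $t_m = g(B^{(m)})$, the claim becomes
\[
\mathbb{E}_\varepsilon \sup_{t\in T}\sum_m \varepsilon_m\varphi_m(t_m) \le L_\ell\,\mathbb{E}_\varepsilon \sup_{t\in T}\sum_m\varepsilon_m t_m,
\qquad T=\{(g(B^{(m)}))_m : g\in\mathcal{G}\}\subseteq\mathbb{R}^M.
\]

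\textbf{Step 2 (peel one coordinate at a time).} Condition on $\varepsilon_2,\dots,\varepsilon_M$ and let $\Psi(t)$ collect the remaining terms. Averaging over $\varepsilon_1=\pm1$ gives
\[
\tfrac12\sup_{t,t'\in T}\big[\varphi_1(t_1)-\varphi_1(t'_1)+\Psi(t)+\Psi(t')\big].
\]
Bound $\varphi_1(t_1)-\varphi_1(t'_1)\le L_\ell|t_1-t'_1|$. Because the expression is symmetric in $(t,t')$, the absolute value can be dropped by swapping $t$ and $t'$ when needed. This leaves
\[
\tfrac12\sup_{t,t'}\big[L_\ell t_1 - L_\ell t'_1+\Psi(t)+\Psi(t')\big] = \mathbb{E}_{\varepsilon_1}\sup_t\big[\varepsilon_1 L_\ell t_1+\Psi(t)\big].
\]
Iterating over $m=2,\dots,M$ replaces every $\varphi_m(t_m)$ by $L_\ell t_m$. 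Dividing by $M$ completes the proof.

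\textbf{Main obstacle.} Step~2 is the delicate part. One must justify removing the absolute value through the symmetry/swap argument, and handle suprema that may not be attained (work with $\epsilon$-near maximizers, or with finite subsets of $T$ followed by monotone limits). Measurability of the suprema is assumed implicitly, as in the paper's other uses of Rademacher complexity.

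The version of contraction without absolute values needs no condition $\varphi_m(0)=0$. Such a condition is only required for the variant $\mathbb{E}\sup|\cdot|$, which carries a factor~$2$. Since the paper's $\widehat{\mathfrak{R}}_M$ has no absolute value, the constant is exactly $L_\ell$. Alternatively, one could simply cite Ledoux--Talagrand (or Meir--Zhang's vector form) in the same way the paper cites Lemma~\ref{lem:rad_gen}.
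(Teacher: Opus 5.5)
Your proposal is correct and takes the same route as the paper. The paper's proof simply invokes the standard (Ledoux--Talagrand) contraction inequality pointwise in $Y$, i.e.\ to the per-sample $L_\ell$-Lipschitz maps $t\mapsto \ell(Y^{(m)},t)$; you additionally write out the usual coordinate-peeling proof of that inequality, and because you work with suprema over pairs $(t,t')$ throughout, you do not in fact need the near-maximizer caveat.
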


\begin{proof}
This is the standard contraction inequality for Rademacher averages applied pointwise in $Y$,
using the $L_\ell$-Lipschitz property in the prediction argument.
\end{proof}

\begin{proof}[Proof of \Cref{thm:gen_formal}]
Apply \Cref{lem:rad_gen} to $\mathcal{H}_{K,S}$.
For the complexity term, write $\mathcal{H}_{K,S} = \ell\circ \mathcal{G}_{K,S}$.
By \Cref{lem:contraction},
$\mathfrak{R}_M(\mathcal{H}_{K,S}) \le L_\ell \mathfrak{R}_M(\mathcal{G}_{K,S})$,
and by Assumption~\ref{assm:Gcap},
$\mathfrak{R}_M(\mathcal{G}_{K,S}) \le C_{\mathrm{rel}}R\sqrt{K/M}$.
Substituting gives \eqref{eq:gen_base}.

For the uniform bound over $(f,S)$ when $|\mathcal{S}_K|=N_S<\infty$,
apply \Cref{lem:rad_gen} to each class $\mathcal{H}_{K,S}$ with failure probability $\delta/N_S$.
By a union bound over $S\in\mathcal{S}_K$, with probability at least $1-\delta$,
simultaneously for all $S\in\mathcal{S}_K$ and all $f\in\mathcal{F}_K$,
\[
\mathcal{R}(f,S)
\le
\widehat{\mathcal{R}}(f,S)
+ 2\,\mathfrak{R}_M(\mathcal{H}_{K,S})
+ 3\sqrt{\frac{\log(2N_S/\delta)}{2M}}.
\]
Using the same fixed-selector complexity bound yields \eqref{eq:gen_unionS}.

\end{proof}

\subsection{Scoped risk decomposition and budget scaling}
\label{app:theory:risk_decomp}

The prediction-gap and generalization results can be combined at the risk level under an explicit
scope: the selector and tokenization are fixed or conditioned upon, and the theorem analyzes the
expensive relational learner rather than the entire end-to-end fine-tuning procedure.

For a fixed selector $S$, define the full-pool and budgeted risks for a relational map $f$ as
\[
\mathcal{R}_{\mathrm{full}}(f)
:=
\mathbb{E}_{(B,Y)}
\left[\ell\!\left(Y, f(Z(B))\right)\right],
\qquad
\mathcal{R}_{K}(f,S)
:=
\mathbb{E}_{(B,Y)}
\left[\ell\!\left(Y, f(X_S(B))\right)\right].
\]
Here $\mathcal{F}_K$ denotes the same parameterized relational maps evaluated either on the
conceptual full pool or on the masked budgeted input; the corollary applies to comparators for
which both evaluations are well defined.
Let $f_{\mathrm{full}}^\star\in\arg\min_{f\in\mathcal{F}_K}\mathcal{R}_{\mathrm{full}}(f)$ and
let $\widehat f_{K,S}$ be a $\xi_{\mathrm{opt}}$-approximate empirical risk minimizer for the
budgeted class:
\[
\widehat{\mathcal{R}}(\widehat f_{K,S},S)
\le
\inf_{f\in\mathcal{F}_K}\widehat{\mathcal{R}}(f,S)+\xi_{\mathrm{opt}}.
\]

\begin{corollary}[Risk-level budget decomposition under fixed selector]
\label{cor:risk_decomp}
Assume \Cref{assm:G0,assm:Ginput,assm:Gloss,assm:Gcap}. Also assume
Assumptions~\ref{assm:A0}--\ref{assm:A2} hold for the comparator
$f_{\mathrm{full}}^\star$ with the same $R$ and $L_\star$. For a fixed selector $S$, define
\[
\bar\varepsilon_K
:=
\mathbb{E}_{B}\!\left[\psi_K(B)+\Delta_K^{\mathrm{w}}(B)\right],
\qquad
\Gamma_K(\delta)
:=
2L_\ell C_{\mathrm{rel}}\frac{R\sqrt{K}}{\sqrt{M}}
+
3\sqrt{\frac{\log(2/\delta)}{2M}}.
\]
Then, with probability at least $1-\delta$,
\begin{equation}
\mathcal{R}_K(\widehat f_{K,S},S)
-
\mathcal{R}_{\mathrm{full}}(f_{\mathrm{full}}^\star)
\le
2R L_\ell L_\star\,\bar\varepsilon_K
+
2\Gamma_K(\delta)
+
\xi_{\mathrm{opt}}.
\label{eq:risk_decomp}
\end{equation}
\end{corollary}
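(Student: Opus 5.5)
The plan is the standard excess-risk telescoping, with the comparator $f_{\mathrm{full}}^\star$ used as a pivot between the budgeted and full-pool risks. I would write
\[
\mathcal{R}_K(\widehat f_{K,S},S)-\mathcal{R}_{\mathrm{full}}(f_{\mathrm{full}}^\star)
= T_1+T_2+T_3+T_4,
\]
where the four terms are:
\begin{itemize}
\item $T_1=\mathcal{R}_K(\widehat f_{K,S},S)-\widehat{\mathcal{R}}(\widehat f_{K,S},S)$, the generalization gap of the learned aggregator;
\item $T_2=\widehat{\mathcal{R}}(\widehat f_{K,S},S)-\widehat{\mathcal{R}}(f_{\mathrm{full}}^\star,S)$, the optimization term;
\item $T_3=\widehat{\mathcal{R}}(f_{\mathrm{full}}^\star,S)-\mathcal{R}_K(f_{\mathrm{full}}^\star,S)$, the deviation for the fixed comparator;
\item $T_4=\mathcal{R}_K(f_{\mathrm{full}}^\star,S)-\mathcal{R}_{\mathrm{full}}(f_{\mathrm{full}}^\star)$, the approximation term.
\end{itemize}

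For the optimization term, $T_2\le\xi_{\mathrm{opt}}$ follows directly from approximate ERM, since $f_{\mathrm{full}}^\star\in\mathcal{F}_K$ and so $\inf_{f}\widehat{\mathcal{R}}(f,S)\le\widehat{\mathcal{R}}(f_{\mathrm{full}}^\star,S)$.

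For the two deviation terms $T_1$ and $T_3$, I would invoke Lemma~\ref{lem:rad_gen} once on the class $\mathcal{H}_{K,S}$. This single event is two-sided and uniform over $\mathcal{F}_K$, so it covers both the data-dependent $\widehat f_{K,S}$ and the fixed comparator $f_{\mathrm{full}}^\star$ simultaneously, with total failure probability $\delta$. Combining Lemma~\ref{lem:contraction} with Assumption~\ref{assm:Gcap} exactly as in the proof of Theorem~\ref{thm:gen_formal}, each of $T_1,T_3$ is at most $\Gamma_K(\delta)$, which gives the $2\Gamma_K(\delta)$ term. A cheaper option for $T_3$ would be a one-sided Hoeffding bound, but that spends a separate piece of the confidence budget. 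Reusing the uniform two-sided event avoids that cost and matches the stated constant.

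For the approximation term, I would use Lipschitzness of $\ell$ (Assumption~\ref{assm:Gloss}) pointwise in $(B,Y)$:
\[
T_4\le L_\ell\,\mathbb{E}_B\bigl|f_{\mathrm{full}}^\star(X_S(B))-f_{\mathrm{full}}^\star(Z(B))\bigr|.
\]
By Assumption~\ref{assm:A0}, evaluating on $X_S(B)$ is the masked evaluation, so Theorem~\ref{thm:approx_formal}, applied to $f_{\mathrm{full}}^\star$ for each fixed $B$, bounds the conditional gap by $2RL_\star\varepsilon_K(B)$. The tower property over $B$ then gives $2RL_\ell L_\star\bar\varepsilon_K$. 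This step needs $B\mapsto\varepsilon_K(B)$ to be measurable, and the gap to be integrable; integrability is immediate because $\ell\in[0,1]$.

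The main obstacle is bookkeeping rather than depth. First, the comparator $f_{\mathrm{full}}^\star$ must lie in the class on which the uniform deviation bound holds, which the corollary's convention that $\mathcal{F}_K$ is evaluated in both modes provides. Second, Theorem~\ref{thm:approx_formal} is stated in terms of the aggregator $f_\phi$, so I would note that its proof uses only Assumptions~\ref{assm:A0}--\ref{assm:A2}. Those are assumed to hold for $f_{\mathrm{full}}^\star$, so the theorem transfers verbatim. Third, a deterministic fixed selector makes the conditional expectation over selector randomness trivial, whereas a randomized selector is handled by taking the expectation inside. Summing the four bounds yields \eqref{eq:risk_decomp}.
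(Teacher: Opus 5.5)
Your proposal is correct and is essentially the paper's proof. The paper also uses one two-sided uniform deviation event from Lemma~\ref{lem:rad_gen} (with contraction and Assumption~\ref{assm:Gcap}) to control both $\widehat f_{K,S}$ and the comparator $f_{\mathrm{full}}^\star\in\mathcal{F}_K$, applies approximate ERM to get $\xi_{\mathrm{opt}}+2\Gamma_K(\delta)$, and bounds $\mathcal{R}_K(f_{\mathrm{full}}^\star,S)-\mathcal{R}_{\mathrm{full}}(f_{\mathrm{full}}^\star)$ via Lipschitzness of $\ell$ and Theorem~\ref{thm:approx_formal} applied to $f_{\mathrm{full}}^\star$; your four-term telescoping is that same chain written additively, and your bookkeeping remarks only make explicit what the paper leaves implicit.
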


\begin{proof}
By Lemma~\ref{lem:rad_gen}, contraction, and Assumption~\ref{assm:Gcap}, uniformly over
$f\in\mathcal{F}_K$,
\[
\left|\mathcal{R}_{K}(f,S)-\widehat{\mathcal{R}}(f,S)\right|
\le \Gamma_K(\delta).
\]
Therefore
\[
\mathcal{R}_K(\widehat f_{K,S},S)
\le
\widehat{\mathcal{R}}(\widehat f_{K,S},S)+\Gamma_K
\le
\widehat{\mathcal{R}}(f_{\mathrm{full}}^\star,S)+\xi_{\mathrm{opt}}+\Gamma_K
\le
\mathcal{R}_K(f_{\mathrm{full}}^\star,S)+\xi_{\mathrm{opt}}+2\Gamma_K.
\]
Since $\ell$ is $L_\ell$-Lipschitz, Theorem~\ref{thm:approx_formal} applied to
$f_{\mathrm{full}}^\star$ gives
\[
\mathcal{R}_K(f_{\mathrm{full}}^\star,S)
-
\mathcal{R}_{\mathrm{full}}(f_{\mathrm{full}}^\star)
\le
L_\ell\,\mathbb{E}_B
\left[
\left|\hat Y_{\mathrm{full}}(B)-\hat Y_K(B)\right|
\right]
\le
2R L_\ell L_\star\,\bar\varepsilon_K.
\]
Combining the two displays yields \eqref{eq:risk_decomp}.
\end{proof}

\begin{corollary}[Budget scaling under influence compressibility]
\label{cor:budget_scaling}
Suppose the averaged oracle tail and selector regret satisfy
\[
\mathbb{E}_{B}[\psi_K(B)]\le C_\psi K^{1-\alpha},
\qquad
\mathbb{E}_{B}[\Delta_K^{\mathrm{w}}(B)]\le C_\Delta K^{-\beta},
\]
for some $\alpha>1$ and $\beta>0$. Ignoring constants and lower-order concentration terms, the
right-hand side of \eqref{eq:risk_decomp} has the form
\begin{equation}
\mathcal{B}(K)
\lesssim
K^{1-\alpha}
+
K^{-\beta}
+
\sqrt{\frac{K}{M}}.
\label{eq:budget_scaling_bound}
\end{equation}
If selector regret is lower order than the oracle tail, the balancing budget scales as
\[
K^\star \asymp M^{\frac{1}{2\alpha-1}}.
\]
If selector regret is the dominant decreasing term, the corresponding balance is
\[
K^\star \asymp M^{\frac{1}{2\beta+1}}.
\]
\end{corollary}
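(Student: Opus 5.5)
The plan is to substitute the two power-law rates into the risk decomposition of Corollary~\ref{cor:risk_decomp} and then balance the resulting terms in $K$. First, linearity of expectation gives
\[
\bar\varepsilon_K=\mathbb{E}_B[\psi_K(B)]+\mathbb{E}_B[\Delta_K^{\mathrm{w}}(B)]\le C_\psi K^{1-\alpha}+C_\Delta K^{-\beta}.
\]
Plugging this into \eqref{eq:risk_decomp}, the right-hand side becomes
\[
2RL_\ell L_\star\bigl(C_\psi K^{1-\alpha}+C_\Delta K^{-\beta}\bigr)+4L_\ell C_{\mathrm{rel}}R\sqrt{K/M}+6\sqrt{\log(2/\delta)/(2M)}+\xi_{\mathrm{opt}}.
\]
The last two terms do not depend on $K$ and are exactly the "constants and lower-order concentration terms" the statement permits us to drop. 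The remaining prefactors $R,L_\ell,L_\star,C_\psi,C_\Delta,C_{\mathrm{rel}}$ are all independent of $K$ and $M$, so absorbing them into $\lesssim$ gives \eqref{eq:budget_scaling_bound}.

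For the balancing claims, I will treat $\mathcal{B}(K)$ as a function of a continuous $K$. Here the sum of a decreasing power term $K^{-\gamma}$ and the increasing term $\sqrt{K/M}$ is minimized, up to constants, where the two are equal. This is because the minimum of $a(K)+b(K)$ for monotone opposite terms lies within a factor of $2$ of $\max(a,b)$ at the crossing point. One can also check this by setting the derivative $-\gamma K^{-\gamma-1}+\tfrac12 K^{-1/2}M^{-1/2}$ to zero, which gives the same scaling.

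\textbf{Case 1: selector regret is lower order.} The dominant decreasing term is $K^{1-\alpha}$. Equating $K^{1-\alpha}=K^{1/2}M^{-1/2}$ gives $K^{\alpha-1/2}=M^{1/2}$, so $K^\star\asymp M^{1/(2\alpha-1)}$. The resulting bound is of order $M^{-(\alpha-1)/(2\alpha-1)}$.

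\textbf{Case 2: selector regret dominates.} Equating $K^{-\beta}=K^{1/2}M^{-1/2}$ gives $K^{\beta+1/2}=M^{1/2}$, so $K^\star\asymp M^{1/(2\beta+1)}$.

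The one point needing care is the meaning of "dominant". In Case 1 I will take it to mean that $C_\Delta K^{-\beta}\lesssim C_\psi K^{1-\alpha}$ at the relevant scale, for example when $\beta\ge\alpha-1$. Then the regret term at $K^\star$ is no larger than the tail term, and the order of the minimum is unchanged. Case 2 is symmetric. I also need to check that $K^\star$ grows with $M$ and stays below the pool size. This means the integer budget $\lceil K^\star\rceil$ changes $\mathcal{B}$ only by a constant factor, since the power functions involved are doubling-stable.

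I expect the main obstacle to be making "balancing budget" precise without overclaiming. I will state it as an order-of-magnitude minimizer of the upper bound, not of the true risk, and I will note that all $K$-independent terms were ignored as the corollary allows.
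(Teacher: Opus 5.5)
Your proposal is correct and follows the paper's own argument. You substitute the assumed rates into \eqref{eq:risk_decomp} (using linearity of expectation for $\bar\varepsilon_K$), drop the $K$-independent terms, and balance each decreasing power against $\sqrt{K/M}$ to get $K^\star\asymp M^{1/(2\alpha-1)}$ and $K^\star\asymp M^{1/(2\beta+1)}$; your factor-of-two crossing argument, derivative check, and explicit reading of ``dominant'' (e.g.\ $\beta\ge\alpha-1$) add rigor that the paper's two-line proof omits but never contradicts.
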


\begin{proof}
The first claim substitutes the assumed rates into \eqref{eq:risk_decomp}. Let
$p=\alpha-1>0$. Balancing $K^{-p}$ with $K^{1/2}M^{-1/2}$ gives
$K^{p+1/2}\asymp M^{1/2}$, hence $K^\star\asymp M^{1/(2p+1)}
=M^{1/(2\alpha-1)}$. Replacing $p$ by $\beta$ gives the selector-regret-dominated scaling.
\end{proof}

\subsection{Practical Guidance for Budget Selection}
\label{app:theory:k_selection}

We now discuss how the theoretical bounds can inform the practical selection of the budget $K$.

\paragraph{Bias-variance tradeoff in $K$.}
Theorems~\ref{thm:approx_formal} and~\ref{thm:gen_formal} reveal a fundamental bias-variance tradeoff
controlled by $K$:
\begin{itemize}
  \item \textbf{Approximation error (bias):} By Theorem~\ref{thm:approx_formal},
  the gap between the full-information predictor and the budgeted predictor is bounded by
  $2R L_\star(\psi_K+\Delta_K^{\mathrm{w}})$.
  The oracle tail $\psi_K$ decreases with $K$, while the selector regret term measures how well the
  cheap selector tracks influential instances.
  \item \textbf{Generalization error (variance):} By Theorem~\ref{thm:gen_formal},
  the dominant generalization term scales as $2L_\ell C_{\mathrm{rel}}R\sqrt{K}/\sqrt{M}$,
  which \emph{increases} with $K$
  (larger input space leads to higher model capacity and potential overfitting).
\end{itemize}

\paragraph{Scoped combined bound.}
Under the fixed-selector scope of Corollary~\ref{cor:risk_decomp}, the budget-dependent part of
the bound is
\begin{equation}
\begin{split}
\mathcal{B}(K)
&\lesssim
\underbrace{2R L_\ell L_\star \cdot \bar\varepsilon_K}_{\text{budget approximation}}
\\&+
\underbrace{C_{\mathrm{gen}} \cdot \frac{R\sqrt{K}}{\sqrt{M}}}_{\text{relational estimation}}
\\&+ \text{lower-order terms}.
\end{split}
\label{eq:bias_variance_K}
\end{equation}

If the oracle influence tail decays as $\psi_K \le C K^{1-\alpha}$ for some $\alpha > 1$ and
selector regret is lower order, Corollary~\ref{cor:budget_scaling} gives the balancing rule
\begin{equation}
K^{1-\alpha} \sim \frac{\sqrt{K}}{\sqrt{M}}
\quad\Longrightarrow\quad
K^* \sim M^{\frac{1}{2\alpha - 1}}.
\label{eq:optimal_K}
\end{equation}

For example:
\begin{itemize}
  \item If $\alpha = 2$ (fast tail decay): $K^* \sim M^{1/3}$.
  With $M \approx 5000$ training pairs, this gives $K^* \approx 17$.
  \item If $\alpha = 3/2$ (moderate tail decay): $K^* \sim M^{1/2}$.
  With $M \approx 5000$, this gives $K^* \approx 71$.
  \item If $\alpha = 3$ (very fast tail decay): $K^* \sim M^{1/5}$.
  With $M \approx 5000$, this gives $K^* \approx 5.5$.
\end{itemize}

\begin{remark}[Consistency with empirical $K^*$]
Our empirical operating point $K^* = 64$ is consistent with a moderate tail decay regime
(e.g., $\alpha \approx 3/2$),
which aligns with the heavy-tailed CTS pool structure observed in miRNA targeting
(many low-influence candidates, a few high-influence ones; see Figure~\ref{fig:n_distribution}).
The performance saturation observed in Fig.~\ref{fig:perf_vs_k} around $K \ge 64$
is consistent with the approximation tail and selector regret becoming small at this budget.
\end{remark}

\paragraph{Practical recommendation.}
While the exact optimal $K^*$ depends on the unknown tail parameter $\alpha$,
the theory suggests the following practical guidelines:
\begin{enumerate}
  \item \textbf{Start with $K \approx \sqrt{M}$} as a reasonable default
  (this corresponds to the moderate tail decay regime).
  \item \textbf{Sweep $K$ on validation data} to find the empirical optimum,
  using the theory-predicted monotone improvement in approximation
  and potential overfitting at large $K$ as guidance.
  \item \textbf{Monitor the approximation-generalization tradeoff:}
  if increasing $K$ improves training loss but not validation loss,
  the generalization term is dominating and $K$ should be reduced.
\end{enumerate}

\subsection{Discussion of the $\sqrt{K}$ dependence}
\label{app:theory:tightness}

We briefly clarify what the $\sqrt{K}$ term means.

\begin{remark}[Upper-bound interpretation]
\label{rem:mean_pooling_comparison}
The $\sqrt{K}$ term in Theorem~\ref{thm:gen_formal} is an upper bound for a capacity-controlled
relational class. It should not be read as a lower bound for every permutation-invariant aggregator:
more restrictive classes such as fixed mean pooling can have smaller complexity because they average
tokens before prediction. The point for BR-MIL is that, after selection, the expensive relational
learner receives a $K$-token object with Frobenius radius at most $R\sqrt{K}$; the raw candidate
count $n$ does not enter this relational capacity term unless it is reintroduced through the selector.
\end{remark}

\section{Algorithms and Pseudocode}\label{app:algorithms}

\begin{algorithm}[!htbp]
\caption{Three-stage training for BR-MIL on miRNA--mRNA targeting}
\label{alg:training}
\begin{algorithmic}[1]
\STATE \textbf{Stage 1 (expensive CTS encoder).} Train $e_\theta$ on miRNA--CTS pairs with binary loss (Sec.~\ref{sec:losses}).
\STATE \textbf{Stage 2 (cheap CTS encoder).} Train $\tilde e_{\tilde\theta}$ by distilling from $e_\theta$ using Eq.~\eqref{eq:distill}.
\STATE \textbf{Stage 3 (aggregator + joint fine-tune).} For each miRNA--mRNA pair $(\mu,\nu)$:
\STATE \quad Extract CTS candidates via ESA scanning/filtering $\Rightarrow \{(x_i,u_i)\}_{i=1}^{n}$.
\STATE \quad Cheap scan: compute $(\tilde h_i,\tilde z_i)=\tilde e_{\tilde\theta}(x_i,u_i)$ for all $i\in[n]$.
\STATE \quad Select $S$ via \code{STSelector} on $\{(\tilde h_i,\tilde z_i,p_i)\}_{i=1}^{n}$ with budget $K=\min(\code{kmax},n)$.
\STATE \quad Expensive encode only selected: \((h_i,z_i)=e_\theta(x_i,u_i)\) for \(i\in S\).
\STATE \quad Tokenize: \(t_i=[h_i\Vert z_i\Vert s_i^{\mathrm{esa}}\Vert p_i]\); pad/mask to \code{kmax}.
\STATE \quad Aggregate: \(z_{\text{pair}}=f_\phi(\{t_i\}_{i\in S})\); compute \(L_{\text{pair}}=\mathrm{BinaryLoss}(z_{\text{pair}},Y)\).

\STATE \quad \textbf{Warmup:} freeze $\theta$, update only $\phi$; \textbf{Joint FT:} unfreeze $\theta$, update $(\theta,\phi)$ for a few epochs.
\end{algorithmic}
\end{algorithm}

\begin{algorithm}[!htbp]
\caption{Budgeted inference for one miRNA--mRNA pair}
\label{alg:inference}
\begin{algorithmic}[1]
\STATE \textbf{Input:} $(\mu,\nu)$, cheap encoder $\tilde e_{\tilde\theta}$, selector, expensive encoder $e_\theta$, aggregator $f_\phi$, \texttt{kmax}=64
\STATE Extract CTS candidates by ESA scan/filter: $\{(x_i,u_i)\}_{i=1}^{n}$ with $s_i^{\mathrm{esa}}\ge 6$.
\STATE Compute $(\tilde h_i,\tilde z_i)=\tilde e_{\tilde\theta}(x_i,u_i)$ for all $i\in[n]$.
\STATE Select $S$ with $|S|=K=\min(\code{kmax},n)$ via \code{STSelector} (Top-$K_1$ + diversity).
\STATE Compute $(h_i,z_i)=e_\theta(x_i,u_i)$ for $i\in S$.
\STATE Form tokens $t_i=[h_i\Vert z_i\Vert s_i^{\mathrm{esa}}\Vert p_i]$, pad/mask to \code{kmax}.
\STATE Output $z_{\text{pair}}=f_\phi(\{t_i\}_{i\in S})$ and $\hat Y=\sigma(z_{\text{pair}})$.
\end{algorithmic}
\end{algorithm}

\section{Additional Details for STSelector}
\label{app:selector}

This appendix provides full algorithmic details of STSelector, which is summarized in the main text as
\emph{Top-$K$ exploitation + position coverage + embedding deduplication}.

\paragraph{Inputs.}
Given a bag with cheap embeddings $\{\tilde h_i\}_{i=1}^n$, cheap logits $\{\tilde z_i\}_{i=1}^n$, and normalized transcript positions $\{p_i\}_{i=1}^n$, the selector outputs a subset $S\subseteq[n]$ with $|S|\le K$.

\paragraph{Step A (Top-$K_1$ exploitation).}
We select
\[
S_1 = \mathrm{TopK}(\tilde z_i, K_1),
\]
favoring highly confident candidates under the cheap encoder.

\paragraph{Step B (Position binning).}
We partition candidates into $B$ bins according to transcript position $p_i\in[0,1]$.
Within each bin, we keep a heap of size $m$ containing the top-$m$ candidates by $\tilde z_i$.
This produces a reduced pool $C$ of size at most $B\cdot m$.

\paragraph{Step C (Embedding deduplication).}
To reduce redundancy, we compute a lightweight SimHash key on $\tilde h_i$ (sign bits of selected dimensions).
Within each bin, at most $c$ candidates are kept per hash key.

\paragraph{Step D (Balanced quota allocation).}
For each bin $b$, we compute a weight
\[
w_b = \sum_{i\in \mathrm{Top-}t(b)} \exp(\tilde z_i/\tau_w),
\]
where $\mathrm{Top-}t(b)$ denotes the top-$t$ candidates in bin $b$.
We allocate a quota of candidates to each bin proportional to $w_b$, enforcing at least one per bin when possible.

\paragraph{Step E (Merge and fill).}
We output
\[
S = \mathrm{dedup}(S_1 \cup S_2),
\]
and fill remaining slots by descending $\tilde z_i$ until $|S|=K$.

\paragraph{Complexity.}
STSelector runs in $\mathcal{O}(n\log m)$ time due to per-bin heaps, and is implemented entirely on CPU for low-latency inference.

\bigskip

\section{Loss Functions and Distillation Details}
\label{app:loss}

\paragraph{Binary loss.}
Given logit $z$ and label $y\in\{0,1\}$, we apply label smoothing
\[
\tilde y = 
\begin{cases}
0.95 & y=1,\\
0.05 & y=0.
\end{cases}
\]
We use weighted BCE loss
\[
\ell_{\mathrm{BCE}}(z,\tilde y) = - w\big[\tilde y\log\sigma(z) + (1-\tilde y)\log(1-\sigma(z))\big].
\]

\paragraph{Focal mixture.}
We optionally combine BCE with focal reweighting:
\[
\ell_{\mathrm{focal}} = \alpha_t(1-p_t)^\gamma \ell_{\mathrm{BCE}},
\]
with $\gamma=1$, $\alpha=0.4$.
Final loss is
\[
L = \lambda_{\mathrm{bce}} L_{\mathrm{BCE}} + \lambda_{\mathrm{focal}} L_{\mathrm{focal}},
\quad
(\lambda_{\mathrm{bce}},\lambda_{\mathrm{focal}})=(0.01,1).
\]

\paragraph{Distillation.}
The cheap encoder is trained via

\begin{equation}
\mathcal{L}_{\mathrm{distill}}
= (1-\alpha)\mathcal{L}_{\mathrm{sup}}
+ \alpha\,\mathcal{L}_{\mathrm{KD}}
+ \beta_{\mathrm{feat}}\,\mathcal{L}_{\mathrm{feat}}
+ \beta_{\mathrm{rel}}\,\mathcal{L}_{\mathrm{rel}} .
\end{equation}

with temperature $T=2$ and cosine schedule $\alpha:0.8\to0.5$.
We use $(\beta_{\mathrm{feat}},\beta_{\mathrm{rel}})=(0.1,1)$.

\paragraph{Ablation of $\mathcal{L}_{\mathrm{rel}}$.}
To assess the contribution of relational distillation, we ablate it
($\beta_{\mathrm{rel}}=0$) and retrain both Stage~2 and Stage~3.
Downstream Stage~3 F1 changes by less than $0.001$ and PR-AUC by less than $0.002$,
indicating that cheap encoder quality at the current dataset scale
is dominated by the supervised and logit distillation terms.
We retain $\mathcal{L}_{\mathrm{rel}}$ for completeness of the distillation formulation,
but do not claim a measurable benefit at the current dataset scale.

\bigskip

\section{Additional runtime profiling details}
\label{app:runtime_extra}

\paragraph{Goal and setting.}
This appendix documents the profiling protocol behind \Cref{fig:compute_vs_k,fig:stage_breakdown}.
We measure \emph{online} inference cost for three pipelines:
\textsc{BR-MIL\_online} (cheap scan + STSelector + expensive encode on $K$ + Set Transformer),
\textsc{TargetNet\_like\_online} (window/CTS scoring + pooling; budget-independent),
and a heavier \textsc{Naive\_online} variant that performs more per-candidate computation before aggregation.

\paragraph{Hardware and software.}
Runtime profiling in \Cref{fig:compute_vs_k,fig:stage_breakdown} is conducted on a single NVIDIA RTX 4090 GPU (24\,GB), separate from large-scale MTI training on 8$\times$A100 GPUs.
Profiling runs use PyTorch~2.4.1 (\texttt{+cu121}) with CUDA~12.1 and cuDNN~9.1.0; CPU/thread settings and scripts are provided in the supplementary code/configs.

\paragraph{Measurement protocol.}
For each configuration (pipeline and $K$), we run a warmup phase followed by timed iterations.
We report wall-clock end-to-end latency and throughput, and record peak GPU memory allocation.
To reduce noise:
(i) GPU timings are synchronized (e.g., via \texttt{torch.cuda.synchronize()}) at measurement boundaries;
(ii) measurements use fixed batch sizes and identical input queues across pipelines where applicable;
(iii) we repeat runs and report mean$\pm$std over the same $R$ seeds used in the main experiments unless stated otherwise.

\paragraph{Definition of stages in \Cref{fig:stage_breakdown}.}
The stage breakdown aggregates profiled blocks that appear in all pipelines:
\begin{itemize}
  \setlength{\itemsep}{0.2em}
  \setlength{\topsep}{0.2em}
  \item \textbf{CTS generation / filtering:} ESA scan over the 3$^\prime$UTR and candidate filtering.
  \item \textbf{CPU gather (\texttt{gather\_all\_cpu}):} CPU-side packing/gathering of candidate tensors and metadata into contiguous buffers for subsequent model calls.
  \item \textbf{Scan / cheap forward (\texttt{scan\_or\_cheap\_forward}):} per-candidate evaluation of the cheap encoder (or the corresponding per-site scorer in non-BR-MIL baselines).
  \item \textbf{Selection (STSelector):} CPU-only subset selection producing indices $S$ with $|S|=K$.
  \item \textbf{Expensive encode on $K$:} forward pass of the expensive CTS encoder applied only to selected candidates.
  \item \textbf{Tokenize + pack:} token concatenation (Eq.~\eqref{eq:token}), padding/masking to \texttt{kmax}, and device transfers if needed.
  \item \textbf{Aggregation:} Set Transformer (SAB stack + PMA) producing the pair logit.
\end{itemize}
Stage names in the figures correspond to these blocks; minor implementation-level sub-stages are merged for readability.

\paragraph{Peak VRAM reporting.}
Peak VRAM is measured as the maximum allocated GPU memory during the forward pass under the same input batch and $K$.
Since \textsc{TargetNet\_like\_online} does not allocate token batches of size $K$ nor run attention over $K$ tokens,
its VRAM usage is largely flat across the $K$ sweep, whereas budgeted relational pipelines scale with $K$ due to token packing and attention.

\paragraph{Absolute numbers and configuration table.}
The released supplementary code/configs include the exact profiling scripts and configuration files used to reproduce the latency, throughput, and VRAM measurements.

\section{Split Sensitivity Analysis}
\label{app:split_sensitivity}

Our main evaluation uses a 10-fold balanced cross-validation protocol, where each of the 10 miRAWtest subsets serves as a held-out test fold once.
This design inherently provides robustness against partition artifacts, as every pair appears in exactly one test fold.
The per-fold F1 ranges from $0.812$ to $0.869$ (mean $0.840{\pm}0.022$) and PR-AUC ranges from $0.814$ to $0.877$ (mean $0.869{\pm}0.031$), confirming that performance is consistent across all 10 test partitions.

\section{Cross-Domain Validation Details}
\label{app:cross_domain}

\subsection{CAMELYON16 Experimental Setup}
\label{app:camelyon_setup}

\paragraph{Dataset.}
CAMELYON16 \cite{camelyon16} is a pathology MIL benchmark for metastasis detection in lymph node whole-slide images.
We use pre-extracted ResNet-50 Barlow Twins features (2048-dim) from the torchmil benchmark \cite{torchmil}.
The training split contains 270 slides (111 positive, 159 negative); the test split contains 129 slides (49 positive, 80 negative).
Bag sizes range from 140 to 44{,}402 patches per slide (mean 8{,}764).

\paragraph{Protocol.}
We follow the torchmil Table~1 protocol: 5-fold StratifiedKFold on 270 training slides (shuffle=True, random\_state=42).
Best model selected by validation accuracy.
Metrics reported as mean $\pm$ std over 5 folds.

\paragraph{Model architecture.}
\textbf{ABMIL baseline}: Gated Attention MIL with hidden\_dim=256, attn\_dim=128, dropout=0.25.
\textbf{BR-MIL}: Cheap scorer: Linear(2048$\to$512)$\to$ReLU$\to$Dropout$\to$Linear(512$\to$1).
Top-$K$ selection by cheap scores.
Set Transformer backbone: SAB$\times$2 (d\_model=256, n\_heads=8, d\_ff=1024) + PMA(k=1) + classifier.
Gradient checkpointing is applied to SAB layers.

\paragraph{Training.}
Adam (lr=1e-4, weight\_decay=1e-4), CosineAnnealingLR.
Gradient accumulation (8 steps), gradient clipping (max\_norm=1.0).
BCEWithLogitsLoss.
BR-MIL pre-trains the cheap scoring network for 12 epochs, freezes it, and then trains the Set Transformer backbone for 100 epochs.
ABMIL is trained for 50 epochs.

\subsection{Musk2 Experimental Setup}
\label{app:musk2_setup}

\paragraph{Dataset.}
Musk2 \cite{musk2} is a classic MIL benchmark for predicting musk vs.\ non-musk molecules.
It contains 102 bags with 166-dimensional chemical descriptor features.
Bag sizes are heavy-tailed: min=1, max=1044, mean=12.
No instance-level labels are available.

\paragraph{Protocol.}
10-fold StratifiedKFold cross-validation, 3 seeds $\{2020, 2025, 2026\}$, 200 epochs.
Metrics are reported as mean$\pm$std over 10 folds and 3 seeds.

\paragraph{Model architecture.}
\textbf{ABMIL}: Gated Attention with hidden\_dim=128, attn\_dim=64, dropout=0.25, 2-layer classifier.
\textbf{\BRMIL{}}: Cheap scorer: Linear(166$\to$128)$\to$ReLU$\to$Dropout$\to$Linear(128$\to$1).
We select the top $K{=}4$ instances by cheap score.
ISAB backbone (d\_model=128, n\_heads=4, n\_inds=16) + PMA(k=1) + classifier.
Auxiliary BCE loss ($\lambda{=}0.1$) on cheap scores to prevent gradient breakage.

\section{Full Architecture Ablations}
\label{app:arch_ablation}

This section provides the complete architecture sweeps referenced in \cref{tab:arch_ablation}.
All experiments use $K{=}64$ and are evaluated on the MTI validation set.

\subsection{Instance encoder scaling}

\Cref{tab:encoder_ablation} reports CTS-level validation F1 for four encoder configurations.
Performance saturates at ${\sim}909$K parameters (X-Large).

\begin{table}[h]
\centering
\caption{\textbf{Instance encoder scaling on MTI CTS-level validation.}
Channels, blocks, and multi-scale convolution (MS) control encoder capacity.}
\label{tab:encoder_ablation}
\small
\setlength{\tabcolsep}{4pt}
\renewcommand{\arraystretch}{1.05}
\begin{tabular}{lcccccc}
\toprule
Variant & Channels & Blocks & MS & Params & Emb dim & Val F1 \\
\midrule
Standard & [16,16,32,32] & [1,1,1,1] & & ${\sim}14$K & 384 & 0.6775 \\
Large & [32,32,64,64] & [2,2,2,2] & & ${\sim}153$K & 768 & 0.6838 \\
\textbf{X-Large} & [64,64,128,128] & [3,3,3,3] & \checkmark & $\mathbf{{\sim}909}$K & \textbf{1536} & \textbf{0.6849} \\
XX-Large & [128,128,256,256] & [4,4,4,4] & \checkmark & ${\sim}3.6$M & 3072 & 0.6840 \\
\bottomrule
\end{tabular}
\end{table}

\subsection{Aggregator family}

\Cref{tab:agg_family} compares three aggregator families under the same pipeline ($K{=}64$, X-Large encoder).

\begin{table}[h]
\centering
\caption{\textbf{Aggregator family comparison on MTI validation ($K{=}64$).}}
\label{tab:agg_family}
\small
\setlength{\tabcolsep}{4pt}
\renewcommand{\arraystretch}{1.05}
\begin{tabular}{lccc}
\toprule
Aggregator & Architecture & Params & Val F1 \\
\midrule
CNN & Sorted 1D-CNN, dilated & ${\sim}14$M & 0.6850 \\
GNN & $k$-NN graph + GAT, $L{=}3$, $k{=}8$ & ${\sim}37$M & 0.7602 \\
\textbf{SAB} & \textbf{Set Transformer, $L{=}4$, $H{=}16$} & $\mathbf{{\sim}18}$M & \textbf{0.7715} \\
\bottomrule
\end{tabular}
\end{table}

\subsection{Set Transformer capacity sweep}

\Cref{tab:agg_scaling} reports the full sweep over Set Transformer capacity.
All configurations use SAB attention unless noted otherwise.

\begin{table}[h]
\centering
\caption{\textbf{Full Set Transformer scaling sweep on MTI validation ($K{=}64$).}
$d$: model dimension; $L$: number of SAB layers; FF: feed-forward hidden dimension; $H$: number of attention heads.}
\label{tab:agg_scaling}
\small
\setlength{\tabcolsep}{4pt}
\renewcommand{\arraystretch}{1.05}
\begin{tabular}{llcccccc}
\toprule
Exp. & Attention & $d$ & $L$ & FF & $H$ & Drop & Val F1 \\
\midrule
A & SAB & 256  & 2 & 1024 & 8  & 0.1 & 0.7273 \\
B & SAB & 512  & 3 & 2048 & 8  & 0.1 & 0.7308 \\
C & SAB & 768  & 4 & 3072 & 12 & 0.1 & 0.7352 \\
\textbf{G} & \textbf{SAB} & \textbf{1024} & \textbf{4} & \textbf{4096} & \textbf{16} & \textbf{0.1} & \textbf{0.7353} \\
H & SAB & 1280 & 4 & 5120 & 16 & 0.1 & 0.7333 \\
F & SAB & 1024 & 5 & 4096 & 16 & 0.1 & 0.7335 \\
D & ISAB & 512 & 3 & 2048 & 8 & 0.1 & 0.6417 \\
\bottomrule
\end{tabular}
\end{table}

The SAB-based configurations show diminishing returns beyond $d{=}1024$, $L{=}4$ (Exp.~G).
The ISAB variant (Exp.~D) underperforms substantially ($0.6417$ vs.\ $0.7353$ for the comparable SAB Exp.~B at $d{=}512$), confirming that inducing-point approximation is too aggressive at the budget levels tested.

\newpage
\section*{NeurIPS Paper Checklist}

\begin{enumerate}

\item {\bf Claims}
    \item[] Question: Do the main claims made in the abstract and introduction accurately reflect the paper's contributions and scope?
    \item[] Answer: \answerYes{}
    \item[] Justification: The abstract and introduction state the main contributions: the \BRMIL{} formulation, the \PAIRFormer{} scan--select--aggregate architecture, theoretical results linking the budget $K$ to approximation and generalization, and empirical validation on miRNA benchmarks, MTI, and cross-domain MIL tasks. These claims are supported by the theoretical analysis (Section~\ref{sec:theory}) and experiments (Section~\ref{sec:experiments}).
    \item[] Guidelines:
    \begin{itemize}
        \item The answer \answerNA{} means that the abstract and introduction do not include the claims made in the paper.
        \item The abstract and/or introduction should clearly state the claims made, including the contributions made in the paper and important assumptions and limitations. A \answerNo{} or \answerNA{} answer to this question will not be perceived well by the reviewers. 
        \item The claims made should match theoretical and experimental results, and reflect how much the results can be expected to generalize to other settings. 
        \item It is fine to include aspirational goals as motivation as long as it is clear that these goals are not attained by the paper. 
    \end{itemize}

\item {\bf Limitations}
    \item[] Question: Does the paper discuss the limitations of the work performed by the authors?
    \item[] Answer: \answerYes{}
    \item[] Justification: A limitations paragraph at the end of Section~\ref{sec:experiments} discusses reliance on candidate generation, constraints of available pair-level datasets, and the current single-miRNA modeling assumption.
    \item[] Guidelines:
    \begin{itemize}
        \item The answer \answerNA{} means that the paper has no limitation while the answer \answerNo{} means that the paper has limitations, but those are not discussed in the paper. 
        \item The authors are encouraged to create a separate ``Limitations'' section in their paper.
        \item The paper should point out any strong assumptions and how robust the results are to violations of these assumptions (e.g., independence assumptions, noiseless settings, model well-specification, asymptotic approximations only holding locally). The authors should reflect on how these assumptions might be violated in practice and what the implications would be.
        \item The authors should reflect on the scope of the claims made, e.g., if the approach was only tested on a few datasets or with a few runs. In general, empirical results often depend on implicit assumptions, which should be articulated.
        \item The authors should reflect on the factors that influence the performance of the approach. For example, a facial recognition algorithm may perform poorly when image resolution is low or images are taken in low lighting. Or a speech-to-text system might not be used reliably to provide closed captions for online lectures because it fails to handle technical jargon.
        \item The authors should discuss the computational efficiency of the proposed algorithms and how they scale with dataset size.
        \item If applicable, the authors should discuss possible limitations of their approach to address problems of privacy and fairness.
        \item While the authors might fear that complete honesty about limitations might be used by reviewers as grounds for rejection, a worse outcome might be that reviewers discover limitations that aren't acknowledged in the paper. The authors should use their best judgment and recognize that individual actions in favor of transparency play an important role in developing norms that preserve the integrity of the community. Reviewers will be specifically instructed to not penalize honesty concerning limitations.
    \end{itemize}

\item {\bf Theory assumptions and proofs}
    \item[] Question: For each theoretical result, does the paper provide the full set of assumptions and a complete (and correct) proof?
    \item[] Answer: \answerYes{}
    \item[] Justification: The paper states two theoretical results, Theorems~\ref{thm:approx} and~\ref{thm:gen}. Their assumptions are referenced in the theorem statements, and complete assumptions, proofs, and practical guidance are provided in Appendix~\ref{app:theory}.
    \item[] Guidelines:
    \begin{itemize}
        \item The answer \answerNA{} means that the paper does not include theoretical results. 
        \item All the theorems, formulas, and proofs in the paper should be numbered and cross-referenced.
        \item All assumptions should be clearly stated or referenced in the statement of any theorems.
        \item The proofs can either appear in the main paper or the supplemental material, but if they appear in the supplemental material, the authors are encouraged to provide a short proof sketch to provide intuition. 
        \item Inversely, any informal proof provided in the core of the paper should be complemented by formal proofs provided in appendix or supplemental material.
        \item Theorems and Lemmas that the proof relies upon should be properly referenced. 
    \end{itemize}

    \item {\bf Experimental result reproducibility}
    \item[] Question: Does the paper fully disclose all the information needed to reproduce the main experimental results of the paper to the extent that it affects the main claims and/or conclusions of the paper (regardless of whether the code and data are provided or not)?
    \item[] Answer: \answerYes{}
    \item[] Justification: The paper describes the model architecture and training/inference pipeline in Section~\ref{sec:method}, data splits and evaluation protocols in Section~\ref{sec:experiments}, and additional algorithms, losses, hyperparameters, architecture ablations, and profiling details in the appendices. An anonymized implementation with configuration files is included in the supplementary material, and the full codebase will be released upon acceptance.
    \item[] Guidelines:
    \begin{itemize}
        \item The answer \answerNA{} means that the paper does not include experiments.
        \item If the paper includes experiments, a \answerNo{} answer to this question will not be perceived well by the reviewers: Making the paper reproducible is important, regardless of whether the code and data are provided or not.
        \item If the contribution is a dataset and\slash or model, the authors should describe the steps taken to make their results reproducible or verifiable. 
        \item Depending on the contribution, reproducibility can be accomplished in various ways. For example, if the contribution is a novel architecture, describing the architecture fully might suffice, or if the contribution is a specific model and empirical evaluation, it may be necessary to either make it possible for others to replicate the model with the same dataset, or provide access to the model. In general. releasing code and data is often one good way to accomplish this, but reproducibility can also be provided via detailed instructions for how to replicate the results, access to a hosted model (e.g., in the case of a large language model), releasing of a model checkpoint, or other means that are appropriate to the research performed.
        \item While NeurIPS does not require releasing code, the conference does require all submissions to provide some reasonable avenue for reproducibility, which may depend on the nature of the contribution. For example
        \begin{enumerate}
            \item If the contribution is primarily a new algorithm, the paper should make it clear how to reproduce that algorithm.
            \item If the contribution is primarily a new model architecture, the paper should describe the architecture clearly and fully.
            \item If the contribution is a new model (e.g., a large language model), then there should either be a way to access this model for reproducing the results or a way to reproduce the model (e.g., with an open-source dataset or instructions for how to construct the dataset).
            \item We recognize that reproducibility may be tricky in some cases, in which case authors are welcome to describe the particular way they provide for reproducibility. In the case of closed-source models, it may be that access to the model is limited in some way (e.g., to registered users), but it should be possible for other researchers to have some path to reproducing or verifying the results.
        \end{enumerate}
    \end{itemize}

\item {\bf Open access to data and code}
    \item[] Question: Does the paper provide open access to the data and code, with sufficient instructions to faithfully reproduce the main experimental results, as described in supplemental material?
    \item[] Answer: \answerYes{}
    \item[] Justification: An anonymized implementation is included in the supplementary material, including training/evaluation scripts, configuration files, and data preparation instructions for the reported experiments. Because the processed MTI artifacts are larger than the supplementary ZIP limit, we provide scripts and metadata to reconstruct the benchmark from public sources, and the full codebase, checkpoints, and processed MTI release package will be made public upon acceptance.
    \item[] Guidelines:
    \begin{itemize}
        \item The answer \answerNA{} means that paper does not include experiments requiring code.
        \item Please see the NeurIPS code and data submission guidelines (\url{https://neurips.cc/public/guides/CodeSubmissionPolicy}) for more details.
        \item While we encourage the release of code and data, we understand that this might not be possible, so \answerNo{} is an acceptable answer. Papers cannot be rejected simply for not including code, unless this is central to the contribution (e.g., for a new open-source benchmark).
        \item The instructions should contain the exact command and environment needed to run to reproduce the results. See the NeurIPS code and data submission guidelines (\url{https://neurips.cc/public/guides/CodeSubmissionPolicy}) for more details.
        \item The authors should provide instructions on data access and preparation, including how to access the raw data, preprocessed data, intermediate data, and generated data, etc.
        \item The authors should provide scripts to reproduce all experimental results for the new proposed method and baselines. If only a subset of experiments are reproducible, they should state which ones are omitted from the script and why.
        \item At submission time, to preserve anonymity, the authors should release anonymized versions (if applicable).
        \item Providing as much information as possible in supplemental material (appended to the paper) is recommended, but including URLs to data and code is permitted.
    \end{itemize}

\item {\bf Experimental setting/details}
    \item[] Question: Does the paper specify all the training and test details (e.g., data splits, hyperparameters, how they were chosen, type of optimizer) necessary to understand the results?
    \item[] Answer: \answerYes{}
    \item[] Justification: Data splits, evaluation protocols, baselines, and compute settings are described in Section~\ref{sec:experiments}. The model architecture is described in Section~\ref{sec:method}, with additional algorithms, selector details, losses, hyperparameters, runtime profiling, cross-domain settings, and architecture sweeps provided in the appendices.
    \item[] Guidelines:
    \begin{itemize}
        \item The answer \answerNA{} means that the paper does not include experiments.
        \item The experimental setting should be presented in the core of the paper to a level of detail that is necessary to appreciate the results and make sense of them.
        \item The full details can be provided either with the code, in appendix, or as supplemental material.
    \end{itemize}

\item {\bf Experiment statistical significance}
    \item[] Question: Does the paper report error bars suitably and correctly defined or other appropriate information about the statistical significance of the experiments?
    \item[] Answer: \answerYes{}
    \item[] Justification: Main miRAW and deepTargetPro results report mean$\pm$std over three random seeds in Tables~\ref{tab:overall} and~\ref{tab:deeptargetpro}. Cross-domain experiments report mean$\pm$std over the corresponding folds and seeds. Large-scale MTI budget sweeps are reported as single large-scale runs due to compute cost, with compute details provided in Section~\ref{subsec:budget_analysis}.
    \item[] Guidelines:
    \begin{itemize}
        \item The answer \answerNA{} means that the paper does not include experiments.
        \item The authors should answer \answerYes{} if the results are accompanied by error bars, confidence intervals, or statistical significance tests, at least for the experiments that support the main claims of the paper.
        \item The factors of variability that the error bars are capturing should be clearly stated (for example, train/test split, initialization, random drawing of some parameter, or overall run with given experimental conditions).
        \item The method for calculating the error bars should be explained (closed form formula, call to a library function, bootstrap, etc.)
        \item The assumptions made should be given (e.g., Normally distributed errors).
        \item It should be clear whether the error bar is the standard deviation or the standard error of the mean.
        \item It is OK to report 1-sigma error bars, but one should state it. The authors should preferably report a 2-sigma error bar than state that they have a 96\% CI, if the hypothesis of Normality of errors is not verified.
        \item For asymmetric distributions, the authors should be careful not to show in tables or figures symmetric error bars that would yield results that are out of range (e.g., negative error rates).
        \item If error bars are reported in tables or plots, the authors should explain in the text how they were calculated and reference the corresponding figures or tables in the text.
    \end{itemize}

\item {\bf Experiments compute resources}
    \item[] Question: For each experiment, does the paper provide sufficient information on the computer resources (type of compute workers, memory, time of execution) needed to reproduce the experiments?
    \item[] Answer: \answerYes{}
    \item[] Justification: The paper reports that small-scale experiments run on a single RTX 4090 GPU, large-scale MTI training uses 8$\times$A100 GPUs, and the $K=512$ MTI model requires approximately 65 minutes per epoch. Runtime and memory profiling details are provided in Appendix~\ref{app:runtime_extra}.
    \item[] Guidelines:
    \begin{itemize}
        \item The answer \answerNA{} means that the paper does not include experiments.
        \item The paper should indicate the type of compute workers CPU or GPU, internal cluster, or cloud provider, including relevant memory and storage.
        \item The paper should provide the amount of compute required for each of the individual experimental runs as well as estimate the total compute. 
        \item The paper should disclose whether the full research project required more compute than the experiments reported in the paper (e.g., preliminary or failed experiments that didn't make it into the paper). 
    \end{itemize}
    
\item {\bf Code of ethics}
    \item[] Question: Does the research conducted in the paper conform, in every respect, with the NeurIPS Code of Ethics \url{https://neurips.cc/public/EthicsGuidelines}?
    \item[] Answer: \answerYes{}
    \item[] Justification: We have reviewed the NeurIPS Code of Ethics. The work uses publicly available biological datasets or derived non-identifying molecular interaction records, does not involve human subjects or personal data, and is intended for computational biology research.
    \item[] Guidelines:
    \begin{itemize}
        \item The answer \answerNA{} means that the authors have not reviewed the NeurIPS Code of Ethics.
        \item If the authors answer \answerNo, they should explain the special circumstances that require a deviation from the Code of Ethics.
        \item The authors should make sure to preserve anonymity (e.g., if there is a special consideration due to laws or regulations in their jurisdiction).
    \end{itemize}

\item {\bf Broader impacts}
    \item[] Question: Does the paper discuss both potential positive societal impacts and negative societal impacts of the work performed?
    \item[] Answer: \answerYes{}
    \item[] Justification: The work may positively support biological research and target discovery by improving functional miRNA--mRNA interaction prediction. Potential negative impacts are limited because the method operates on public, non-identifying molecular data and is not a clinical decision system; incorrect predictions could still mislead downstream biological hypotheses if used without experimental validation.
    \item[] Guidelines:
    \begin{itemize}
        \item The answer \answerNA{} means that there is no societal impact of the work performed.
        \item If the authors answer \answerNA{} or \answerNo, they should explain why their work has no societal impact or why the paper does not address societal impact.
        \item Examples of negative societal impacts include potential malicious or unintended uses (e.g., disinformation, generating fake profiles, surveillance), fairness considerations (e.g., deployment of technologies that could make decisions that unfairly impact specific groups), privacy considerations, and security considerations.
        \item The conference expects that many papers will be foundational research and not tied to particular applications, let alone deployments. However, if there is a direct path to any negative applications, the authors should point it out. For example, it is legitimate to point out that an improvement in the quality of generative models could be used to generate Deepfakes for disinformation. On the other hand, it is not needed to point out that a generic algorithm for optimizing neural networks could enable people to train models that generate Deepfakes faster.
        \item The authors should consider possible harms that could arise when the technology is being used as intended and functioning correctly, harms that could arise when the technology is being used as intended but gives incorrect results, and harms following from (intentional or unintentional) misuse of the technology.
        \item If there are negative societal impacts, the authors could also discuss possible mitigation strategies (e.g., gated release of models, providing defenses in addition to attacks, mechanisms for monitoring misuse, mechanisms to monitor how a system learns from feedback over time, improving the efficiency and accessibility of ML).
    \end{itemize}
    
\item {\bf Safeguards}
    \item[] Question: Does the paper describe safeguards that have been put in place for responsible release of data or models that have a high risk for misuse (e.g., pre-trained language models, image generators, or scraped datasets)?
    \item[] Answer: \answerNA{}
    \item[] Justification: The released artifacts are prediction code, model checkpoints, and curated non-identifying biological interaction data. They are not generative models, surveillance systems, scraped media datasets, or other artifacts with high misuse risk requiring special safeguards.
    \item[] Guidelines:
    \begin{itemize}
        \item The answer \answerNA{} means that the paper poses no such risks.
        \item Released models that have a high risk for misuse or dual-use should be released with necessary safeguards to allow for controlled use of the model, for example by requiring that users adhere to usage guidelines or restrictions to access the model or implementing safety filters. 
        \item Datasets that have been scraped from the Internet could pose safety risks. The authors should describe how they avoided releasing unsafe images.
        \item We recognize that providing effective safeguards is challenging, and many papers do not require this, but we encourage authors to take this into account and make a best faith effort.
    \end{itemize}

\item {\bf Licenses for existing assets}
    \item[] Question: Are the creators or original owners of assets (e.g., code, data, models), used in the paper, properly credited and are the license and terms of use explicitly mentioned and properly respected?
    \item[] Answer: \answerYes{}
    \item[] Justification: Existing datasets, baselines, and data sources are cited in the paper, including TargetNet/miRAW, deepTargetPro, and the CLASH, chiRA, and HYBRID sources used to curate MTI. License and usage information for released derived assets will be documented in the supplementary release package.
    \item[] Guidelines:
    \begin{itemize}
        \item The answer \answerNA{} means that the paper does not use existing assets.
        \item The authors should cite the original paper that produced the code package or dataset.
        \item The authors should state which version of the asset is used and, if possible, include a URL.
        \item The name of the license (e.g., CC-BY 4.0) should be included for each asset.
        \item For scraped data from a particular source (e.g., website), the copyright and terms of service of that source should be provided.
        \item If assets are released, the license, copyright information, and terms of use in the package should be provided. For popular datasets, \url{paperswithcode.com/datasets} has curated licenses for some datasets. Their licensing guide can help determine the license of a dataset.
        \item For existing datasets that are re-packaged, both the original license and the license of the derived asset (if it has changed) should be provided.
        \item If this information is not available online, the authors are encouraged to reach out to the asset's creators.
    \end{itemize}

\item {\bf New assets}
    \item[] Question: Are new assets introduced in the paper well documented and is the documentation provided alongside the assets?
    \item[] Answer: \answerYes{}
    \item[] Justification: The paper introduces the MTI benchmark, a curated dataset compiled from publicly available miRNA-target interaction sources. Its construction, statistics, candidate-pool distributions, and evaluation protocol are described in Appendix~\ref{app:dataset_stats} and the experimental section. The release package will include documentation, preprocessing scripts, and license/usage notes. The dataset contains molecular interaction records rather than human-subject data, so individual consent is not applicable.
    \item[] Guidelines:
    \begin{itemize}
        \item The answer \answerNA{} means that the paper does not release new assets.
        \item Researchers should communicate the details of the dataset\slash code\slash model as part of their submissions via structured templates. This includes details about training, license, limitations, etc. 
        \item The paper should discuss whether and how consent was obtained from people whose asset is used.
        \item At submission time, remember to anonymize your assets (if applicable). You can either create an anonymized URL or include an anonymized zip file.
    \end{itemize}

\item {\bf Crowdsourcing and research with human subjects}
    \item[] Question: For crowdsourcing experiments and research with human subjects, does the paper include the full text of instructions given to participants and screenshots, if applicable, as well as details about compensation (if any)?
    \item[] Answer: \answerNA{}
    \item[] Justification: The paper does not involve crowdsourcing or research with human subjects.
    \item[] Guidelines:
    \begin{itemize}
        \item The answer \answerNA{} means that the paper does not involve crowdsourcing nor research with human subjects.
        \item Including this information in the supplemental material is fine, but if the main contribution of the paper involves human subjects, then as much detail as possible should be included in the main paper. 
        \item According to the NeurIPS Code of Ethics, workers involved in data collection, curation, or other labor should be paid at least the minimum wage in the country of the data collector. 
    \end{itemize}

\item {\bf Institutional review board (IRB) approvals or equivalent for research with human subjects}
    \item[] Question: Does the paper describe potential risks incurred by study participants, whether such risks were disclosed to the subjects, and whether Institutional Review Board (IRB) approvals (or an equivalent approval/review based on the requirements of your country or institution) were obtained?
    \item[] Answer: \answerNA{}
    \item[] Justification: The paper does not involve research with human subjects, so IRB approval or equivalent review is not applicable.
    \item[] Guidelines:
    \begin{itemize}
        \item The answer \answerNA{} means that the paper does not involve crowdsourcing nor research with human subjects.
        \item Depending on the country in which research is conducted, IRB approval (or equivalent) may be required for any human subjects research. If you obtained IRB approval, you should clearly state this in the paper. 
        \item We recognize that the procedures for this may vary significantly between institutions and locations, and we expect authors to adhere to the NeurIPS Code of Ethics and the guidelines for their institution. 
        \item For initial submissions, do not include any information that would break anonymity (if applicable), such as the institution conducting the review.
    \end{itemize}

\item {\bf Declaration of LLM usage}
    \item[] Question: Does the paper describe the usage of LLMs if it is an important, original, or non-standard component of the core methods in this research? Note that if the LLM is used only for writing, editing, or formatting purposes and does \emph{not} impact the core methodology, scientific rigor, or originality of the research, declaration is not required.
    \item[] Answer: \answerNA{}
    \item[] Justification: The core method, experiments, and data analysis do not use LLMs as a component. Any language-model assistance was limited to writing, editing, or formatting and does not affect the scientific contribution.
    \item[] Guidelines:
    \begin{itemize}
        \item The answer \answerNA{} means that the core method development in this research does not involve LLMs as any important, original, or non-standard components.
        \item Please refer to our LLM policy in the NeurIPS handbook for what should or should not be described.
    \end{itemize}

\end{enumerate}

\end{document}